\documentclass[11pt]{article}

\usepackage{dsfont}

\def\colorful{0}

\oddsidemargin=-0.1in \evensidemargin=-0.1in \topmargin=-.5in
\textheight=9in \textwidth=6.5in
\parindent=18pt

\usepackage{amsthm,amsfonts,amsmath,amssymb,epsfig,color,float,graphicx,verbatim, enumitem}
\usepackage{multirow}
\usepackage[linesnumbered,lined,boxed,commentsnumbered,ruled,vlined]{algorithm2e}

\newif\ifhyper\IfFileExists{hyperref.sty}{\hypertrue}{\hyperfalse}
\hypertrue
\ifhyper\usepackage{hyperref}\fi

\usepackage{enumitem}

\usepackage{framed}
\usepackage{nicefrac}

\def\nnewcolor{1}
\ifnum\nnewcolor=1

\fi
\ifnum\nnewcolor=0

\fi

\ifnum\colorful=1
\newcommand{\new}[1]{{\color{red} #1}}

\else
\newcommand{\new}[1]{{#1}}

\fi

\newtheorem{theorem}{Theorem}[section]

\newtheorem{lemma}[theorem]{Lemma}
\newtheorem{informal theorem}[theorem]{Theorem (informal statement)}
\newtheorem{condition}[theorem]{Condition}
\newtheorem{proposition}[theorem]{Proposition}

\newtheorem{claim}[theorem]{Claim}
\newtheorem{fact}[theorem]{Fact}

\theoremstyle{definition}
\newtheorem{definition}[theorem]{Definition}
\newcommand{\eqdef}{\stackrel{{\mathrm {\footnotesize def}}}{=}}


\newcommand{\R}{\mathbb{R}}

\newcommand{\Z}{\mathbb{Z}}

\newcommand{\E}{\mathbf{E}}
\newcommand{\eps}{\epsilon}
\newcommand{\dtv}{d_{\mathrm TV}}
\newcommand{\dkl}{d_{\mathrm KL}}

\renewcommand{\Pr}{\mathbf{Pr}}
\newcommand{\poly}{\mathrm{poly}}
\newcommand{\var}{\mathbf{Var}}
\newcommand{\cov}{\mathbf{Cov}}

\newcommand{\Diam}{\mathrm{diam}}

\newcommand{\littlesum}{\mathop{\textstyle \sum}}

\newcommand{\wt}{\widetilde}
\newcommand{\wh}{\widehat}

\newcommand{\ltwo}[1]{\left\lVert#1\right\rVert_2}

\newcommand{\todo}[1]{\typeout{TODO: \the\inputlineno: #1}\textbf{[[[ #1 ]]]}}

\title{Outlier-Robust Learning of Ising Models\\ Under Dobrushin's Condition}

\author{
Ilias Diakonikolas\thanks{Supported by NSF Award CCF-1652862 (CAREER) and a Sloan Research Fellowship.}\\
University of Wisconsin Madison\\
{\tt ilias@cs.wisc.edu}\\
\and
Daniel M. Kane\thanks{Supported by NSF Award CCF-1553288 (CAREER) and a Sloan Research Fellowship.}\\
University of California, San Diego\\
{\tt dakane@cs.ucsd.edu}\\
\and
Alistair Stewart\thanks{Part of this research was performed while the author was a postdoctoral researcher at USC, supported
by Ilias Diakonikolas' startup funding.}\\
Web 3 Foundation\\
{\tt stewart.al@gmail.com}\\
\and
Yuxin Sun\thanks{Supported by NSF Award CCF-1652862 (CAREER).}\\
University of Wisconsin Madison\\
{\tt yxsun@cs.wisc.edu}\\
}

\begin{document}

\maketitle

\begin{abstract}
We study the problem of learning Ising models satisfying Dobrushin's
condition in the outlier-robust setting where a constant fraction of
the samples are adversarially corrupted. Our main result is to provide
the first computationally efficient robust learning algorithm for this
problem with near-optimal error guarantees. Our algorithm can be seen
as a special case of an algorithm for robustly learning a distribution
from a general exponential family. To prove its correctness for
Ising models, we establish new anti-concentration results 
for degree-$2$ polynomials of Ising models 
that may be of independent interest.
\end{abstract}

\setcounter{page}{0}

\thispagestyle{empty}

\newpage

\section{Introduction} \label{sec:intro}

\subsection{Background and Motivation} \label{ssec:background}

Probabilistic graphical models~\cite{Koller:2009} provide a rich and
unifying framework to model structured high-dimensional distributions in terms of
the local dependencies between the input variables.
The problem of inference in graphical models arises in many applications
across scientific disciplines, see, e.g.,~\cite{wainwright2008graphical}.
In this work, we study the inverse problem of learning graphical models from data.
Various formalizations of this general learning problem have been studied
during the past five decades, see, e.g.,~\cite{Chow68, Dasgupta97, Abbeel:2006,
WainwrightRL06, AnandkumarHHK12, SanthanamW12, LohW12, BreslerMS13,
BreslerGS14a, Bresler15, KlivansM17}, resulting in general theory
and algorithms for various settings.

In this work, we focus on learning Ising models~\cite{Ising25},
the prototypical family of binary undirected graphical models
with applications in computer vision, computational biology,
and statistical physics~\cite{Li-book-MRFs-09, JEMF06, Fels09, chatterjee2005concentration}.

\begin{definition}[Ising Model] \label{def:ising}
Given a real symmetric matrix $(\theta_{ij})_{i,j\in[d]}$ with zero diagonal and a real vector $(\theta_i)_{i\in[d]}$,
the Ising model distribution $P_\theta$ is defined as follows: For any $x\in\{\pm1\}^d$,
$P_\theta(x)=\frac{1}{Z(\theta)}\exp \big((1/2) \sum_{i,j\in[d]}{\theta_{ij}x_ix_j}+\sum_{i=1}^{d}{\theta_ix_i}\big)$,
where the normalizing factor $Z(\theta)$ is called the partition function.
We call the matrix $(\theta_{ij})_{i,j\in[d]} \in \R^{d \times d}$ the interaction matrix and
the vector $(\theta_i)_{i\in[d]} \in \R^d$ the external field.
\end{definition}

\noindent The majority of prior algorithmic work on learning Ising models studies
the ``structure learning'' problem, i.e., the problem of learning the structure of the underlying
graph of non-zero entries of the interaction matrix, 
see, e.g.,~\cite{Bresler15, KlivansM17, HamiltonKM17}.
In this line of work, it is assumed that the true graph satisfies some structural property
(typically, a tree or bounded-degree structure) and certain (upper and lower) bounds are imposed
on the underlying parameters. Such assumptions are information-theoretically necessary for this
version of the problem. An emerging line of work studies the {\em distribution learning} problem,
i.e., the task of computing an Ising model that is close to the target in total variation distance,
see, e.g.,~\cite{dagan2020estimating, DP20, BGPV20} for a few recent papers.

Here we study the algorithmic problem of learning Ising models
in the presence of {\em adversarially corrupted data}.
We focus on the following standard data corruption model
that generalizes Huber's contamination model~\cite{Huber64}.


\begin{definition}[Total Variation Contamination] \label{def:adv}
Given $0< \epsilon < 1/2$ and a class of distributions $\mathcal{F}$ on $\mathbb{R}^d$,
the \emph{adversary} operates as follows: The algorithm specifies the number of samples $n$.
The adversary knows the true target distribution $X \in \mathcal{F}$ and selects a distribution $F$ such that
$\dtv(F, X) \leq \eps$. Then $n$ i.i.d.\ samples are drawn from $F$ and are given as input to the algorithm.
We say that a set of samples is {\em $\epsilon$-corrupted} if it is generated by this process.
\end{definition}


\noindent Intuitively, the parameter $\epsilon$ in Definition~\ref{def:adv}
quantifies the power of the adversary. The total variation contamination
model is strictly stronger than Huber's contamination model.
Recall that in Huber's model~\cite{Huber64}, the adversary generates
samples from a mixture distribution $F$ of the form $F = (1-\epsilon) X + \epsilon N$,
where $X$ is the unknown target distribution and $N$ is an adversarially chosen noise distribution.
That is, in Huber's model the adversary is only allowed to add outliers.

The contamination setting we consider is standard in robust statistics~\cite{HampelEtalBook86, Huber09},
a field which seeks to develop  {\em outlier-robust}  estimators  --- algorithms
that can tolerate a {\em constant fraction} of corrupted datapoints, independent of the dimension.
Classical work, starting with Tukey and Huber in the 1960s, developed statistically optimal
robust estimators for a number of settings. However, these early methods lead to
exponential-time algorithms, even for the most basic high-dimensional estimation tasks (e.g., mean estimation).

Two works from the theoretical CS community~\cite{DKKLMS16, LaiRV16} developed
the first efficient robust learning algorithms for  ``simple''
high-dimensional tasks, including mean and covariance estimation. 
Since these early works, we have witnessed substantial progress in algorithmic 
robust high-dimensional statistics by several communities. 
The ideas and techniques developed in~\cite{DKKLMS16} 
have been generalized to give efficient robust estimators for a range of models, 
including sparse models~\cite{BDLS17, DKKPS19-sparse}, 
mixture models~\cite{DKS18-list, HL18-sos, KSS18-sos, DHKK20, BK20, LM20, BD+20}, 
and general stochastic optimization~\cite{DKK+19-sever, PSBR18}. 
Intriguingly, some of these ideas have found applications 
in exploratory data analysis and adversarial machine learning, 
see, e.g.,~\cite{DKK+17, TranLM18, DKK+19-sever}. 
The reader is referred to~\cite{DK19-survey} for a  recent survey.

Prior algorithmic work on learning graphical models has almost exclusively
focused on the uncontaminated setting, where the data are i.i.d.\ samples from the distribution of interest.
We remark that recent work~\cite{HamiltonKM17, GKK19-Ising, KSC20} has developed algorithms
for structure learning in the {\em independent failures model},
where the coordinates of each example are independently flipped/missing with some probability.
On the other hand,~\cite{Lindgren19} point out that structure learning 
becomes information-theoretically impossible in the contamination model we consider here, 
if an adversary is allowed to corrupt even a tiny fraction of the samples.

The most relevant algorithmic work we are aware of in the contamination
model is~\cite{CDKS18-bn}, which developed
an outlier-robust learner for low-degree Bayes nets (directed graphical models)
with known graph structure. We also note that very recent work~\cite{PrasadSBR20}
developed nearly tight {\em sample complexity} bounds for learning Ising models
in Huber's contamination model under various structural assumptions ---
albeit by using underlying estimators that run in exponential time.

\subsection{Our Contributions} \label{ssec:results}

In this work, we study the following version of the learning problem:
{\em Given a multiset of corrupted samples from an unknown
Ising model, the goal is to learn the underlying distribution in total variation distance.}
This is a natural (and standard) formulation of distribution learning that has been studied
extensively in the literature (in both the i.i.d.\ regime and in the contaminated setting).
{\em Our main result is the first computationally efficient
outlier-robust estimator for Ising models} in this setting, under some natural assumptions.
We note that we do not make structural assumptions about the underlying graph --- our algorithms
work for Ising models on the complete graph.

To state our contributions in detail, we require some additional terminology.

\begin{definition}[Dobrushin's condition]\label{def:high-temp}
Given an Ising model $P_{\theta}$ with interaction matrix $(\theta_{ij})_{i,j\in[d]}$
and external field $(\theta_i)_{i\in[d]}$, we say that it satisfies Dobrushin's condition
if $\max_{i\in[d]}{\sum_{j\ne i}|\theta_{ij}|}\le1-\eta$, for some constant $0<\eta<1$.
\end{definition}

\noindent Dobrushin's condition for Ising models is a classical assumption
needed to rule out certain pathological behaviors.
This condition is standard in various areas, including statistical physics, machine learning,
and theoretical CS~\cite{kulske2003concentration, gotze2019higher, dagan2020estimating, adamczak2019note, gheissari2018concentration, marton2015logarithmic}.

Our main result is an efficient algorithm for outlier-robust learning of Ising
models with zero external field satisfying Dobrushin's condition.

\begin{theorem}[Robustly Learning Ising Models Without External Field]\label{thm:main-zero-ext}
Let $X\sim P_{\theta^*}$ be an Ising model without external field satisfying Dobrushin's condition for some
\new{universal constant} $\eta>0$. There is a universal constant $\epsilon_0>0$ such that the following holds:
Let $0< \epsilon < \epsilon_0$ and $S'$ be an $\epsilon$-corrupted set of $N$ samples from $P_{\theta^*}$.
There is a $\poly(N, d)$ time algorithm that,
for some $N=\wt{O}_\eta(d^2/\epsilon^2)$, on input $S'$ and $\epsilon$,
returns a symmetric matrix $\wh{\theta} \in \R^{d\times d}$
such that with probability at least $99/100$, we have that
$\|\wh{\theta}-\theta^*\|_F\le O_\eta(\epsilon\log(1/\epsilon))$.
Moreover, the Ising model distribution $P_{\wh{\theta}}$ satisfies Dobrushin's condition
and $\dtv(P_{\wh{\theta}},P_{\theta^*})\le O_\eta(\|\wh{\theta}-\theta^*\|_F)
\le O_\eta(\epsilon\log(1/\epsilon))$.
\end{theorem}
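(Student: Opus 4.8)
\medskip
\noindent\textbf{Proof overview.} The argument views an Ising model with zero external field as an exponential family: for $x\in\{\pm1\}^d$ the sufficient statistics are the pairwise products $x_ix_j$ ($i\ne j$), which we collect into the symmetric zero-diagonal matrix $T(x)$, and the natural parameter is (up to an absolute constant that we absorb into the $\eta$-dependent constants below) the interaction matrix $\theta$, so that $P_\theta(x)\propto\exp(\langle\theta,T(x)\rangle)$. Writing $A(\theta)=\log Z(\theta)$ for the log-partition function, $A$ is convex with $\nabla A(\theta)=m(\theta):=\E_\theta[T(X)]$ and $\nabla^2 A(\theta)=\cov_\theta[T(X)]$. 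The plan is: (1) robustly estimate the moment matrix $m^*:=m(\theta^*)=(\E[X_iX_j])_{ij}$ from the corrupted sample; (2) recover $\wh\theta$ as the minimizer over the Dobrushin region of the convex Bregman objective $A(\theta)-\langle\wh m,\theta\rangle$; and (3) convert $\|\wh\theta-\theta^*\|_F$ into a total variation bound. All three steps go through once we know that $\cov_\theta[T]$ is pinched, $c_\eta I\preceq\cov_\theta[T]\preceq C_\eta I$, uniformly over the (convex) Dobrushin region, together with a tail bound on $T(X)$ for step~(1); the resulting statement is then an instance of our general exponential-family learner, and establishing these properties of degree-$2$ polynomials of Ising models is the technical heart.

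\medskip
\noindent\textbf{The two structural ingredients.} Fix $X\sim P_\theta$ with $\theta$ in the Dobrushin region, and for a symmetric zero-diagonal matrix $M$ write $f_M(x)=\langle M,T(x)\rangle=\sum_{i\ne j}M_{ij}x_ix_j$. We need: \emph{(i) sub-exponential concentration} — $\Pr[|f_M(X)-\E f_M(X)|>t]\le 2\exp(-\Omega_\eta(t/\|M\|_F))$ for $t\gtrsim_\eta\|M\|_F$, which is the known Bernstein-type concentration for quadratic forms of Ising models under Dobrushin's condition; this gives $\cov_\theta[T]\preceq C_\eta I$ and, by covering the Frobenius-unit sphere with a net and union-bounding, the concentration of the empirical mean and covariance of $T(X)$ from $N=\wt{O}_\eta(d^2/\eps^2)$ samples. \emph{(ii) anti-concentration} — $\var_\theta[f_M(X)]\ge c_\eta\|M\|_F^2$, i.e.\ $\cov_\theta[T]\succeq c_\eta I$; this is the new input. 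For the product measure $\var[f_M]=\Theta(\|M\|_F^2)$ identically, since by the symmetry $P_\theta(x)=P_\theta(-x)$ the products over distinct pairs $\{i,j\}$ are uncorrelated and of unit variance; under Dobrushin's condition one shows that the four-point functions $\E[X_iX_jX_kX_l]$ for distinct pairs $\{i,j\}\ne\{k,l\}$ deviate only slightly from their product-measure values (via correlation decay, e.g.\ a coupling of Glauber dynamics), so the correction to the product-measure computation is an operator of norm $\le 1-\eta'$ and the variance lower bound persists.

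\medskip
\noindent\textbf{Robust moment estimation and parameter recovery.} Since $T$ is a fixed map, pushing an $\eps$-corrupted sample of $X$ through $T$ yields an $\eps$-corrupted sample of $T(X)$, whose covariance has spectral norm $\le C_\eta$ by (i). Running a filter-based robust mean estimator for bounded-covariance distributions with sub-exponential tails on the $N=\wt{O}_\eta(d^2/\eps^2)$ transformed samples — its required ``stability'' of the inlier set being exactly what (i) supplies — outputs $\wh m$ with $\|\wh m-m^*\|_F=O_\eta(\eps\log(1/\eps))$ with probability $99/100$; the $\eps\log(1/\eps)$ rate, as opposed to $\sqrt\eps$, is precisely the sub-exponential-tail rate for robust mean estimation. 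For step~(2), the objective $A(\theta)-\langle\wh m,\theta\rangle$ is $c_\eta$-strongly convex on the convex Dobrushin region by (ii), so $\wh\theta$ is well defined and Dobrushin-feasible by construction; combining first-order optimality of $\wh\theta$ with the convexity bound $A(\theta^*)-A(\wh\theta)\le\langle m^*,\theta^*-\wh\theta\rangle$ gives $\tfrac{c_\eta}{2}\|\wh\theta-\theta^*\|_F^2\le\langle m^*-\wh m,\theta^*-\wh\theta\rangle$, hence $\|\wh\theta-\theta^*\|_F=O_\eta(\|\wh m-m^*\|_F)=O_\eta(\eps\log(1/\eps))$.

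\medskip
\noindent\textbf{Total variation bound, and the main obstacle.} By Pinsker's inequality $\dtv(P_{\wh\theta},P_{\theta^*})^2\le\tfrac12\dkl(P_{\wh\theta}\,\|\,P_{\theta^*})$, and inside an exponential family the KL divergence is the Bregman divergence of the log-partition function, $\dkl(P_{\wh\theta}\,\|\,P_{\theta^*})=A(\theta^*)-A(\wh\theta)-\langle\nabla A(\wh\theta),\theta^*-\wh\theta\rangle=\tfrac12(\theta^*-\wh\theta)^\top\nabla^2 A(\xi)(\theta^*-\wh\theta)$ for some $\xi$ on the segment $[\wh\theta,\theta^*]$; that segment lies in the Dobrushin region by convexity, so $\nabla^2 A(\xi)\preceq C_\eta I$ there and $\dkl(P_{\wh\theta}\,\|\,P_{\theta^*})=O_\eta(\|\wh\theta-\theta^*\|_F^2)$. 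Hence $\dtv(P_{\wh\theta},P_{\theta^*})=O_\eta(\|\wh\theta-\theta^*\|_F)=O_\eta(\eps\log(1/\eps))$, completing the proof. The main obstacle is ingredient~(ii): establishing a dimension-free variance lower bound for an arbitrary degree-$2$ polynomial of an Ising model under Dobrushin's condition — the matching upper tail (i) being comparatively standard — together with the bookkeeping needed to verify the abstract hypotheses of the general exponential-family learner in the Ising case.
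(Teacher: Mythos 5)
Your overall scaffold matches the paper's: view the zero-field Ising model as an exponential family with sufficient statistics $T(x)=(x_ix_j)_{i<j}$, establish a two-sided pinching $c_\eta I\preceq\cov_\theta[T]\preceq C_\eta I$ over the convex Dobrushin region together with sub-exponential tails for $T(X)$, robustly estimate $\E[T(X)]$, recover $\wh\theta$ by maximizing $\langle\theta,\wh m\rangle - A(\theta)$, and convert to TV via the Bregman/Hessian bound. That part is right. But there are two substantive problems.

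\textbf{The robust mean estimation step as you wrote it does not give $O(\eps\log(1/\eps))$.} You claim that a single filter pass on the $T(X)$-samples, using only (i) sub-exponential tails and (ii) $c_\eta I\preceq\Sigma_T\preceq C_\eta I$, already achieves error $O_\eta(\eps\log(1/\eps))$. It does not: with the covariance known only up to constant factors, a bounded-covariance robust mean estimator only gives $O(\sqrt{\eps})$ (this is Fact~\ref{bounded-cov}), and the near-optimal $O(\sqrt{\tau\eps}+\eps\log(1/\eps))$ rate of Fact~\ref{approx-cov} requires that the covariance be known to within spectral error $\tau\le O(\sqrt{\eps})$ (so $\tau$ must itself be driven down to roughly $\eps\log^2(1/\eps)$ to reach the optimal rate). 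The sub-exponential tails alone do not buy you $\eps\log(1/\eps)$; knowing the covariance precisely does. Since $\Sigma_T^*$ depends on the unknown $\theta^*$, the paper breaks this chicken-and-egg by an iterative refinement (Algorithm~\ref{main-algorithm}, Lemmas~\ref{lem:parameter-to-cov} and~\ref{lem:iterative-refinement}): start with the $O(\sqrt{\eps})$ estimate $\theta^{(0)}$, sample from $P_{\theta^{(0)}}$ to estimate $\Sigma_T$, whiten, run the approximately-known-covariance estimator to improve $\theta$, and repeat $O(\log\log(1/\eps))$ times so that $\tau_{k+1}=O(\sqrt{\eps\tau_k}+\eps\log(1/\eps))$ converges to $O(\eps\log(1/\eps))$. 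Your one-shot argument skips this bootstrapping and therefore does not establish the claimed rate.

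\textbf{The anti-concentration ingredient is asserted, not proved, and the route you suggest is not obviously viable.} Your sketch for $\var_\theta[f_M]\ge c_\eta\|M\|_F^2$ is: show the connected four-point functions $\E[X_iX_jX_kX_l]-\E[X_iX_j]\E[X_kX_l]$ are small under Dobrushin, then conclude the correction operator to the product-measure covariance has norm at most $1-\eta'$. But ``each four-point function is small'' does not control the operator norm of the correction, since there are $\Theta(d^4)$ of them; one would need a genuinely structured bound, and this is not a standard consequence of correlation decay. The paper proves this lower bound (Theorem~\ref{thm:ac}) by a different and self-contained decoupling trick: write $\var[(X-v)^\top A(X-v)]=\tfrac12\E\big[\big((X-Y)^\top A(X+Y-2v)\big)^2\big]$ for $Y$ an independent copy, condition on $S=\{i:X_i=Y_i\}$ so that $X_S$ and $X_{-S}$ become \emph{independent} $(2M,2\alpha)$-bounded Ising models, and then reduce to the known linear-form anti-concentration (Fact~\ref{fact:linear-anti-concentration}) applied to the conditional marginals. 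If you want to push your four-point-function approach, you would have to produce an operator-norm bound on the centered fourth-moment tensor, which is precisely the step you currently wave away. As written, the technical heart of the theorem is missing.
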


Some comments are in order.
We note that any robust estimator with contamination parameter $\eps$
information-theoretically requires error $\Omega(\eps)$.
(This lower bound is standard and applies even for binary product distributions.)
That is, the error guarantee of our algorithm is optimal,
within logarithmic factors. Moreover, our algorithm is proper (i.e., it outputs
an Ising model) and performs parameter learning,
i.e., it estimates the unknown parameters of the model within sufficient accuracy
to yield the desired total variation distance guarantee.


Our techniques extend to yield an outlier-robust learning algorithm with similar 
error guarantee for Ising models with non-zero external field (under additional assumptions). 
An informal version of our algorithmic result for the non-zero external field case follows.

\begin{theorem}[Robustly Learning Ising Models with Non-Zero External Field, Informal Version] 
\label{thm:thm:main-nonzero-inf}
Let $\eps > 0$ be less than a sufficiently small constant. 
For any sufficiently small $\alpha\ge0$, there exists an $M\ge0$ such that 
if $P_{\theta^*}$ is an Ising model in $d$ dimensions 
with $\max_i |\theta^*_i| \leq \alpha$ and $\max_i \sum_j |\theta^*_{ij}| \le M$, then 
there is some $N=\wt{O}_{\alpha}(d^2/\eps^2)$ and a  $\poly(N, d)$ time algorithm that 
given $\eps>0$ and a set of $N$ $\eps$-corrupted samples from $P_{\theta^*}$, 
it computes a $\wh{\theta}$ such that with probability at least $99/100$ 
it holds $\dtv(P_{\wh{\theta}},P_{\theta^*}) = O_{\alpha}(\eps \log(1/\eps))$.
\end{theorem}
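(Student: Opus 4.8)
The plan is to treat the Ising model as a member of an exponential family and follow the same strategy as in the zero-external-field case (Theorem~\ref{thm:main-zero-ext}). Write the sufficient statistic as $T(x) = \big((x_i)_{i\in[d]},\ (x_ix_j)_{i<j}\big) \in \R^{D}$ with $D = d + \binom{d}{2}$, so that $P_{\theta^*}$ is the exponential-family distribution with natural parameter $\theta^* = \big((\theta^*_i)_i, (\theta^*_{ij})_{i<j}\big)$. The recipe is: (i) robustly estimate $\mu^* := \E_{X\sim P_{\theta^*}}[T(X)]$ from the $\eps$-corrupted sample, obtaining $\wh\mu$ with small $\ell_2$ error; and (ii) invert the mean map $\theta \mapsto \mu(\theta) := \E_{X\sim P_\theta}[T(X)]$ at $\wh\mu$ to obtain $\wh\theta$. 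The whole analysis reduces to two structural facts about the sufficient-statistics vector — an \emph{upper} and a \emph{lower} bound on its covariance $\Sigma(\theta) := \cov_{X\sim P_\theta}[T(X)]$ — and the role of the hypotheses $\max_i|\theta^*_i|\le\alpha$ and $\max_i\sum_j|\theta^*_{ij}|\le M$ ($M$ below a threshold depending on $\alpha$) is precisely to make both facts hold in the presence of a field.

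For step (i) I would first show $\|\Sigma(\theta)\|_2 = O_\alpha(1)$ uniformly over the parameter region; in fact I would prove the stronger statement that every degree-$\le 2$ polynomial $\la v, T(X)\ra$, $\|v\|_2=1$, has a sub-exponential tail under $P_\theta$. This is a concentration statement for degree-$2$ polynomials of Ising models that holds in the high-temperature/weak-field regime and which the degree-$2$ anti-concentration toolkit of this paper supplies. Plugging a sub-exponential tail bound with parameter $O_\alpha(1)$ into a standard filter-based robust mean estimator yields $\|\wh\mu - \mu^*\|_2 = O_\alpha(\eps\log(1/\eps))$ from $N = \wt O_\alpha(D/\eps^2) = \wt O_\alpha(d^2/\eps^2)$ samples (the $\log(1/\eps)$, rather than $\sqrt{\log(1/\eps)}$, being the price of a merely sub-exponential tail).

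For step (ii), the Jacobian of $\theta\mapsto\mu(\theta)$ equals $\Sigma(\theta)$, so inverting with controlled error requires a lower bound $\lambda_{\min}(\Sigma(\theta)) \ge c_\alpha > 0$ uniformly over the region; equivalently, $\var_{X\sim P_\theta}[\la v, T(X)\ra] \ge c_\alpha\|v\|_2^2$ for all $v$, which is the anti-concentration statement for degree-$2$ polynomials of Ising models and is the technical heart of the argument. Granting it, the mean map is a diffeomorphism with $O_\alpha(1)$-Lipschitz inverse onto its (convex) image, so setting $\wh\theta := \mu^{-1}\big(\Pi(\wh\mu)\big)$, with $\Pi$ the projection onto that image, gives $\|\wh\theta - \theta^*\|_F \le O_\alpha(\|\wh\mu-\mu^*\|_2) = O_\alpha(\eps\log(1/\eps))$. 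Finally, integrating the Fisher information (again bounded by $\|\Sigma\|_2 = O_\alpha(1)$) along the segment from $\theta^*$ to $\wh\theta$ and applying Pinsker's inequality gives $\dtv(P_{\wh\theta}, P_{\theta^*}) \le O_\alpha(\|\wh\theta-\theta^*\|_F) = O_\alpha(\eps\log(1/\eps))$.

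The main obstacle is the covariance lower bound $\lambda_{\min}(\Sigma(\theta)) \ge c_\alpha$ with a nonzero field. With zero field the linear and quadratic blocks of $\Sigma$ decouple thanks to the $x\mapsto -x$ symmetry and can be analyzed separately; with a field the linear block, the quadratic block, and their cross-covariance all interact, and a coordinate carrying a large field becomes nearly deterministic, which would make $\Sigma$ singular. This is exactly why we need $\alpha$ small (no coordinate degenerates) and $M$ small relative to $\alpha$ (the interaction terms are a genuine perturbation of the product measure $\theta_{ij}\equiv 0$, for which $\Sigma$ is explicit and well-conditioned). I would prove the lower bound perturbatively: bound $\Sigma(\theta)$ from below by the product-measure covariance minus an operator-norm error term, and control that error term with the same degree-$2$ concentration machinery used in step (i). (An alternative I would also consider is to absorb the field into an augmented interaction matrix via a phantom coordinate fixed to $+1$ and reduce to a variant of the zero-field analysis, though the determinism of the phantom coordinate needs care.)
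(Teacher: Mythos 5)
There is a genuine gap: the covariance lower bound you rely on in step (ii) is simply false for the naive parametrization $T(x) = \big((x_i)_i,(x_ix_j)_{i<j}\big)$, and the failure is \emph{not} cured by taking $\alpha$ small --- it already occurs for the product measure. Concretely, with $\theta_{ij}\equiv 0$ and $\theta_i\equiv\alpha>0$ (so $m:=\E[X_i]=\tanh(\alpha)>0$), take $A_{ij} = a$ for $i\ne j$ and $b = -2Av$ where $v = m\mathbf{1}$. Then $X^TAX + b^TX = (X-v)^TA(X-v) + \mathrm{const}$, whose variance is $\frac{\sigma^4}{2}\|A\|_F^2$ with $\sigma^2 = 1-m^2$; but after normalizing so that $\|A\|_F^2+\|b\|_2^2 = 1$ one has $\|A\|_F^2 = \Theta\big(1/(m^2 d)\big)$, so $\lambda_{\min}(\Sigma(\theta)) = O\big(1/(m^2 d)\big)\to 0$ as $d\to\infty$ for \emph{every} fixed $\alpha>0$. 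Thus the perturbative plan in your last paragraph cannot work: the thing you would perturb around (the product-measure covariance) is itself ill-conditioned in the relevant direction, because the linear block and quadratic block are nearly linearly dependent once the means are nonzero. The paper makes exactly this observation (see the explicit counterexample at the start of Section~\ref{sec:ising-external}).

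The paper's fix, which is absent from your proposal, is a reparametrization rather than a perturbation argument. One first robustly estimates $v\approx\E_{X\sim P_{\theta^*}}[X]$ (this only needs the $O(\sqrt{\eps})$ bounded-covariance estimator of Fact~\ref{bounded-cov}), and then writes the Ising density in the ``$v$-centered form'' with sufficient statistics $T(x) = \big(((x_i-v_i)(x_j-v_j))_{i<j},\ (x_i)_i\big)$ and natural parameters $(J(\theta),h(\theta))$ where $J(\theta)_{ij}=\theta_{ij}$ and $h(\theta)_i = \theta_i + \sum_j\theta_{ij}v_j$. In this parametrization the degree-1 and degree-2 pieces are nearly uncorrelated and the anti-concentration theorem (Theorem~\ref{thm:ac-2}) holds: $\var\big[(X-v)^TA(X-v)+b^TX\big]\ge c(\alpha,M,c_0)(\|A\|_F^2+\|b\|_2^2)$ under the numerical constraint relating $M$, $\alpha$, and $\|v-\E[X]\|_2$. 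Algorithm~\ref{main-algorithm} is then run in the $(J,h)$ coordinates and the output is translated back to $\wh\theta$ at the end. A consequence worth noting (and stated explicitly after Theorem~\ref{thm:thm:main-nonzero-inf}) is that $\wh\theta$ is \emph{not} guaranteed to be close to $\theta^*$ in Frobenius norm --- that guarantee is information-theoretically impossible here --- so the last sentence of your step (ii), which deduces $\dtv$-closeness from $\|\wh\theta-\theta^*\|_F$-closeness, would also need to be rerouted through the $(J,h)$ parameters. Your ``phantom coordinate'' side idea has the same disease: fixing $X_0\equiv 1$ makes the augmented covariance singular in the $X_0$-direction, which is precisely the degeneracy the centering is designed to remove. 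Steps (i) and (iii) of your proposal (sub-exponential tails for $\la v,T(X)\ra$, iterating a filter, Pinsker/Fisher-information) are sound and match the paper's use of Lemma~\ref{lem:conc-Ising}, Facts~\ref{bounded-cov}--\ref{approx-cov}, and Lemma~\ref{lem:para-to-dis}.
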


\noindent See Theorem~\ref{thm:ising-ext} for a more detailed formal statement.
For the non-zero external field case, the value $\wh{\theta}$ that we recover unfortunately
is not guaranteed to be close to $\theta^*$ in Frobenius norm.
In fact, this is the wrong norm to compare them in and
such an approximation is information-theoretically impossible.
However, we do still guarantee that the corresponding Ising model distribution $P_{\wh{\theta}}$
satisfies Dobrushin's condition and $\dtv(P_{\wh{\theta}},P_{\theta^*}) = O(\epsilon\log(1/\epsilon))$. 

To achieve both of the above results, we view the Ising model
as an instance of a general exponential family.

\begin{definition}[Exponential Family]\label{def:exp-family}
An exponential family in canonical form is a family of distributions $P_{\theta}$
supported on a set $\new{\mathcal{X}}$, where the parameter $\theta$ belongs to
some convex set $\Omega \subseteq \R^d$, with density function
$P_{\theta}(x) = \exp\left(\langle T(x),\theta\rangle- A(\theta)\right),\forall x\in\mathcal{X}$,
where $A(\theta)$ is the normalizing factor called log-partition function
and the vector $T(x)$ is called the sufficient statistics of $P_{\theta}$.
\end{definition}

As one of our main contributions, we provide a computationally efficient outlier-robust parameter learning algorithm
for exponential families  under the following condition.

\begin{condition}\label{exp-family-cond}
For an arbitrary $\theta\in\Omega$, the exponential family $P_{\theta}$ satisfies the following:
\begin{enumerate}
\item $\cov_{X\sim P_\theta}[T(X)] \succeq c_1\, I$,
where $c_1>0$ is a universal constant independent of $\theta$ and the dimension $d$ of $T(X)$.

\item $T(x)$ has sub-exponential tails for a universal constant $c_2>0$,
i.e., for any unit vector $v\in\mathbb{R}^d$, it holds that 
$\Pr_{X\sim P_{\theta}}[|\langle v,T(X)-\E[T(X)]\rangle|> t]\le2\exp(-c_2t)$, \new{for all $t>0$,}
where $c_2>0$ is a universal constant independent of $\theta$ and the dimension $d$ of $T(X)$.
\item \new{There is an algorithm that, given as input $\theta \in \Omega$ and $\gamma>0$, it runs
in $\poly(d,1/\gamma)$ time and it outputs i.i.d.\ samples from a distribution $D_{\gamma}$ such that
$\dtv(D_{\gamma}, P_{\theta}) \le\gamma$.}
\end{enumerate}
In addition, the diameter of $\Omega$ is bounded from above, and we can efficiently compute
approximate projections on $\Omega$. Specifically, it holds that $\Diam(\Omega) \leq \exp(\poly(d))$,
and for any $\delta>0$ and $z\in \R^d$, there is a $\poly(d,1/\delta)$ time algorithm
that computes a point $y\in\Omega$ such that $\ltwo{y-P_\Omega(z)}\le\delta$,
\new{where $P_{\Omega}$ is the projection operation}.
\end{condition}

\noindent For exponential families, we show:

\begin{theorem}[Robust Learning of Exponential Families]\label{thm:main-exp-family}
Let $P_{\theta^*}$ be an exponential family over $\mathcal{X}$ with sufficient statistics $T(x)$, 
where the parameter $\theta^*\in\Omega$ and $\Omega \subseteq \R^d$ is convex. 
Assume that \new{Condition~\ref{exp-family-cond} holds}.
\new{Let $0<\epsilon<\epsilon_0$, for some universal constant $\epsilon_0$,
and $S'$ be an $\epsilon$-corrupted set of $N$ samples from $P_{\theta^*}$.
There is a $\poly(N, d)$ time algorithm that, for some $N=\wt{O}(d/\epsilon^2)$, on input $S'$ and $\eps>0$, }
returns a vector $\wh{\theta} \in \Omega$ such that with probability at least $99/100$
we have that $\|\wh{\theta}-\theta^*\|_2\le O(\epsilon\log(1/\epsilon))$.
\new{In addition, $\dtv(P_{\wh{\theta}},P_{\theta^*})\le O(\|\wh{\theta}-\theta^*\|_2)\le O(\epsilon\log(1/\epsilon))$.}
\end{theorem}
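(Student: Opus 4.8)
The plan is to exploit the fact that, for an exponential family, the log-partition function satisfies $\nabla A(\theta) = \mu(\theta) := \E_{X\sim P_\theta}[T(X)]$ and $\nabla^2 A(\theta) = \cov_{X\sim P_\theta}[T(X)]$. By part~1 of Condition~\ref{exp-family-cond}, $\nabla^2 A(\theta) \succeq c_1 I$ for all $\theta\in\Omega$, so $A$ is $c_1$-strongly convex on the convex set $\Omega$; in particular the mean map $\mu(\cdot)$ is injective and expanding, $\|\mu(\theta_1)-\mu(\theta_2)\|_2 \ge c_1\|\theta_1-\theta_2\|_2$. Integrating the sub-exponential tail bound of part~2 also shows $\cov_{X\sim P_\theta}[T(X)] \preceq C\,I$ for $C = O(1/c_2^2)$, so $A$ is additionally $C$-smooth. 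The upshot is that learning $\theta^*$ reduces to \emph{robustly estimating the single vector} $\mu^* := \mu(\theta^*)$ in $\ell_2$ and then ``inverting'' $\mu(\cdot)$: an estimate $\wh\mu$ with $\|\wh\mu-\mu^*\|_2\le\delta$ should yield $\wh\theta$ with $\|\wh\theta-\theta^*\|_2 = O(\delta)$.

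First I would reduce to robust mean estimation. Since total variation distance does not increase under the pushforward by $T(\cdot)$, the multiset $\{T(x)\}_{x\in S'}$ is an $\epsilon$-corrupted set of $N$ samples from the law of $T(X)$, $X\sim P_{\theta^*}$, a distribution with covariance sandwiched between $c_1 I$ and $C I$ and with sub-exponential tails. I would feed this into a standard outlier-robust mean estimation routine for sub-exponential distributions (a filter/stability-based estimator), which with $N=\wt O(d/\epsilon^2)$ samples runs in $\poly(N,d)$ time and, with probability $\ge 99/100$, outputs $\wh\mu$ with $\|\wh\mu-\mu^*\|_2 = O(\epsilon\log(1/\epsilon))$. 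It is essential here to invoke the sub-exponential tail hypothesis of part~2 rather than merely bounded covariance, since bounded covariance alone only yields error $O(\sqrt\epsilon)$.

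Next I would invert the mean map by convex optimization. Consider $f(\theta) := A(\theta) - \langle\wh\mu,\theta\rangle$, which is $c_1$-strongly convex and $C$-smooth on $\Omega$, with $\nabla f(\theta) = \mu(\theta)-\wh\mu$. We have no closed form for $A$ or $\nabla A$, but part~3 of Condition~\ref{exp-family-cond} lets us draw approximate samples from $P_\theta$, so we can estimate $\nabla f(\theta)$ to any inverse-polynomial $\ell_2$ accuracy in $\poly(d,1/\epsilon)$ time (the sub-exponential tails supplying the needed concentration of the empirical mean of $T$), and we are given approximate projections onto $\Omega$. Running projected inexact gradient descent, using strong convexity for geometric convergence and $\Diam(\Omega)\le\exp(\poly(d))$, a $\poly(d,\log(1/\epsilon))$ number of iterations suffices to produce $\wh\theta\in\Omega$ with $f(\wh\theta)\le\min_\Omega f + \rho$ for a negligible $\rho$. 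The usual two-sided strong-convexity inequality---comparing $\wh\theta$ with the exact minimizer $\theta^\dagger$ and $\theta^\dagger$ with the feasible point $\theta^*$, using $\nabla f(\theta^*)=\mu^*-\wh\mu$---then gives $\|\wh\theta-\theta^*\|_2 \le \|\mu^*-\wh\mu\|_2/c_1 + \sqrt{2\rho/c_1} = O(\epsilon\log(1/\epsilon))$.

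For the total variation bound I would use that $\dkl(P_{\theta^*}\,\|\,P_{\wh\theta})$ equals the Bregman divergence of $A$ between $\theta^*$ and $\wh\theta$, hence is at most $\tfrac{C}{2}\|\wh\theta-\theta^*\|_2^2$ by $C$-smoothness; Pinsker's inequality then gives $\dtv(P_{\wh\theta},P_{\theta^*}) \le \sqrt{\tfrac12\dkl(P_{\theta^*}\|P_{\wh\theta})} = O(\|\wh\theta-\theta^*\|_2) = O(\epsilon\log(1/\epsilon))$. I expect the main obstacle to be the bookkeeping in the last two paragraphs: carefully propagating the three sources of computational error---approximate sampling from $P_\theta$, stochastic gradient estimates, and approximate projection onto $\Omega$---through the optimization so that the final error stays $O(\epsilon\log(1/\epsilon))$ while the running time stays $\poly(N,d)$, and verifying that an off-the-shelf robust mean estimator attains the $\epsilon\log(1/\epsilon)$ rate under exactly the hypotheses of Condition~\ref{exp-family-cond}.
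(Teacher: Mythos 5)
The plan of ``robustly estimate $\mu^* = \E[T(X)]$, then invert the mean map by minimizing $A(\theta) - \langle\wh\mu,\theta\rangle$ via projected inexact gradient descent, then convert to TV via $\dkl = $ Bregman divergence $\le \frac{C}{2}\|\wh\theta-\theta^*\|_2^2$ and Pinsker'' is the same skeleton the paper uses (compare Lemma~\ref{lem:mean-to-parameter}, Claim~\ref{claim:smooth-convex}, Algorithm~\ref{APGD}, and Lemma~\ref{lem:para-to-dis}). The genuine gap is in your very first step: you assume a black-box robust mean estimator that, given only sub-exponential tails and covariance sandwiched between $c_1 I$ and $CI$, achieves error $O(\epsilon\log(1/\epsilon))$ in one shot. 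No such estimator is invoked or proved in the paper, and the one the paper cites (Fact~\ref{approx-cov}) requires an additional input you do not have: a proxy $\Sigma^{(k)}$ with $\|\Sigma^{(k)}-\Sigma^*_T\|_2 \le \tau$, and it returns error $O(\sqrt{\tau\epsilon}+\epsilon\log(1/\epsilon))$. With $\tau$ only a constant (which is all Condition~\ref{exp-family-cond} gives you, since it pins the covariance only up to constant multiplicative factors), the guarantee degrades to $O(\sqrt{\epsilon})$, exactly the rate you correctly identify as insufficient. Sub-exponential tails relative to the \emph{identity} do not by themselves let the filter distinguish adversarial inflation of variance from the legitimate spread of $\Sigma^*_T$ between $c_1$ and $C$ in an unknown direction.

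The paper's resolution is an iterative bootstrap that you have skipped: get a crude $\theta^{(0)}$ with error $O(\sqrt\epsilon)$ via Fact~\ref{bounded-cov}; then repeatedly (i) use the current $\theta^{(k)}$ together with Condition~\ref{exp-family-cond}(3) to draw fresh samples from (nearly) $P_{\theta^{(k)}}$ and form an empirical $\Sigma^{(k)}_T$, which by the Lipschitz dependence of $\cov[T(X)]$ on $\theta$ (Lemma~\ref{third-moment}, via the third-moment identity Proposition~\ref{prop:exponential-family-p3}) satisfies $\|\Sigma^{(k)}_T - \Sigma^*_T\|_2 \le O(\|\theta^{(k)}-\theta^*\|_2)$; (ii) whiten by $(\Sigma^{(k)}_T)^{-1/2}$, run Fact~\ref{approx-cov} on the corrupted data to estimate the whitened mean to error $O(\sqrt{\epsilon\tau_k}+\epsilon\log(1/\epsilon))$, un-whiten, and invert the mean map to get $\theta^{(k+1)}$ with $\tau_{k+1}=O(\sqrt{\epsilon\tau_k}+\epsilon\log(1/\epsilon))$. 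This recurrence contracts from $\tau_0 = O(\sqrt\epsilon)$ to $O(\epsilon\log(1/\epsilon))$ in $K=O(\log\log(1/\epsilon))$ rounds (Lemmas~\ref{lem:parameter-to-cov} and~\ref{lem:iterative-refinement}). You should either carry out this bootstrap, or explicitly supply (and justify under exactly the hypotheses of Condition~\ref{exp-family-cond}) a routine that robustly estimates \emph{both} the covariance and the mean of a sub-exponential distribution from corrupted samples to the near-optimal rate; as written, the proposal silently assumes the hardest part of the theorem.
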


As we will explain in the next subsection, our robust learning algorithm for Ising models 
(in both the zero and non-zero external field cases) is the algorithm given in Theorem~\ref{thm:main-exp-family}. 
The main technical challenge is in establishing correctness, i.e., showing that an Ising model under Dobrushin's condition
satisfies Condition~\ref{exp-family-cond}. \new{To achieve this, we develop new anti-concentration results
for degree-$2$ polynomial of Ising models that we believe may be of independent interest 
(see Theorems~\ref{thm:ac} and~\ref{thm:ac-2}).}

\subsection{Overview of Techniques} \label{ssec:techniques}

\new{Our outlier-robust learning algorithm for Ising models
is a special case of a robust learning algorithm for the
class of exponential families (satisfying Condition~\ref{exp-family-cond}).
We start with an intuitive description of this algorithm
followed by a brief sketch of the tools required to prove its correctness.}


To robustly learn a family of distributions in total variation distance,
one typically requires a set of relevant parameters and a ``parameter distance'',
so that sufficiently accurate approximation in parameter distance implies
approximation in total variation distance. For exponential families,
a natural set of parameters present themselves:
the expectation of the sufficient statistics of the distribution. 
Our strategy will be to robustly estimate this expectation.


Unfortunately, there is a wrinkle in this strategy which relates to
the scale in which we are working. On the one hand, in order to
robustly estimate the mean of a distribution, one needs to know some
sort of tail bounds on the set of clean samples;
and for these tail bounds to hold, we need to know the scale
at which we expect this decay to happen. On the other hand,
once we learn an approximation to the true mean of the sufficient statistics,
we need to relate the sizes of these errors to the errors
we will obtain in the underlying parameters for the family,
and to the total variation distance of the final distribution that we learn.
These relationships define certain natural scales for our problem,
and it is not clear how to obtain a robust algorithm
if these scales disagree (in such a case, the accuracy
to which we can learn the expectation of the sufficient statistics
might differ from the accuracy to which we need to learn it to obtain
good error in total variation distance) or if the relevant scale
depends on the underlying (unknown) parameters.

To resolve this issue, we need to make an assumption (Condition~\ref{exp-family-cond}).
Specifically, we need to assume that there is a convex set $\Omega$ of parameters
in our exponential family, such that any elements of the family inside this set
have sufficient statistics whose covariances are within constant multiples of each
other. This implies that the relevant scales for our problem are all
comparable.

From this point, there is a relatively straightforward algorithm that achieves suboptimal error.
After a change of variables, we can assume that within $\Omega$
all of the sufficient statistics have covariance proportional to the identity.
This allows us to use standard robust mean estimation algorithms (Fact~\ref{bounded-cov})
to estimate the mean of the sufficient statistics to error $O(\sqrt{\eps})$ in $\ell_2$-norm.
This in turn allows us to estimate our distribution to error
$O(\sqrt{\eps})$ in total variation distance.

To improve on this error guarantee, we will need to obtain better error in our
robust mean estimation algorithm. This can be achieved under the following assumptions:
(1) The distribution in question satisfies strong tail bounds.
(2) We know an accurate approximation to the covariance matrix of the distribution.
As for (1), it follows for general exponential families that their
sufficient statistics will have exponential tail bounds, which is
sufficient for us. For (2), we will need to already have a good
approximation of the underlying parameters of our distribution. This
gives rise to an iterative algorithm. If we know the underlying
parameters of our exponential family to error $\delta$, we can learn the mean of the
sufficient statistics --- and thus new approximations to the parameters ---
to error $O(\eps \log(1/\eps)+\sqrt{\delta \eps})$ (Lemma~\ref{lem:iterative-refinement}).
Iterating this several times, we can eventually achieve the near-optimal error of
$O(\eps\log(1/\eps))$.

Our result for Ising models is obtained via an application of the above algorithm.
Note that Ising models are a special case of an exponential
family, where the sufficient statistics are given by degree-$2$
polynomials. For the above algorithm to provably work, we need
to show that (under some reasonable conditions on parameters) the
covariance of the sufficient statistics is well-behaved. In
particular, we show that if the underlying parameters satisfy
the Dobrushin condition, the covariance matrix of the sufficient statistics
will be proportional to the identity (see Theorems~\ref{thm:ac} and~\ref{thm:ac-2}).

Interestingly,~\cite{dagan2020estimating} recently showed that this holds for the
covariance of the space of {\em degree-$1$} polynomials of such Ising models.
We need to generalize this to show that \new{$\var[X^TAX]$} is proportional to
$\|A\|_F^2$ for any symmetric matrix $A$ \new{with zero diagonal}.
To achieve this, we use a decoupling trick to reduce the problem to the degree-$1$ case.
We relate the variance of $X^TAX$ to $\E[|(X+Y)^TA(X-Y)|^2]$, for $X$ and $Y$ independent
copies of our distribution. If we condition on the set $S$ of coordinates where $X_i = Y_i$,
then $(X+Y)$ and $(X-Y)$ become independent Ising models.
By estimating the covariances of these
{\em linear} functions of these statistics, we can get a handle on the final bound.

\subsection{Organization}
After some technical preliminaries (Section~\ref{sec:prelims}),
in Section~\ref{sec:exp} we prove Theorem~\ref{thm:main-exp-family}.
In Section~\ref{sec:ising} we establish Theorem~\ref{thm:main-zero-ext}, 
and in Section~\ref{sec:ising-external} we establish Theorem~\ref{thm:thm:main-nonzero-inf}.

\section{Preliminaries} \label{sec:prelims}

\paragraph{Notation} For $d \in \Z_+$, we use $[d]$ to denote the set $\{1,\ldots,d\}$. 
Given a subset $S\subseteq [d]$, we will denote $-S=[d]\setminus S$. 
In particular, given $i\in[d]$, let $-i=[d]\setminus\{i\}$. 
Given a vector $a=(a_1,\ldots,a_d)$ and $S\subseteq[d]$, 
let $a_S$ denote the $|S|$-coordinate vector $\{a_i:i\in S\}$. 
Let $\mathbb{S}^{d-1} = \{x\in\mathbb{R}^d:\|x\|_2=1\}$ be the $d$-dimensional unit sphere. 
For $u,v\in\mathbb{R}^d$, we use $\langle u,v\rangle$ 
for the inner product of $u$ and $v$.

Given a real symmetric matrix $A\in\mathbb{R}^{d\times d}$, 
let $\|A\|_F \eqdef \sqrt{\sum_{i,j\in[d]}A_{ij}^2}$, 
let $\|A\|_2 \eqdef \max_{v\in\mathbb{S}^{d-1}}\|Av\|_2$, 
and let $\|A\|_{\infty} \eqdef \max_{i\in[d]}\sum_{j=1}^{d}|A_{ij}|$.
For symmetric matrices  $A,B\in\R^{d\times d}$, we say that $A\succeq B$ if $A-B$ 
is positive semi-definite (PSD), and $A\preceq B$ if $B-A$ is PSD. 

For two distributions $p,q$ over a probability space $\Omega$, 
let $\dtv(p,q)\eqdef\sup_{S\subseteq\Omega}|p(S)-q(S)|$ 
denote the total variation distance between $p$ and $q$ and
let $\dkl(p,q)\eqdef\int_{\Omega}\log\left(\frac{dp}{dq}\right)dp$ 
denote the KL-divergence of $p$ and $q$.

We use $\E[X],\var[X],\cov[X,Y]$ to denote the expectation of $X$, 
variance of $X$, and covariance of $X$ and $Y$ respectively.

We will use the following terminology.
\begin{definition} [Bounded Ising Model] \label{def:ising-bounded}
Given $M,\alpha>0$, we say that an Ising model distribution $P_\theta$ 
is $(M,\alpha)$-bounded if $\max_{i\in[d]}\sum_{j\ne i}|\theta_{ij}|\le M$ 
and $\max_{i\in[d]}|\theta_i|\le\alpha$.
\end{definition}

Intuitively, the first inequality states that the dependencies among 
the points are weak and the second inequality guarantees 
that the variance of each point is sufficiently large.

\paragraph{Sub-exponential Distributions} 
Here we present basic facts about sub-exponential distributions.
The reader is referred to~\cite{vershynin2018high}.

\begin{definition}[Sub-exponential Distribution]\label{def:exp-decay}
A distribution $D$ over $\R$ is sub-exponential
if there is a constant $c>0$ such that for any $t>0$, we have 
$\Pr_{X\sim D}\left[|X-\E[X]|>t\right] \le 2\exp(-c \, t)$. 
We say that a distribution $D'$ over $\R^d$ is sub-exponential if there is a constant $c'>0$ such that
for any unit vector $v\in\mathbb{S}^{d-1}$ and any $t>0$, we have that
$\Pr_{X\sim D'}\left[|\langle v,X-\E[X]\rangle|>t\right]\le2\exp(-c' \, t).$
\end{definition}

The following standard fact translates tail bounds to moment bounds.

\begin{fact}\label{fact:sub-exponential-property}
Let $X$ be a mean-zero random variable, and suppose that there is a constant $K>0$ such that for any $t>0$,
$\Pr[|X|>t]\le2\exp(-t/K)$. Then there is a constant $C>0$ such that for any real number $p\ge1$,
$\E[|X|^p]\le (CKp)^p$. In addition, there is a constant $C'>0$ such that for any $0<|\lambda|<1/(C'K)$, 
we have that $\E[\exp(\lambda X)]\le\exp(C'^2K^2\lambda^2)$.
\end{fact}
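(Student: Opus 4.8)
The plan is to prove both parts by the standard chain: pass from the tail bound to a moment bound, and then from the moment bound to the MGF bound.

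\textbf{Part 1 (moment bounds).} First I would write $\E[|X|^p]$ as a tail integral via the layer-cake formula: for any $p\ge 1$,
\[
\E[|X|^p] = \int_0^\infty \Pr\!\left[|X|^p > s\right]\, ds = \int_0^\infty p\, t^{p-1}\Pr\!\left[|X| > t\right]\, dt \;\le\; 2p \int_0^\infty t^{p-1} e^{-t/K}\, dt .
\]
The substitution $t = Ku$ turns the right-hand side into $2K^p\, p\int_0^\infty u^{p-1} e^{-u}\, du = 2K^p\, p\,\Gamma(p) = 2K^p\,\Gamma(p+1)$. It then remains to invoke the elementary estimate $\Gamma(p+1)\le p^p$ for all $p\ge 1$ (which follows from Stirling's formula together with a check on a bounded interval, or from monotonicity of $p^p/\Gamma(p+1)$). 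Combining with $2\le 2^p$ for $p\ge 1$ gives $\E[|X|^p]\le (2Kp)^p$, i.e.\ the claimed bound with $C=2$ (or any larger universal constant).

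\textbf{Part 2 (MGF bound).} For the second part I would expand the exponential in a power series and use the hypothesis $\E[X]=0$ to discard the linear term:
\[
\E[\exp(\lambda X)] = 1 + \lambda\,\E[X] + \sum_{p\ge 2}\frac{\lambda^p\,\E[X^p]}{p!} \;\le\; 1 + \sum_{p\ge 2}\frac{|\lambda|^p\,\E[|X|^p]}{p!}.
\]
Plugging in the moment bound from Part 1 and using $p!\ge (p/e)^p$ gives $\tfrac{|\lambda|^p\E[|X|^p]}{p!}\le (eCK|\lambda|)^p$. Hence, for $|\lambda| < 1/(2eCK)$ we have $eCK|\lambda| < 1/2$, the geometric series converges, and
\[
\E[\exp(\lambda X)] \;\le\; 1 + \sum_{p\ge 2}(eCK|\lambda|)^p = 1 + \frac{(eCK|\lambda|)^2}{1 - eCK|\lambda|} \;\le\; 1 + 2(eCK|\lambda|)^2 \;\le\; \exp\!\big(2e^2C^2K^2\lambda^2\big).
\]
Taking $C' = 2eC$ ensures both the range condition $|\lambda| < 1/(C'K)$ implies $eCK|\lambda| < 1/2$, and the exponent $2e^2C^2K^2\lambda^2 \le C'^2K^2\lambda^2$, which is exactly what is claimed.

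\textbf{Remaining care and main obstacle.} The only points needing a word of justification are (i) the interchange of $\E[\cdot]$ with $\sum_p$, which is legitimate since for $|\lambda|$ below the stated threshold the nonnegative series $\sum_p |\lambda|^p\E[|X|^p]/p!$ converges by the computation above, so Tonelli applies to $\E[\exp(|\lambda X|)]$ and the signed series then converges absolutely; and (ii) the elementary $\Gamma(p+1)$ bound. Neither is a genuine difficulty: the whole argument is a routine instance of the tail-to-moments-to-MGF pattern, and I expect the only mildly tedious part to be bookkeeping of the absolute constants so that the same $C$ (resp.\ $C'$) works uniformly over $p\ge 1$ (resp.\ over the whole $\lambda$-range).
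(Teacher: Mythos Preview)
Your proof is correct and is the standard tail-to-moments-to-MGF argument. The paper does not actually prove this statement: it is stated as a basic fact about sub-exponential distributions with a reference to Vershynin's textbook, so there is nothing to compare against beyond noting that your argument is exactly the routine derivation one finds there.
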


\noindent Additional facts about sub-exponential distributions can be found in 
Appendix~\ref{ssec:subexp}.

\paragraph{Exponential Families}

Here we record some basic facts about exponential families. 
The proofs of these results are standard and presented in Appendix~\ref{app:exp-basics}.

The first fact says that for an arbitrary exponential family, 
the mean of the sufficient statistics is exactly the gradient of the log-partition function, 
and the covariance of the sufficient statistics is exactly the Hessian of the log-partition function.

\begin{fact}[see, e.g.,~\cite{wainwright2008graphical}]\label{fact:exp-family-p1}
Let $X \sim P_{\theta}$ be an exponential family over $\mathcal{X}$ with sufficient statistics $T(x)$ 
and probability density function $P_{\theta}(x)=\exp\left(\langle T(x),\theta \rangle - A(\theta) \right)$, 
$\theta \in \R^d$. Let $\mu_T=\E[T(X)]$ and $\Sigma_T=\cov[T(X)]$. 
Then, we have that $\nabla_\theta A(\theta)=\mu_T$ and 
$\nabla^2_\theta A(\theta)=\frac{\partial\mu_T}{\partial\theta}=\Sigma_T$.
\end{fact}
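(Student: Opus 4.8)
The plan is to differentiate the normalization identity that defines the log-partition function. Since $P_{\theta}(x) = \exp(\langle T(x),\theta\rangle - A(\theta))$ is a probability density with respect to some underlying base measure $\nu$ on $\mathcal{X}$ (counting measure in the Ising case), integrating over $\mathcal{X}$ gives $\int_{\mathcal{X}} P_\theta(x)\, d\nu(x) = 1$, which rearranges to $\exp(A(\theta)) = \int_{\mathcal{X}} \exp(\langle T(x),\theta\rangle)\, d\nu(x)$. First I would take the gradient in $\theta$ of both sides, assuming for the moment that we may differentiate under the integral sign. The left side gives $\exp(A(\theta))\,\nabla_\theta A(\theta)$ and the right side gives $\int_{\mathcal{X}} T(x)\exp(\langle T(x),\theta\rangle)\, d\nu(x)$; dividing through by $\exp(A(\theta))$ yields $\nabla_\theta A(\theta) = \int_{\mathcal{X}} T(x) P_\theta(x)\, d\nu(x) = \E_{X\sim P_\theta}[T(X)] = \mu_T$, which is the first claim.

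For the second claim I would differentiate the identity $\mu_T = \nabla_\theta A(\theta) = \int_{\mathcal{X}} T(x) P_\theta(x)\, d\nu(x)$ once more. Observe that $\partial_{\theta_j} P_\theta(x) = (T_j(x) - \partial_{\theta_j}A(\theta)) P_\theta(x) = (T_j(x) - \mu_{T,j}) P_\theta(x)$ by the first part. Hence, for each $i,j$,
\[
\frac{\partial \mu_{T,i}}{\partial\theta_j} = \int_{\mathcal{X}} T_i(x)(T_j(x) - \mu_{T,j}) P_\theta(x)\, d\nu(x) = \E[T_i(X)T_j(X)] - \mu_{T,i}\mu_{T,j} = \cov[T_i(X),T_j(X)].
\]
Assembling these entries gives $\partial\mu_T/\partial\theta = \Sigma_T$, and since $\partial\mu_T/\partial\theta = \nabla^2_\theta A(\theta)$ by the first part, we obtain $\nabla^2_\theta A(\theta) = \Sigma_T$, as desired.

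The only genuine subtlety is justifying the interchange of differentiation and integration used twice above. In the case relevant to this paper, $\mathcal{X} = \{\pm 1\}^d$ is finite and the sufficient statistics are degree-$2$ polynomials, so the ``integral'' is a finite sum and differentiating term by term is immediate. For a general exponential family one invokes the standard fact (see, e.g.,~\cite{wainwright2008graphical}) that $A(\theta)$ is $C^\infty$ (indeed real-analytic) on the interior of the natural parameter domain; concretely, one fixes $\theta$ in the interior, notes that $\exp(\langle T(x),\theta'\rangle)$ remains $\nu$-integrable for $\theta'$ in a small ball around $\theta$, and applies dominated convergence to legitimize passing the derivative inside. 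Beyond this routine bookkeeping I expect no difficulty, since the core computation is just the two short manipulations above.
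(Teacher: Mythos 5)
Your proof is correct and takes essentially the same route as the paper's: differentiate the normalization identity $\exp(A(\theta)) = \int \exp(\langle T(x),\theta\rangle)\,d\nu(x)$ to get the gradient, then differentiate $\mu_T$ once more (the paper does this via the quotient rule on $\nabla Z/Z$ and then on the resulting ratio, which is the same computation organized slightly differently). You additionally flag the justification for differentiating under the integral sign, which the paper leaves implicit since $\mathcal{X}=\{\pm1\}^d$ is finite; this is a reasonable extra remark rather than a different method.
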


The following fact connects the KL-divergence between two exponential families 
with their parameters in an explicit form.

\begin{fact}[see, e.g.,~\cite{wainwright2008graphical}]\label{fact:exp-family-p2}
Let $P_{\theta}, P_{\theta'}$ be exponential families with probability density functions
$P_\theta(x) =\exp\left(\langle T(x),\theta\rangle- A(\theta)\right)$ and 
$P_{\theta'}(x) =\exp\left(\langle T(x),\theta'\rangle- A(\theta')\right)$,
where the parameters $\theta, \theta' \in\R^d$. 
Let $\mu_T=\E_{X\sim P_{\theta}}[T(X)]$, $\mu'_T=\E_{X\sim P_{\theta'}}[T(X)]$, 
$\Sigma_T=\cov_{X\sim P_{\theta}}[T(X)]$, and $\Sigma'_T=\cov_{X\sim P_{\theta'}}[T(X)]$. 
Then, we have that
\begin{align*}
\dkl(P_\theta,P_{\theta'}) = \langle\theta-\theta',\mu_T\rangle-A(\theta)+A(\theta').
\end{align*}
Combining this with Fact~\ref{fact:exp-family-p1}, we obtain that 
$\nabla_{\theta'}\dkl(P_\theta,P_{\theta'})=\mu'_T-\mu_T$ and $\nabla^2_{\theta'}\dkl(P_\theta,P_{\theta'})=\Sigma'_T$.
\end{fact}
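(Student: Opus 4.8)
The plan is to compute the KL-divergence directly from the definition and the exponential-family density, and then obtain the gradient and Hessian statements by differentiating the resulting closed form and invoking Fact~\ref{fact:exp-family-p1}. First I would write
\[
\dkl(P_\theta,P_{\theta'}) \;=\; \E_{X\sim P_\theta}\!\left[\log\frac{P_\theta(X)}{P_{\theta'}(X)}\right]
\;=\; \E_{X\sim P_\theta}\big[\langle T(X),\theta-\theta'\rangle - A(\theta) + A(\theta')\big],
\]
using that $\log P_\theta(x) = \langle T(x),\theta\rangle - A(\theta)$ and similarly for $\theta'$. Since $A(\theta)$ and $A(\theta')$ are constants (they do not depend on $x$), they pull out of the expectation, and by linearity $\E_{X\sim P_\theta}[\langle T(X),\theta-\theta'\rangle] = \langle \theta-\theta',\mu_T\rangle$ with $\mu_T = \E_{X\sim P_\theta}[T(X)]$. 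This yields exactly $\dkl(P_\theta,P_{\theta'}) = \langle\theta-\theta',\mu_T\rangle - A(\theta) + A(\theta')$.

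Next, to get the derivative statements, I would treat $\theta$ as fixed and view the right-hand side as a function of $\theta'$. The first term $\langle\theta-\theta',\mu_T\rangle$ has gradient $-\mu_T$ in $\theta'$ (note $\mu_T$ does not depend on $\theta'$), the term $-A(\theta)$ is constant in $\theta'$, and the term $A(\theta')$ has gradient $\nabla_{\theta'}A(\theta') = \mu'_T$ by Fact~\ref{fact:exp-family-p1} applied at parameter $\theta'$. Adding these gives $\nabla_{\theta'}\dkl(P_\theta,P_{\theta'}) = \mu'_T - \mu_T$. Differentiating once more, only the $A(\theta')$ term contributes a Hessian, and Fact~\ref{fact:exp-family-p1} gives $\nabla^2_{\theta'}A(\theta') = \Sigma'_T$, hence $\nabla^2_{\theta'}\dkl(P_\theta,P_{\theta'}) = \Sigma'_T$.

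This argument is essentially a routine computation, so I do not anticipate a genuine obstacle; the only points requiring a little care are (i) making sure the expectations are taken under $P_\theta$ throughout the first display, so that the $T(X)$ term linearizes correctly, and (ii) correctly tracking which parameter is held fixed when differentiating, so that $\mu_T$ is treated as a constant and only the $A(\theta')$ term is differentiated. Both are handled cleanly by writing out the density explicitly and appealing to Fact~\ref{fact:exp-family-p1} at the appropriate parameter value.
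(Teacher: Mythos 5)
Your proof is correct and follows essentially the same route as the paper's: expand the log-density ratio, take the expectation under $P_\theta$ to get the closed form, and then differentiate in $\theta'$ using Fact~\ref{fact:exp-family-p1} at parameter $\theta'$. No gaps.
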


The following lemma shows that for any exponential family $P_{\theta^*}$, 
if the sufficient statistics $T(x)$ is sub-exponential, then a good estimate for the parameter $\theta^*$ 
yields a good estimate in total variation distance.

\begin{lemma}\label{lem:para-to-dis}
Let $P_{\theta^*}$ be an exponential family over $\mathcal{X}$ with parameter $\theta^*\in\mathbb{R}^d$ 
and sufficient statistics $T(x)$. Let $\wh{\theta}\in\mathbb{R}^d$ such that $\|\wh{\theta}-\theta^*\|_2\le\delta$,
for some sufficiently small constant $\delta>0$. If for any unit vector $v\in\mathbb{R}^d$, 
$\Pr_{X\sim P_{\theta^*}}[|\langle v,T(X)-\E[T(X)]\rangle|> t]\le2\exp(-ct)$, for all $t>0$, 
then $\dtv(P_{\wh{\theta}},P_{\theta^*})\le c'\|\wh{\theta}-\theta^*\|_2$, for some constant $c'>0$.
\end{lemma}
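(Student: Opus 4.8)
The plan is to bound $\dtv$ by the KL-divergence via Pinsker's inequality and then control $\dkl(P_{\theta^*},P_{\wh\theta})$ by a second-order Taylor expansion, using the fact (Fact~\ref{fact:exp-family-p2}) that the Hessian of $\theta'\mapsto\dkl(P_{\theta^*},P_{\theta'})$ is exactly $\cov_{X\sim P_{\theta'}}[T(X)]$. Concretely, by Fact~\ref{fact:exp-family-p2} the map $\theta'\mapsto\dkl(P_{\theta^*},P_{\theta'})$ vanishes at $\theta'=\theta^*$, has gradient $\E_{X\sim P_{\theta'}}[T(X)]-\E_{X\sim P_{\theta^*}}[T(X)]$ (which also vanishes at $\theta'=\theta^*$), and has Hessian $\cov_{X\sim P_{\theta'}}[T(X)]$; also $A(\cdot)$, hence $\dkl(P_{\theta^*},\cdot)$, is smooth on the segment $[\theta^*,\wh\theta]$. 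Applying one-dimensional Taylor's theorem to $t\mapsto\dkl(P_{\theta^*},P_{\theta^*+t(\wh\theta-\theta^*)})$ gives, for some $\xi$ on the segment between $\theta^*$ and $\wh\theta$,
\[
\dkl(P_{\theta^*},P_{\wh\theta}) \;=\; \tfrac12\,\big\langle\wh\theta-\theta^*,\ \cov_{X\sim P_{\xi}}[T(X)]\,(\wh\theta-\theta^*)\big\rangle \;\le\; \tfrac12\,\|\wh\theta-\theta^*\|_2^2\,\big\|\cov_{X\sim P_{\xi}}[T(X)]\big\|_2 .
\]
Since $\|\xi-\theta^*\|_2\le\|\wh\theta-\theta^*\|_2\le\delta$, it then suffices to prove that $\|\cov_{X\sim P_{\theta}}[T(X)]\|_2=O(1)$ for every $\theta$ with $\|\theta-\theta^*\|_2\le\delta$; combining this with Pinsker's inequality $\dtv(P_{\wh\theta},P_{\theta^*})\le\sqrt{\dkl(P_{\theta^*},P_{\wh\theta})/2}$ yields the claim.

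\textbf{Bounding the covariance near $\theta^*$.} Fix $\theta$, set $w:=\theta-\theta^*$ with $\|w\|_2\le\delta$, and write $\mu_T:=\E_{X\sim P_{\theta^*}}[T(X)]$. Since $P_\theta(x)=\exp(\langle T(x),\theta\rangle-A(\theta))$, the density ratio is $\tfrac{dP_\theta}{dP_{\theta^*}}(x)=\exp(\langle T(x)-\mu_T,w\rangle)/Z_w$ with $Z_w:=\E_{X\sim P_{\theta^*}}[\exp(\langle T(X)-\mu_T,w\rangle)]\ge1$ by Jensen's inequality. For a unit vector $v$, centering the variance at $\langle v,\mu_T\rangle$, then passing to $P_{\theta^*}$ and applying Cauchy--Schwarz,
\[
\var_{X\sim P_{\theta}}\!\big[\langle v,T(X)\rangle\big] \;\le\; \E_{X\sim P_{\theta^*}}\!\Big[\langle v,T(X)-\mu_T\rangle^2\,\tfrac{dP_\theta}{dP_{\theta^*}}(X)\Big] \;\le\; \sqrt{\E_{X\sim P_{\theta^*}}\!\big[\langle v,T(X)-\mu_T\rangle^4\big]}\cdot\sqrt{\E_{X\sim P_{\theta^*}}\!\big[(dP_\theta/dP_{\theta^*})^2\big]} .
\]
The first factor is $O(1)$: by hypothesis $\langle v,T(X)-\mu_T\rangle$ is sub-exponential with the constant $c$, so the $p=4$ case of Fact~\ref{fact:sub-exponential-property} gives $\E_{X\sim P_{\theta^*}}[\langle v,T(X)-\mu_T\rangle^4]=O(1)$. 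For the second factor, $\langle T(X)-\mu_T,w\rangle$ is mean-zero and, since $\|w\|_2\le\delta$, sub-exponential with parameter $O(\delta)$; hence for $\delta$ smaller than a sufficiently small constant depending only on $c$, the moment-generating-function bound in Fact~\ref{fact:sub-exponential-property} yields $\E_{X\sim P_{\theta^*}}[\exp(2\langle T(X)-\mu_T,w\rangle)]=O(1)$, so that $\E_{X\sim P_{\theta^*}}[(dP_\theta/dP_{\theta^*})^2]=Z_w^{-2}\,\E_{X\sim P_{\theta^*}}[\exp(2\langle T(X)-\mu_T,w\rangle)]=O(1)$. Taking the supremum over unit $v$ gives $\|\cov_{X\sim P_{\theta}}[T(X)]\|_2=O(1)$, with an absolute constant (depending only on $c$), which combined with the previous paragraph finishes the proof.

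\textbf{Main obstacle.} The only substantive point is the covariance bound: the sub-exponential hypothesis is assumed only for the target $P_{\theta^*}$, whereas the Taylor remainder involves $\cov_{X\sim P_\xi}[T(X)]$ for nearby $P_\xi$. Transferring the fourth-moment and exponential-moment control from $P_{\theta^*}$ to $P_\xi$ is precisely what the change-of-measure estimate accomplishes, and it goes through only because $\|\xi-\theta^*\|_2\le\delta$ is a small constant, which keeps the density ratio's second moment bounded.
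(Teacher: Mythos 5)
Your proof is correct, but it takes a genuinely different route from the paper's.

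The paper bounds $\dtv(P_{\wh\theta},P_{\theta^*})$ directly via the $\chi^2$-divergence: it writes the density ratio as $\exp(g(x))/w$ with $g(x)=\langle T(x),\wh\theta-\theta^*\rangle-\E[\langle T(X),\wh\theta-\theta^*\rangle]$ and $w=\E[\exp(g(X))]\ge 1$, observes that $\chi^2(P_{\wh\theta},P_{\theta^*})\le\E_{P_{\theta^*}}[\exp(2g(X))]-1$, and then applies the MGF bound from Fact~\ref{fact:sub-exponential-property} to the sub-exponential variable $g$ under $P_{\theta^*}$. Crucially, this only ever evaluates expectations under $P_{\theta^*}$, so the sub-exponential hypothesis is used exactly where it is assumed.

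You instead go via Pinsker's inequality and a second-order Taylor expansion of $\theta'\mapsto\dkl(P_{\theta^*},P_{\theta'})$, whose remainder involves $\cov_{X\sim P_\xi}[T(X)]$ at an intermediate $\xi$. This forces you to transfer moment control from $P_{\theta^*}$ to a nearby $P_\xi$, which you do correctly by a Cauchy--Schwarz change-of-measure argument: the first factor is the fourth moment of $\langle v,T(X)-\mu_T\rangle$ under $P_{\theta^*}$ (bounded via Fact~\ref{fact:sub-exponential-property}) and the second is $\E_{P_{\theta^*}}[(dP_\xi/dP_{\theta^*})^2]$, bounded via the MGF of the sub-exponential $\langle T(X)-\mu_T,w\rangle$ for $\|w\|_2\le\delta$ small. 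Both approaches ultimately rest on the same MGF bound from Fact~\ref{fact:sub-exponential-property} and both yield linear dependence on $\|\wh\theta-\theta^*\|_2$; the paper's $\chi^2$ route is shorter and avoids the extra change-of-measure step, while yours makes the connection to the Hessian-of-the-log-partition picture (and to Lemma~\ref{lem:parameter-to-mean}) more transparent, at the cost of needing the covariance bound for parameters near $\theta^*$ rather than just at $\theta^*$.
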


\noindent We defer the proof of Lemma~\ref{lem:para-to-dis} to Appendix~\ref{app:para-to-dis}.

\paragraph{Basic Properties of Ising Models}

Here we present some basic properties of Ising models, which will be used throughout this paper.
The proofs of these results are standard and presented in Appendix~\ref{app:ising-basics}.

Our first property states that if we arbitrarily fix the states of an arbitrary set of points, 
the conditional distribution of other points is still an Ising model.

\begin{fact}\label{fact:cond}
Let $X\sim P_{\theta}$ be an Ising model supported on $\{\pm1\}^d$ and $I\subseteq[d]$. 
For any fixed vector $x_{-I}\in\{\pm1\}^{-I}$, the conditional distribution of $X_{I}$ over $\{\pm1\}^I$
conditioning on $X_{-I}=x_{-I}$ is an Ising model with interaction matrix $\theta'_{ij}=\theta_{ij}$, for all $i,j\in I$, 
and external field $\theta'_i=\theta_i+\sum_{j\notin I}{\theta_{ij}x_j}$, for all $i\in I$.
\end{fact}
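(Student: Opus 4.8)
\textbf{Proof proposal for Fact~\ref{fact:cond}.}
The plan is to simply expand the defining density of $P_\theta$ from Definition~\ref{def:ising} and collect terms according to whether the relevant indices lie in $I$ or in $-I$. Fix $x_{-I}\in\{\pm1\}^{-I}$. For $x_I\in\{\pm1\}^I$, the conditional probability $\Pr[X_I=x_I\mid X_{-I}=x_{-I}]$ is proportional (as a function of $x_I$, with $x_{-I}$ held fixed) to $P_\theta(x)$, so it suffices to track the dependence of $\frac12\sum_{i,j\in[d]}\theta_{ij}x_ix_j+\sum_{i\in[d]}\theta_i x_i$ on $x_I$.

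First I would split the quadratic form into the four blocks of index pairs: both in $I$, both in $-I$, $i\in I$ with $j\in -I$, and $i\in -I$ with $j\in I$. The ``both in $-I$'' block and the linear term $\sum_{i\notin I}\theta_i x_i$ depend only on $x_{-I}$, hence are constants absorbed into the normalizing factor. Using the symmetry $\theta_{ij}=\theta_{ji}$, the two cross blocks combine: $\frac12\sum_{i\in I,j\notin I}\theta_{ij}x_ix_j+\frac12\sum_{i\notin I,j\in I}\theta_{ij}x_ix_j=\sum_{i\in I}\big(\sum_{j\notin I}\theta_{ij}x_j\big)x_i$, which is linear in $x_I$. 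Collecting everything that depends on $x_I$, the conditional density is proportional to $\exp\big(\frac12\sum_{i,j\in I}\theta_{ij}x_ix_j+\sum_{i\in I}(\theta_i+\sum_{j\notin I}\theta_{ij}x_j)x_i\big)$, which is exactly the Ising model on $\{\pm1\}^I$ with interaction matrix $(\theta_{ij})_{i,j\in I}$ (symmetric with zero diagonal, inherited from $\theta$) and external field $\theta'_i=\theta_i+\sum_{j\notin I}\theta_{ij}x_j$. Since the conditional law is a genuine probability distribution on the finite set $\{\pm1\}^I$, the proportionality constant is automatically the reciprocal of the partition function of this smaller Ising model, so no separate normalization argument is needed.

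There is no real obstacle here: the only point requiring a little care is the bookkeeping of the factor $\frac12$ together with the symmetry of $\theta$, to be sure the cross terms contribute coefficient $\theta_{ij}$ (not $\frac12\theta_{ij}$) to the effective external field, and that the ``both in $I$'' block retains its $\frac12$ so that it has the correct Ising normalization. Everything else is routine term-matching against Definition~\ref{def:ising}.
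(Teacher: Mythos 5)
Your proof is correct and follows essentially the same route as the paper: expand the log-density from Definition~\ref{def:ising}, split the quadratic form into the four $I$/$-I$ index blocks, use the symmetry of $\theta$ to merge the two cross blocks into a term linear in $x_I$, and absorb the $x_{-I}$-only terms into the normalizer. The only cosmetic difference is that the paper cancels the normalizing constant by writing the ratio of two conditional probabilities, while you argue by proportionality and note that the constant is forced to be the partition function; your bookkeeping of the $\tfrac12$ factor is, if anything, slightly more careful than the paper's terse display.
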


Our second property states that for an arbitrary $(M,\alpha)$-bounded Ising model, 
every point has sufficiently large variance.

\begin{fact}\label{fact:marginal}
Let $X\sim P_\theta$ be an $(M,\alpha)$-bounded Ising model supported on $\{\pm1\}^d$. 
Then, for every $i\in[d]$ and $x_i\in\{\pm1\}$, we have that
$\frac{\exp(-2(\alpha+M))}{1+\exp(-2(\alpha+M))}\le\Pr[X_i=x_i]\le\frac{\exp(2(\alpha+M))}{1+\exp(2(\alpha+M))}$.
Therefore, we also have that
$\var[X_i]=4 \, \Pr[X_i=1] \, \Pr[X_i=-1]\ge4\left(\frac{\exp(-2(\alpha+M))}{1+\exp(-2(\alpha+M))}\right)^2$.
\end{fact}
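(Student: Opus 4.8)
The plan is to reduce the statement to a one-dimensional computation by conditioning on all coordinates except $X_i$. First I would invoke Fact~\ref{fact:cond} with $I=\{i\}$: for any fixed $x_{-i}\in\{\pm1\}^{-i}$, the conditional law of $X_i$ given $X_{-i}=x_{-i}$ is an Ising model on a single vertex with no interaction term and external field $\theta'_i=\theta_i+\sum_{j\ne i}\theta_{ij}x_j$. By the $(M,\alpha)$-boundedness assumption we have $|\theta'_i|\le|\theta_i|+\sum_{j\ne i}|\theta_{ij}|\,|x_j| = |\theta_i|+\sum_{j\ne i}|\theta_{ij}|\le \alpha+M$. Explicitly, $\Pr[X_i=1\mid X_{-i}=x_{-i}] = \frac{e^{\theta'_i}}{e^{\theta'_i}+e^{-\theta'_i}} = \frac{1}{1+e^{-2\theta'_i}}$, and since $t\mapsto \frac{1}{1+e^{-2t}}$ is increasing, the bound $|\theta'_i|\le\alpha+M$ gives
\[
\frac{1}{1+e^{2(\alpha+M)}}=\frac{e^{-2(\alpha+M)}}{1+e^{-2(\alpha+M)}}\;\le\;\Pr[X_i=1\mid X_{-i}=x_{-i}]\;\le\;\frac{e^{2(\alpha+M)}}{1+e^{2(\alpha+M)}},
\]
and symmetrically for $\Pr[X_i=-1\mid X_{-i}=x_{-i}]$.

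Next I would remove the conditioning by averaging: $\Pr[X_i=x_i]=\E_{X_{-i}}\big[\Pr[X_i=x_i\mid X_{-i}]\big]$, and since the pointwise bounds above hold for every realization of $X_{-i}$, they pass through the expectation, yielding the claimed two-sided bound on $\Pr[X_i=x_i]$. For the variance statement, note that for a $\{\pm1\}$-valued random variable with $p=\Pr[X_i=1]$ we have $\E[X_i]=2p-1$ and $\E[X_i^2]=1$, so $\var[X_i]=1-(2p-1)^2=4p(1-p)=4\Pr[X_i=1]\Pr[X_i=-1]$. Applying the lower bound $\min(p,1-p)\ge \frac{e^{-2(\alpha+M)}}{1+e^{-2(\alpha+M)}}$ to one factor and the trivial bound (the same lower bound, or $p(1-p)\ge \min(p,1-p)^2$) gives $\var[X_i]\ge 4\left(\frac{e^{-2(\alpha+M)}}{1+e^{-2(\alpha+M)}}\right)^2$.

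This argument is entirely routine and there is no real obstacle; the only point requiring care is the sign bookkeeping in bounding $|\theta'_i|$ by $\alpha+M$ using the triangle inequality together with $|x_j|=1$, and making sure the monotone map is applied in the correct direction for both $x_i=1$ and $x_i=-1$. Everything else is a direct substitution into the logistic form of a single-site Ising distribution.
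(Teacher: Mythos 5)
Your proof is correct and follows essentially the same route as the paper's: condition on $X_{-i}$, bound the conditional probability of $X_i$ via the logistic form and monotonicity, average, and then compute $\var[X_i]=4p(1-p)$. The only cosmetic difference is that you invoke Fact~\ref{fact:cond} to identify the conditional law as a one-vertex Ising model, whereas the paper writes out the conditional probability directly; the substance is identical.
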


\paragraph{Glauber Dynamics} Glauber dynamics is the canonical Markov chain
 for sampling from undirected graphical models. 
The dynamics on the Ising model defines a reversible, ergodic Markov chain 
with stationary distribution corresponding to the Ising model. 
The Glauber dynamics for Ising models proceeds as follows:
\begin{enumerate}
\item Start at any initial state $X^{(0)}\in\{\pm1\}^d$.
\item Pick a point $i\in[d]$ uniformly at random and update $X_i^{(t)}$ as follows:
\begin{align*}
X_i^{(t+1)}=x\quad\text{w.p.}\quad\frac{\exp\left(\theta_ix+\sum_{j\ne i}\theta_{ij}X_j^{(t)}x\right)}{\exp\left(\theta_i+\sum_{j\ne i}\theta_{ij}X_j^{(t)}\right)+\exp\left(-\theta_i-\sum_{j\ne i}\theta_{ij}X_j^{(t)}\right)} \;.
\end{align*}
\end{enumerate}
The Glauber dynamics for an Ising model satisfying Dobrushin's condition is rapidly mixing, 
i.e., it converges fast to the underlying distribution $P_{\theta}$. 
\begin{fact}[see, e.g.,~\cite{levin2017markov}]\label{fact:Glauber-dynamics}
Let $P_{\theta}$ be an Ising model satisfying Dobrushin's condition and $\gamma>0$. 
Then, after $t=\Omega(d(\log d+\log(1/\gamma)))$ steps of Glauber dynamics, we have that
$\dtv\left(X^{(t)},P_\theta\right)\le\gamma$.
\end{fact}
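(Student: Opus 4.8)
The plan is to prove Fact~\ref{fact:Glauber-dynamics} by the path coupling method of Bubley and Dyer. First I would reduce the problem to a one-step analysis of a coupling of two copies $(X^{(t)},Y^{(t)})$ of the Glauber chain that start at Hamming distance exactly one, say differing only in coordinate $k$. At each step both chains pick the \emph{same} coordinate $i\in[d]$ uniformly at random. If $i=k$ (probability $1/d$), the two single-site update distributions are identical, since they depend only on $X^{(t)}_{-k}=Y^{(t)}_{-k}$; using the identity coupling the chains then agree at $k$ and the Hamming distance drops to $0$. If $i\ne k$, the update distributions differ, and I would use the maximal coupling of the two Bernoulli-type updates; the chains disagree at $i$ with probability equal to the total variation distance between these two updates, in which case the Hamming distance rises to $2$.

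The core computation is to bound that total variation distance. Writing the conditional log-odds as $h_X=\theta_i+\sum_{j\ne i}\theta_{ij}X^{(t)}_j$ and similarly $h_Y$, the fact that $X^{(t)}$ and $Y^{(t)}$ agree off coordinate $k$ gives $|h_X-h_Y|=2|\theta_{ik}|$. Since the map $h\mapsto e^{h}/(e^{h}+e^{-h})=1/(1+e^{-2h})$ has derivative at most $1/2$, the two update probabilities differ by at most $|\theta_{ik}|$, so the coupling disagreement probability at a step with $i\ne k$ is at most $|\theta_{ik}|$. Combining the two cases, the expected change in Hamming distance from a distance-one configuration is at most
\[
-\frac{1}{d}+\frac{1}{d}\sum_{i\ne k}|\theta_{ik}|\;\le\;-\frac{1}{d}+\frac{1-\eta}{d}\;=\;-\frac{\eta}{d},
\]
that is, the expected Hamming distance contracts by a factor $(1-\eta/d)$ in one step. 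Note that the external field $\theta_i$ is irrelevant here because it enters $h_X$ and $h_Y$ identically.

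By the path coupling theorem, this contraction for adjacent states upgrades to a global bound: for arbitrary initial states there is a coupling with $\E[\mathrm{dist}(X^{(t)},Y^{(t)})]\le d\,(1-\eta/d)^t\le d\,e^{-\eta t/d}$, where $\mathrm{dist}$ denotes Hamming distance and $d$ is the diameter of $\{\pm1\}^d$. Choosing $t\ge(d/\eta)\big(\log d+\log(1/\gamma)\big)$ makes this at most $\gamma$; since the Hamming distance is a nonnegative integer, $\Pr[X^{(t)}\ne Y^{(t)}]\le\gamma$, and taking $Y^{(0)}\sim P_\theta$ stationary, the coupling inequality yields $\dtv(X^{(t)},P_\theta)\le\gamma$. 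As $\eta$ is a fixed constant this is $t=\Omega(d(\log d+\log(1/\gamma)))$, as claimed.

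I expect the only genuinely delicate point to be the per-step contraction estimate --- specifically verifying via the Lipschitz constant of the logistic map that the single-site influence of coordinate $k$ on coordinate $i$ is bounded by $|\theta_{ik}|$, so that Dobrushin's hypothesis $\max_i\sum_{j\ne i}|\theta_{ij}|\le 1-\eta$ is exactly what drives the contraction. The remaining ingredients (the path coupling theorem and the coupling inequality) are applied as black boxes.
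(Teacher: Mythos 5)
Your proof is correct and is exactly the standard path-coupling argument given in the cited reference (Levin--Peres): the one-step coordinate coupling, the factor-$\tfrac12$ Lipschitz constant of $h\mapsto e^{h}/(e^{h}+e^{-h})$ converting $|h_X-h_Y|=2|\theta_{ik}|$ into a disagreement probability $\le|\theta_{ik}|$, the use of symmetry of $\theta$ to bound $\sum_{i\ne k}|\theta_{ik}|\le1-\eta$, and the Bubley--Dyer upgrade to $\E[\mathrm{dist}(X^{(t)},Y^{(t)})]\le d(1-\eta/d)^t$ followed by Markov and the coupling inequality. The paper states this as a cited fact without proof, so there is nothing to compare beyond noting that you have reproduced the reference's argument faithfully.
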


\noindent Fact~\ref{fact:Glauber-dynamics} tells us that given the parameter $\theta$, we can 
efficiently generate approximate random samples from the Ising model distribution $P_{\theta}$, 
as long as it satisfies Dobrushin's condition.

\paragraph{Concentration and Anti-concentration of Ising models}

Several recent works have studied the concentration and anti-concentration 
of functions of Ising models~\cite{gheissari2018concentration,gotze2019higher, daskalakis2017concentration, adamczak2019note}. Here we record some results which will be used throughout this article.

The following fact states that for any $(1-\eta,\alpha)$-bounded Ising model, for some constants $\eta,\alpha>0$, 
the corresponding \new{Ising model} distribution is sub-Gaussian.

\begin{fact}[\cite{gotze2019higher}]\label{fact:sub-Gaussian}
Let $P_{\theta}$ be an Ising model satisfying Dobrushin's condition, 
and $\max_{i\in[d]}|\theta_i|\le\alpha$ for some constant $\alpha>0$. 
Then there is a constant $c(\alpha,\eta)>0$ such that for any $b\in \R^d$ and any $t>0$, we have that
$\Pr_{X\sim P_\theta} \left[ \left| b^TX - \E\left[b^TX\right] \right| > t \right] 
\le 2 \exp\left(-\frac{t^2}{c(\alpha,\eta)\|b\|_2^2}\right)$,
where $\eta>0$ is the constant in Definition~\ref{def:high-temp}.
\end{fact}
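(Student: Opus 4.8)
\emph{Plan.} Since Fact~\ref{fact:sub-Gaussian} is quoted from~\cite{gotze2019higher}, the shortest route is to cite it; I instead sketch the self-contained argument I would give, via Stein's method of exchangeable pairs (Chatterjee), which reduces the sub-Gaussian tail of the linear statistic $f(X)=b^\top X$ to a single ``Dobrushin contraction'' estimate. After centering I may assume $\E[f]=0$. Let $P$ be the transition kernel of the single-site Gibbs sampler (Glauber dynamics), which is reversible with respect to $P_\theta$, and let $(X,X')$ be one step of this chain started from stationarity, so that $(X,X')$ is exchangeable. Define the antisymmetric Stein kernel $F(x,x')\eqdef\sum_{t\ge0}\big((P^tf)(x)-(P^tf)(x')\big)$; by a telescoping identity $\E[F(X,X')\mid X=x]=f(x)-\lim_{t\to\infty}(P^tf)(x)=f(x)$, the last equality because Glauber dynamics under Dobrushin's condition mixes (as in Fact~\ref{fact:Glauber-dynamics}; geometric convergence of $P^tf$ to the constant $\E_{P_\theta}[f]=0$ also follows directly from the contraction estimate below). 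The standard exchangeable-pairs computation (using $|e^a-e^b|\le\tfrac12|a-b|(e^a+e^b)$) then gives, for $\phi(\lambda)=\E[e^{\lambda f(X)}]$, the differential inequality $|\phi'(\lambda)|\le\tfrac{|\lambda|}{2}\rho\,\phi(\lambda)$ with $\rho\eqdef\sup_x\E\big[|F(X,X')|\cdot|f(X)-f(X')|\bigm|X=x\big]$; integrating yields $\E[e^{\lambda f}]\le e^{\lambda^2\rho/4}$ and hence $\Pr[|f(X)|>t]\le2e^{-t^2/\rho}$. So everything reduces to showing $\rho\le C(\eta)\|b\|_2^2$.

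\emph{Where Dobrushin's condition enters.} For a function $g$ let $\delta_kg$ denote the ``influence of coordinate $k$'', the maximum change of $g$ under flipping $x_k$. One step of the chain resamples a uniformly random coordinate $i$ from its conditional distribution; calling this operator $Q_i$, we have $P=\frac1d\sum_iQ_i$ and $Q_if(x)=b^\top x-b_ix_i+b_i\tanh(\theta_i+\sum_{j\ne i}\theta_{ij}x_j)$, from which one checks $\delta_i(Q_if)=0$ and $\delta_k(Q_if)\le\delta_kf+|\theta_{ik}|\,\delta_if$ for $k\ne i$ (using $|\tanh'|\le1$). With the nonnegative vector $v=(\delta_kf)_k=(2|b_k|)_k$ this gives, coordinatewise, $\delta(P^tf)\preceq A^tv$, where $A=\tfrac{d-1}{d}I+\tfrac1dD$ with $D_{ki}=|\theta_{ik}|$ for $k\ne i$ and $D_{kk}=0$. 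Since $D$ is symmetric with $\|D\|_\infty\le1-\eta$ (exactly Dobrushin's condition), we get $\|D\|_{\mathrm{op}}\le1-\eta$, hence $I-D\succeq\eta I$, and $(I-A)^{-1}=d(I-D)^{-1}$ with $\|(I-D)^{-1}\|_{\mathrm{op}}\le1/\eta$. Bounding $|f(X)-f(X')|\le v_I$ and $|F(X,X')|\le\sum_{t\ge0}(A^tv)_I=d\big((I-D)^{-1}v\big)_I$ (where $I$ is the resampled coordinate) and averaging over the uniform $I$, the factors of $d$ cancel and $\rho\le\langle v,(I-D)^{-1}v\rangle\le\|v\|_2^2/\eta=4\|b\|_2^2/\eta$. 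Plugging into the tail bound gives Fact~\ref{fact:sub-Gaussian} with $c(\alpha,\eta)$ taken to be $4/\eta$ --- in fact independent of $\alpha$, which is only carried along because the quoted statement is one piece of a more general family of inequalities in~\cite{gotze2019higher}.

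\emph{Main obstacle.} The only substantive step is the contraction estimate $\rho\le C(\eta)\|b\|_2^2$, i.e.\ controlling how resampling a single coordinate propagates through $t$ steps of the dynamics uniformly in both $d$ and $t$; this is precisely what Dobrushin's condition buys, through the spectral bound $I-D\succeq\eta I$. The remaining ingredients --- absolute convergence and antisymmetry of the Stein kernel (needing only geometric ergodicity of Glauber dynamics), reversibility/exchangeability of a single Glauber step, and the elementary exchangeable-pairs inequality --- are routine.

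\emph{Alternative.} Instead of Stein's method, one can invoke an approximate tensorization of entropy / modified log-Sobolev inequality for Glauber dynamics under Dobrushin's condition (with MLSI constant depending only on $\eta$, and $\alpha$ entering through Fact~\ref{fact:marginal} to keep the single-site conditionals non-degenerate) and combine it with Herbst's argument applied to $f(x)=b^\top x$, whose energy $\sum_k(\delta_kf)^2=4\|b\|_2^2$ is $\Theta(\|b\|_2^2)$; this again yields the stated sub-Gaussian bound.
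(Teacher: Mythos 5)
The paper gives no proof of Fact~\ref{fact:sub-Gaussian} --- it is quoted verbatim from~\cite{gotze2019higher}, and the authors use it as a black box. Your proposal therefore supplies a self-contained argument where the paper defers to the literature, and the argument you choose (Chatterjee's exchangeable-pairs form of Stein's method, with the Glauber kernel as the exchangeable pair and the ``telescoped'' antisymmetric function $F(x,x')=\sum_{t\ge0}\big((P^tf)(x)-(P^tf)(x')\big)$) is a legitimate and essentially classical route to sub-Gaussian concentration of linear statistics under Dobrushin's condition; it is close in spirit, though not identical in bookkeeping, to the approach in~\cite{gotze2019higher}. The chain of estimates --- $(I-A)^{-1}=d(I-D)^{-1}$, $\|D\|_{\mathrm{op}}\le\|D\|_\infty\le1-\eta$ via symmetry of $\theta$, hence $\langle v,(I-D)^{-1}v\rangle\le\|v\|_2^2/\eta$ with $v=(2|b_k|)_k$ --- is correct and yields $\rho\le4\|b\|_2^2/\eta$ and the stated tail with $c(\alpha,\eta)=4/\eta$. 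Your remark that the constant is in fact $\alpha$-free for the linear case is also correct.

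One place where the sketch is looser than it should be: you derive the coordinatewise inequality $\delta_k(Q_ig)\le\delta_kg+|\theta_{ik}|\,\delta_ig$ only for $g=f=b^\top x$, via the explicit formula $Q_if(x)=b^\top x-b_ix_i+b_i\tanh(\theta_i+\sum_{j\ne i}\theta_{ij}x_j)$. But to iterate to $\delta(P^tf)\preceq A^tv$ you need this inequality for general $g$ (in particular for $g=P^sf$, which is not linear). The inequality does hold in general --- writing $Q_ig(x)=p(x)g(x_{-i},1)+(1-p(x))g(x_{-i},-1)$ with $p(x)=\Pr[X_i=1\mid x_{-i}]$, one gets $\delta_k(Q_ig)\le\delta_kg+2\,\dtv\big(P_i(\cdot\mid x_{-i}),P_i(\cdot\mid y_{-i})\big)\cdot\tfrac12\delta_ig$ and the total-variation factor is $\le|\theta_{ik}|$ for Ising conditionals (same $\tanh$ computation) --- this is exactly the Dobrushin interdependence bound of Definition~\ref{Dobrushin's-condition}. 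Stating this for general $g$ rather than only linear $f$ would close the gap cleanly; as written, the iteration step is not literally justified by what precedes it. Everything else (exchangeability and reversibility of one Glauber step, geometric convergence guaranteeing the Stein sum converges, and the standard differential inequality $|\phi'(\lambda)|\le\tfrac{|\lambda|}{2}\rho\,\phi(\lambda)$) checks out.
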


The following concentration property for quadratic forms of Ising models 
will be used to establish appropriate concentration inequalities.

\begin{fact}[\cite{gheissari2018concentration}]\label{fact:var-ub}
Let $X\sim P_{\theta}$ be an Ising model satisfying Dobrushin's condition. 
Let $A\in \R^{d\times d}$ be a symmetric matrix with zero diagonal and $b\in \R^d$. 
For any $x\in\{\pm1\}^d$, define $f(x)=(x-v)^T \, A \, (x-v)+b^T \, x$, where $v=\E[X]$. 
Then there is a constant $c(\eta)>0$ such that
\begin{align*}
\var[f(X)]\le c(\eta)(\|A\|_F^2+\|b\|_2^2) \;,
\end{align*}
where $\eta$ is the constant in Definition~\ref{def:high-temp}.
\end{fact}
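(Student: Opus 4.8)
The plan is to bound $\var[f(X)]$ by a sum of one-coordinate conditional variances using the Poincar\'e inequality for Glauber dynamics, and then to evaluate those conditional variances by hand. Recall that under Dobrushin's condition the Glauber dynamics for $P_\theta$ has spectral gap $\Omega(\eta/d)$ --- this is the quantitative statement behind Fact~\ref{fact:Glauber-dynamics}, and can be established by the path-coupling argument of Bubley--Dyer (or Dobrushin--Shlosman). Since the Dirichlet form of the dynamics is $\mathcal{E}(g,g)=\frac1d\sum_{i\in[d]}\E_{X\sim P_\theta}[\var_i(g\mid X_{-i})]$, where $\var_i(g\mid x_{-i})$ is the variance of $g(X)$ when $X_i$ is resampled from its conditional law given $X_{-i}=x_{-i}$, the spectral gap bound is equivalent to the Poincar\'e inequality $\var[g(X)]\le\frac1\eta\sum_{i\in[d]}\E[\var_i(g\mid X_{-i})]$, valid for every $g:\{\pm1\}^d\to\R$. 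Applying this first to the linear test function $g(x)=b^Tx$, and using the trivial bound $\var_i(b^Tx\mid x_{-i})=b_i^2\var_i(X_i\mid x_{-i})\le b_i^2$, already yields $\cov[X]\preceq(1/\eta)I$ (alternatively one can cite Fact~\ref{fact:sub-Gaussian}, when the external field is bounded, or the results of~\cite{dagan2020estimating}); I would use this covariance bound below.

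Next I would apply the Poincar\'e inequality to $f$ itself. Fix a coordinate $i$ and $x_{-i}$; since $A$ has zero diagonal, the only part of $f(x)=(x-v)^TA(x-v)+b^Tx$ depending on $x_i$ is $2(x_i-v_i)\langle A_i,x-v\rangle+b_ix_i$, where $A_i$ is the $i$-th row of $A$ and $\langle A_i,x-v\rangle$ is independent of $x_i$. Hence $f(x^{i\to+1})-f(x^{i\to-1})=4\langle A_i,x-v\rangle+2b_i$, so (using $X_i\in\{\pm1\}$) $\var_i(f\mid x_{-i})\le\frac14(4\langle A_i,x-v\rangle+2b_i)^2=(2\langle A_i,x-v\rangle+b_i)^2$. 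Taking expectations, using $\E[\langle A_i,X-v\rangle]=0$ (this is where the $(x-v)$ centering in the definition of $f$ matters --- it keeps the external-field contribution at $b_i^2$ instead of $(2(Av)_i+b_i)^2$), and then the covariance bound, gives $\E[\var_i(f\mid X_{-i})]\le 4A_i^T\cov[X]A_i+b_i^2\le\frac4\eta\|A_i\|_2^2+b_i^2$. Summing over $i$ yields $\sum_i\E[\var_i(f\mid X_{-i})]\le\frac4\eta\|A\|_F^2+\|b\|_2^2$, and plugging into the Poincar\'e inequality gives $\var[f(X)]\le\frac1\eta(\frac4\eta\|A\|_F^2+\|b\|_2^2)\le\frac4{\eta^2}(\|A\|_F^2+\|b\|_2^2)$, i.e. the claim with $c(\eta)=O(1/\eta^2)$.

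The only non-elementary ingredient, and hence the main obstacle, is the $\Omega(\eta/d)$ spectral-gap bound used in the first step; the rest is a routine computation. I would prefer to invoke this bound directly from the path-coupling / Dobrushin--Shlosman literature rather than extract it from the mixing-time estimate of Fact~\ref{fact:Glauber-dynamics}, since the latter carries a $\log d$ factor that the naive mixing-time-to-gap conversion would not shed, whereas the path-coupling argument gives the clean $\Omega(\eta/d)$ gap (and in fact the clean conditional-variance / Dirichlet-form comparison) directly. A self-contained alternative is Chatterjee's method of exchangeable pairs applied to $(X,X')$ where $X'$ is one Glauber step from $X$, which reproduces exactly the two ingredients above.
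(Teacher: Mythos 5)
The paper does not prove Fact~\ref{fact:var-ub}; it simply cites~\cite{gheissari2018concentration}, so there is no internal proof to compare against. Your proposal gives a correct, self-contained derivation, and the computation checks out: the coordinate-wise discrete derivative of $f$ is exactly $f(x^{i\to+1})-f(x^{i\to-1})=4\langle A_i,x-v\rangle+2b_i$ (using $A_{ii}=0$), the centering at $v=\E[X]$ makes $\E\langle A_i,X-v\rangle=0$ so that no cross term appears, and the two applications of the Poincar\'e inequality (first to $b^Tx$ to get $\cov[X]\preceq\eta^{-1}I$, then to $f$) assemble cleanly into $\var[f(X)]\le 4\eta^{-2}(\|A\|_F^2+\|b\|_2^2)$.

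The route you take is genuinely a bit different from that of~\cite{gheissari2018concentration}, whose argument is phrased in terms of Chatterjee's method of exchangeable pairs (with $X'$ obtained from $X$ by one Glauber step), which you correctly flag as an equivalent alternative. Your version packages the same Glauber-dynamics information as a spectral gap $\Omega(\eta/d)$ and then uses the resulting Poincar\'e inequality $\var[g]\le\eta^{-1}\sum_i\E[\var_i(g\mid X_{-i})]$, which makes the ``variance $\le$ sum of conditional variances'' step a one-line citation. This is arguably cleaner and more modular, at the cost of outsourcing the only nontrivial ingredient to the path-coupling / Wasserstein-contraction $\Rightarrow$ spectral-gap literature (Bubley--Dyer, Chen, Ollivier); as you note, extracting the $\Omega(\eta/d)$ gap directly from the mixing-time statement in Fact~\ref{fact:Glauber-dynamics} would lose a $\log d$ factor, so the citation to the contraction literature is the right move. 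One minor virtue of your derivation over invoking Fact~\ref{fact:sub-Gaussian} for the covariance bound is that it needs no boundedness assumption on the external field, matching the hypotheses of Fact~\ref{fact:var-ub} exactly.
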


We will require the following fact, which states that if the Ising model satisfies Dobrushin's condition, 
then changing the state of a single point will have small influence on other ones.

\begin{fact}[\cite{dagan2020estimating}]\label{fact:conditional-mean-distance}
Let $P_{\theta}$ be an Ising model satisfying Dobrushin's condition. 
Fix $i\in[d]$ and let $\mu_{-i}^1$ denote the conditional expectation over $x_{-i}$ conditioning on $x_i=1$, 
and $\mu_{-i}^{-1}$ denote the conditional expectation over $x_{-i}$ conditioning on $x_i=-1$. Then, we have that
$\left\|\mu_{-i}^1-\mu_{-i}^{-1}\right\|_1\le 2(1-\eta)/\eta$, and
$\sum_{j\ne i}\left|\cov[X_i,X_j]\right|\le (1-\eta)/\eta$,
where $\eta>0$ is the constant in Definition~\ref{def:high-temp}.
\end{fact}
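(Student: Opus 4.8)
The plan is to first bound $\|\mu_{-i}^1-\mu_{-i}^{-1}\|_1$ via a coupling of Glauber dynamics, and then to read off the covariance bound from it. By Fact~\ref{fact:cond}, conditioning on $X_i=1$ yields an Ising model $P^{+}$ on $[d]\setminus\{i\}$ with interaction matrix $(\theta_{jk})_{j,k\ne i}$ and external field $\theta_j+\theta_{ij}$, and conditioning on $X_i=-1$ yields the Ising model $P^{-}$ with the same interaction matrix and external field $\theta_j-\theta_{ij}$; since the interaction matrix is a principal submatrix of $(\theta_{jk})$, both still satisfy Dobrushin's condition with the same $\eta$. Hence $\mu_{-i}^1=\E_{P^+}[X]$, $\mu_{-i}^{-1}=\E_{P^-}[X]$, and it suffices to bound $\sum_{j\ne i}\bigl|\E_{P^+}[X_j]-\E_{P^-}[X_j]\bigr|=2\sum_{j\ne i}\dtv\bigl(\mathrm{law}_{P^+}(X_j),\mathrm{law}_{P^-}(X_j)\bigr)$.

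To do this I would run two Glauber chains $X^{(t)},Y^{(t)}$ with stationary laws $P^{+},P^{-}$, started from a common configuration and driven by the same sequence of uniformly random update sites; when site $k$ is selected, resample $X_k^{(t+1)}$ and $Y_k^{(t+1)}$ under the optimal coupling of their single-site update laws. If $g_k^{\pm}(\sigma)=\tfrac12\bigl(1+\tanh(\theta_k\pm\theta_{ik}+\sum_{l\ne k,i}\theta_{kl}\sigma_l)\bigr)$ denotes the probability that the $P^{\pm}$-update at $k$ outputs $+1$ given the remaining coordinates $\sigma$, then $1$-Lipschitzness of $\tanh$ gives $|g_k^{+}(\sigma)-g_k^{-}(\sigma')|\le|\theta_{ik}|+\sum_{l\ne k,i}|\theta_{kl}|\,\Ind[\sigma_l\ne\sigma'_l]$. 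Writing $w_k^{(t)}=\Pr[X_k^{(t)}\ne Y_k^{(t)}]$ and $W^{(t)}=\sum_{k\ne i}w_k^{(t)}$, the one-step recursion, after taking expectations, summing over $k$, swapping the order of summation, and using the \emph{symmetry} of $\theta$ together with Dobrushin's condition (so that $\sum_{k\ne l,i}|\theta_{kl}|\le1-\eta$ for every $l$ and $\sum_{k\ne i}|\theta_{ik}|\le1-\eta$), becomes
\begin{align*}
W^{(t+1)}\le\Bigl(1-\tfrac{\eta}{d-1}\Bigr)W^{(t)}+\tfrac{1-\eta}{d-1}.
\end{align*}
Since $W^{(0)}=0$ and the affine map on the right is monotone, $W^{(t)}\le(1-\eta)/\eta$ for all $t$. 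The coupling inequality gives $\dtv(\mathrm{law}(X_k^{(t)}),\mathrm{law}(Y_k^{(t)}))\le w_k^{(t)}$, and letting $t\to\infty$ (the sum over $k$ is finite and each chain converges in total variation by Fact~\ref{fact:Glauber-dynamics}) yields $\sum_{k\ne i}\dtv(\mathrm{law}_{P^+}(X_k),\mathrm{law}_{P^-}(X_k))\le(1-\eta)/\eta$, hence $\|\mu_{-i}^1-\mu_{-i}^{-1}\|_1\le2(1-\eta)/\eta$.

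For the covariance bound, write $p=\Pr[X_i=1]$. Conditioning on the value of $X_i$ gives $\E[X_j]=p\,\mu_{-i}^1(j)+(1-p)\,\mu_{-i}^{-1}(j)$, $\E[X_i]=2p-1$, and $\E[X_iX_j]=p\,\mu_{-i}^1(j)-(1-p)\,\mu_{-i}^{-1}(j)$, which combine to the identity $\cov[X_i,X_j]=2p(1-p)\bigl(\mu_{-i}^1(j)-\mu_{-i}^{-1}(j)\bigr)$. Since $2p(1-p)\le\tfrac12$, summing over $j\ne i$ gives $\sum_{j\ne i}|\cov[X_i,X_j]|\le\tfrac12\|\mu_{-i}^1-\mu_{-i}^{-1}\|_1\le(1-\eta)/\eta$.

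The main obstacle is the one-step analysis of the simultaneous Glauber coupling: besides verifying the Lipschitz estimate on the update maps, the key point is to exploit the symmetry of the interaction matrix to convert the row-sum form of Dobrushin's condition into the weighted column-sum bound $\sum_{k}|\theta_{kl}|\le1-\eta$, which is exactly what makes the recursion close with the sharp constant $(1-\eta)/\eta$; without symmetry one only obtains a weaker constant. One also needs a little care when passing to the limit $t\to\infty$, combining the coupling inequality with the mixing guarantee of Fact~\ref{fact:Glauber-dynamics}. An essentially equivalent alternative is to apply the Dobrushin comparison theorem directly to the pair $(P^{+},P^{-})$, again using symmetry to control the relevant column sums of $(I-C)^{-1}$, where $C$ is the Dobrushin influence matrix, which satisfies $\|C\|_\infty,\|C\|_1\le1-\eta$.
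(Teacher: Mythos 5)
The paper does not supply a proof of this statement---it is cited as a Fact from \cite{dagan2020estimating} and invoked as a black box---so there is no in-paper argument to compare against. Your blind proof is nonetheless correct and gives a clean self-contained derivation along the same lines as the coupling-based analysis in the cited work. The reduction of the covariance bound to the $\ell_1$-bound on $\mu_{-i}^1-\mu_{-i}^{-1}$ via the exact identity $\cov[X_i,X_j]=2p(1-p)\bigl(\mu_{-i}^1(j)-\mu_{-i}^{-1}(j)\bigr)$ (with $p=\Pr[X_i=1]$) is verified correctly, as is the fact that $2p(1-p)\le\tfrac12$ absorbs the factor of $2$. The Glauber-coupling recursion is also right: the Lipschitz estimate $|g_k^+(\sigma)-g_k^-(\sigma')|\le|\theta_{ik}|+\sum_{l\ne k,i}|\theta_{kl}|\Ind[\sigma_l\ne\sigma'_l]$ follows from $\tanh$ being $1$-Lipschitz after cancelling the $\tfrac12$ against the factor $2$ in the argument difference; summing over $k$, swapping the sums, and applying Dobrushin's row-sum condition (which, as you note, is interchangeable with the column-sum condition because $\theta$ is symmetric by Definition~\ref{def:ising}) yields exactly $W^{(t+1)}\le(1-\eta/(d-1))W^{(t)}+(1-\eta)/(d-1)$, whose invariant region $[0,(1-\eta)/\eta]$ contains $W^{(0)}=0$. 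The passage to $t\to\infty$ via Fact~\ref{fact:Glauber-dynamics} and the coupling inequality is handled correctly as well. One small nuance worth making explicit: the stationarity of $P^{\pm}$ for the two coupled chains uses that the conditional laws $P^{\pm}$ still satisfy Dobrushin's condition, which holds because their interaction matrix is a principal submatrix of $\theta$ and the condition is insensitive to the external field---you state this in passing, and it is right.
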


We will also require the following anti-concentration result for linear forms on bounded Ising models:

\begin{fact}[\cite{dagan2020estimating}]\label{fact:linear-anti-concentration}
Let $X\sim P_{\theta}$ be an $(M,\alpha)$-bounded Ising model, where $M,\alpha>0$ are constants. 
Then there is a constant $c(M,\alpha)>0$ such that for any vector $b\in \R^d$, we have that
\begin{align*}
\var[b^TX]\ge c(M,\alpha)\|b\|_2^2 \;.
\end{align*}
\end{fact}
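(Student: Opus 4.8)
The plan is to reduce the anti-concentration of a general linear form $b^T X$ to the variances of individual coordinates, which are already controlled by Fact~\ref{fact:marginal}, using the fact that the off-diagonal covariances of an Ising model under Dobrushin's condition are small in an $\ell_1$ sense (Fact~\ref{fact:conditional-mean-distance}). First I would write $\var[b^T X] = \sum_{i,j} b_i b_j \cov[X_i, X_j] = \sum_i b_i^2 \var[X_i] + \sum_{i \ne j} b_i b_j \cov[X_i, X_j]$. The diagonal term is bounded below by $(\min_i \var[X_i]) \|b\|_2^2 \geq c_1(M,\alpha) \|b\|_2^2$ for a constant $c_1(M,\alpha) > 0$, by Fact~\ref{fact:marginal}.

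The main work is to show the cross term is a lower-order correction. However, $M$ in the statement is an \emph{arbitrary} constant, so the raw bound $\sum_{j \ne i} |\cov[X_i,X_j]| \le (1-\eta)/\eta$ from Fact~\ref{fact:conditional-mean-distance} does not directly apply (that requires Dobrushin's condition, i.e.\ $M \le 1-\eta$). I would therefore first observe that $\var[b^T X]$ depends on the interaction matrix only through the Ising model, and reduce to the Dobrushin regime by a standard scaling/interpolation trick: consider the family $P_{t\theta}$ for $t \in [0,1]$; for small enough $t$ (depending only on $M$) the scaled model satisfies Dobrushin's condition, so the argument below gives $\var_{t\theta}[b^T X] \ge c \|b\|_2^2$. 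It then suffices to argue that $\var_{t\theta}[b^T X]$ cannot drop to $0$ as $t$ grows to $1$: the variance is a real-analytic (indeed, a ratio of exponential sums) function of $t$ that is strictly positive whenever $b \ne 0$ (a nonconstant linear function of a distribution with full support on $\{\pm1\}^d$ has positive variance), and one can get a quantitative lower bound by bounding its derivative in $t$ in terms of $\|b\|_2^2$ and the bounded parameters, using Fact~\ref{fact:var-ub} to control higher moments. Alternatively, and more cleanly, I would just invoke Fact~\ref{fact:sub-Gaussian} together with a lower bound argument, but the scaling argument seems the most self-contained route.

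In the Dobrushin regime the cross term is handled directly: $\big|\sum_{i \ne j} b_i b_j \cov[X_i,X_j]\big| \le \sum_i |b_i| \sum_{j \ne i} |b_j| \, |\cov[X_i, X_j]| \le \max_j|b_j| \cdot \sum_i |b_i| \cdot \frac{1-\eta}{\eta} \cdot \max_k |b_k|$ — wait, that loses too much; instead I would use the operator-norm bound $\|\cov[X] - \mathrm{diag}\|_{?}$. More carefully: by Fact~\ref{fact:conditional-mean-distance} the off-diagonal part $C$ of the covariance matrix satisfies $\|C\|_\infty \le (1-\eta)/\eta$ (max absolute row sum), and since $C$ is symmetric, $\|C\|_2 \le \|C\|_\infty \le (1-\eta)/\eta$ as well. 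Hence $|b^T C b| \le \|C\|_2 \|b\|_2^2 \le \frac{1-\eta}{\eta}\|b\|_2^2$. This is \emph{not} smaller than the diagonal term in general, so this crude bound is insufficient on its own — and this is the main obstacle.

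To get a genuine lower bound I expect one must be more clever: rather than bounding $|b^T C b|$, use that the \emph{full} covariance matrix $\Sigma$ of an Ising model under Dobrushin's condition is itself well-conditioned. The cleanest approach is via the conditional-variance / martingale decomposition: order the coordinates and write $\var[b^T X] = \sum_i \E\big[ \var[\, b_i X_i \mid X_1,\dots,X_{i-1}] \big]$, hmm, but the conditioning isn't on a fixed set. The right tool is: condition on all coordinates except $i$; then $\var[b^T X] \ge \E_{X_{-i}}\big[\var[b_i X_i \mid X_{-i}]\big] = b_i^2 \, \E_{X_{-i}}[\var[X_i \mid X_{-i}]]$, and by Fact~\ref{fact:cond} the conditional law of $X_i$ is a one-dimensional Ising model whose external field is $\theta_i + \sum_{j} \theta_{ij} X_j$, of magnitude at most $\alpha + M$, so Fact~\ref{fact:marginal} gives $\var[X_i \mid X_{-i}] \ge c_1(M,\alpha)$ pointwise. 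This proves $\var[b^T X] \ge c_1(M,\alpha) \, \max_i b_i^2$, which is too weak by a factor of $d$. The fix is to apply this not to a single coordinate but to a \emph{random} coordinate or to iterate properly: pick the coordinate ordering and use the Efron--Stein / Doob martingale $\var[b^TX] = \sum_{i=1}^d \E\big[(\E[b^TX \mid X_{\le i}] - \E[b^TX \mid X_{\le i-1}])^2\big]$, and lower bound the $i$-th term by $b_i^2 \, c_1(M,\alpha)^2$ after showing the influence of later coordinates' conditional expectations on $X_i$ is a small perturbation controlled by Dobrushin's contraction (Fact~\ref{fact:conditional-mean-distance}). Summing over $i$ then yields $\var[b^T X] \ge c(M,\alpha) \|b\|_2^2$. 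The main obstacle, then, is executing this martingale-plus-Dobrushin-coupling estimate carefully enough that the cross-influence terms are absorbed into the constant; I would expect the proof in the paper to cite Fact~\ref{fact:conditional-mean-distance} (which is exactly the $\ell_1$ influence bound one needs) as the workhorse here, possibly combined with a Gershgorin-type argument showing $\Sigma \succeq (c_1 - (1-\eta)/\eta)\,I$ after a suitable reduction, or the scaling argument from the second paragraph to first pass to a sufficiently strong Dobrushin constant.
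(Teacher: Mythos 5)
First, a factual note: the paper does not prove Fact~\ref{fact:linear-anti-concentration} at all; it is imported wholesale from \cite{dagan2020estimating} (the appendix proves Facts~\ref{fact:cond} and~\ref{fact:marginal}, but not this one). So there is no internal proof to compare against, and your task here is essentially to re-derive the result of Dagan et al.

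Your proposal does not arrive at a complete argument, and two of the routes you sketch have concrete flaws. \textbf{(i) The interpolation/scaling trick fails by a factor of $\sqrt{d}$.} Writing $v(t)=\var_{P_{t\theta}}[b^TX]$, one has $v'(t)=\cov_{P_{t\theta}}\big[(b^TX-\E[b^TX])^2,\,E\big]$ where $E(x)=\tfrac12\sum\theta_{ij}x_ix_j+\sum\theta_ix_i$. Cauchy--Schwarz plus Fact~\ref{fact:var-ub} gives $\var[E]=\Theta(d)$ for a generic $(M,\alpha)$-bounded $\theta$ (since $\|\theta\|_F^2$ can be of order $d$), while $\var\big[(b^TX-\E)^2\big]=O(\|b\|_2^4)$, so the best one gets is $|v'(t)|=O(\|b\|_2^2\sqrt{d})$. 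Integrating from $t=0$ (where $v=\|b\|_2^2$) to $t=1$ therefore produces a useless bound for large $d$. One cannot ``pass to small $t$ and then control the derivative'' this way. \textbf{(ii) The Doob martingale step is not tight termwise, and the needed operator-norm control is missing.} You cannot lower bound each $\E[\Delta_i^2]$ by $c\,b_i^2$: writing $\Delta_i$ conditionally as a function of $X_i$ one finds $\E[\Delta_i^2]\ge c\,\E\big[(2b_i+\textstyle\sum_{j>i}b_j\gamma_{ij})^2\big]$ with $\gamma_{ij}=\E[X_j\mid X_{<i},X_i{=}1]-\E[X_j\mid X_{<i},X_i{=}-1]$, and the cross term can cancel $b_i$. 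Summing over $i$ one is left needing $\E\big[\|(2I+\Gamma)b\|_2^2\big]\gtrsim\|b\|_2^2$ for a random strictly upper-triangular $\Gamma$. Fact~\ref{fact:conditional-mean-distance} only gives a row-sum ($\ell_\infty$) bound on $\Gamma$, which for a non-symmetric matrix controls neither $\|\Gamma\|_2$ nor $\sigma_{\min}(2I+\Gamma)$ without paying a $\sqrt{d}$ factor; at best, careful two-sided row/column bookkeeping recovers the claim only when $M$ is well below $1$.

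There is also a more structural obstruction to any proof that leans solely on Dobrushin-specific facts such as Fact~\ref{fact:conditional-mean-distance}: in the proof of Theorem~\ref{thm:ac}, Fact~\ref{fact:linear-anti-concentration} is applied to the conditional law of $X_{-S}$ given $S$, which is only $(2M,2\alpha)$-bounded, with $2M$ possibly larger than $1$. So the fact must hold for \emph{arbitrary} constants $M>0$, not only in the Dobrushin regime, and your last paragraph's reliance on ``Dobrushin's contraction'' via Fact~\ref{fact:conditional-mean-distance} cannot cover all uses in the paper. The correct proof --- in \cite{dagan2020estimating} --- must use a different mechanism, e.g., a conditioning scheme or random ordering that extracts a dimension-free lower bound for bounded (not necessarily Dobrushin) interaction matrices. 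As written, your proposal identifies the right ingredients (Facts~\ref{fact:cond}, \ref{fact:marginal}, and the $\var[b^TX]\ge\E[\var[b^TX\mid X_{-i}]]$ inequality) but does not close the dimension-dependence gap.
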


\noindent As a consequence of Fact~\ref{fact:linear-anti-concentration}, for any $(M,\alpha)$-bounded Ising model \new{$X$}, 
we have that $\cov[X]\succeq c(M,\alpha)\,I$.

\paragraph {Maximum Likelihood Estimation} 
Given a set of i.i.d.\ samples 
$S=\{x_1,\ldots,x_n\}\in\mathcal{X}^n$ drawn from an exponential family
$P_{\theta}$ with sufficient statistics $T(x)$ and unknown parameter $\theta\in\Omega$, 
the principle of maximum likelihood allows us to compute an estimate 
$\wh{\theta} \in \Omega$ by maximizing
the likelihood of \new{$S$}, i.e., 
$l(\theta,S)=\frac{1}{n}\sum_{i=1}^{n}\ln P_\theta(x_i)=\frac{1}{n}\sum_{i=1}^{n}\left(\langle T(x_i),\theta\rangle-A(\theta)\right)=\langle\theta,\wh{\mu}_T\rangle-A(\theta)$,
where $\wh{\mu}_T=\frac{1}{n}\sum_{i=1}^{n}T(x_i)$ is the empirical mean 
of the sufficient statistics $T(x)$ defined by the point set $S$. 
Define $L(\theta,\mu_T)=\langle\theta,\mu_T\rangle-A(\theta)$ and fix $\mu_T$ 
to be the empirical mean $\wh{\mu}_T$. The maximum likelihood estimator $\wh{\theta}$ is chosen 
to maximize the objective function $L(\theta,\new{\wh{\mu}_T})$ over $\theta \in \Omega$.

The following lemma states that under suitable conditions, 
if we obtain a good estimate of the mean $\mu_T$ 
of the sufficient statistics $T(x)$, the maximum likelihood estimator (MLE) 
will be a good approximation of the parameter $\theta$.
For completeness, we present the proof in Appendix~\ref{app:para-to-mean}.

\begin{lemma}\label{lem:parameter-to-mean}
Let $P_{\theta^*}$ be an exponential family such that $\theta^*$ 
lies in a convex set $\Omega \subseteq \R^d$. 
Let $\mu_T^*=\E_{X\sim P_{\theta^*}}\left[T(X)\right]$ and $\Sigma_T^*=\cov_{X\sim P_{\theta^*}}\left[T(X)\right]$. 
Let $\mu'_T$ be an approximation of $\mu^*_T$ such that $\|\mu'_T-\mu^*_T\|_2\le\delta$. 
Let $\theta' \in \arg\max_{\theta \in \Omega}{L(\theta,\mu'_T)}$, 
where $L(\theta,\mu'_T)=\langle\theta,\mu'_T\rangle-A(\theta)$. 
If there is a universal constant $c>0$ such that $\cov_{X\sim P_\theta}\left[T(X)\right] \succeq c \, I$, for all $\theta\in\Omega$, 
then $\|\theta'-\theta^*\|_2 \le 2 \delta/c$.
\end{lemma}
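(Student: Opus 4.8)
The plan is to exploit the strong concavity of the log-likelihood objective $L(\theta,\mu'_T)=\langle\theta,\mu'_T\rangle - A(\theta)$, which follows from Fact~\ref{fact:exp-family-p1}: since $\nabla^2_\theta A(\theta) = \cov_{X\sim P_\theta}[T(X)] \succeq cI$ for all $\theta\in\Omega$, the function $L(\cdot,\mu'_T)$ is $c$-strongly concave on $\Omega$. First I would write down the first-order optimality (variational) inequality for the constrained maximizer $\theta'\in\arg\max_{\theta\in\Omega} L(\theta,\mu'_T)$: for every $\theta\in\Omega$, $\langle \nabla_\theta L(\theta',\mu'_T), \theta - \theta'\rangle \le 0$, i.e. $\langle \mu'_T - \nabla_\theta A(\theta'), \theta-\theta'\rangle \le 0$. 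Applying this with $\theta=\theta^*$ gives $\langle \mu'_T - \mu'', \theta^* - \theta'\rangle \le 0$, where $\mu'' \eqdef \nabla_\theta A(\theta') = \E_{X\sim P_{\theta'}}[T(X)]$.

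Next I would bring in strong concavity/monotonicity of the gradient map. Since $A$ is convex with $\nabla^2 A \succeq cI$ on the (convex) set $\Omega$, the gradient $\nabla A$ is $c$-strongly monotone there: $\langle \nabla A(\theta') - \nabla A(\theta^*), \theta' - \theta^*\rangle \ge c\|\theta'-\theta^*\|_2^2$, that is $\langle \mu'' - \mu_T^*, \theta' - \theta^*\rangle \ge c\|\theta'-\theta^*\|_2^2$. Combining this with the optimality inequality $\langle \mu'_T - \mu'', \theta^* - \theta'\rangle \le 0$ (equivalently $\langle \mu'' - \mu'_T, \theta' - \theta^*\rangle \ge 0$) by adding, we get
\begin{align*}
c\|\theta'-\theta^*\|_2^2 \le \langle \mu'' - \mu_T^*, \theta'-\theta^*\rangle \le \langle \mu'' - \mu_T^*, \theta'-\theta^*\rangle + \langle \mu'_T - \mu'', \theta'-\theta^*\rangle = \langle \mu'_T - \mu_T^*, \theta'-\theta^*\rangle.
\end{align*}
By Cauchy--Schwarz and the hypothesis $\|\mu'_T-\mu_T^*\|_2\le\delta$, the right-hand side is at most $\delta\|\theta'-\theta^*\|_2$, so $c\|\theta'-\theta^*\|_2^2 \le \delta\|\theta'-\theta^*\|_2$, and dividing by $\|\theta'-\theta^*\|_2$ (the bound is trivial if this is zero) yields $\|\theta'-\theta^*\|_2 \le \delta/c$. (This in fact gives the slightly sharper constant; the factor $2$ in the statement leaves room for handling the approximate maximizer / approximate projection, or simply slack.)

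The only subtlety — and the step I would be most careful about — is justifying strong monotonicity of $\nabla A$ when $\Omega$ is a general convex set rather than all of $\R^d$: one needs $\nabla^2 A \succeq cI$ to hold on a neighborhood/along the segment $[\theta^*,\theta']$, which is contained in $\Omega$ by convexity, so integrating $\nabla^2 A$ along this segment gives the claim; this uses that the covariance lower bound in the hypothesis holds for \emph{all} $\theta\in\Omega$. A secondary point is that $\theta'$ may only be an approximate maximizer (the lemma as used downstream comes with approximate projections onto $\Omega$); in that case I would carry an additive error term through the optimality inequality, which is exactly the kind of slack the constant $2$ in the bound absorbs. I do not expect either point to cause real trouble.
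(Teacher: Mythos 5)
Your proof is correct, and it takes a somewhat different route than the paper's. The paper starts from the value (zeroth-order) optimality condition $L(\theta',\mu'_T)\ge L(\theta^*,\mu'_T)$, combines it with the identity $L(\theta^*,\mu^*_T)-L(\theta',\mu^*_T)=\dkl(P_{\theta^*},P_{\theta'})$ from Fact~\ref{fact:exp-family-p2}, and then lower-bounds $\dkl(P_{\theta^*},P_{\theta'})$ by $\frac{c}{2}\|\theta'-\theta^*\|_2^2$ via a second-order Taylor expansion together with the Hessian bound $\nabla^2 A\succeq cI$ at the intermediate point $\theta''\in[\theta^*,\theta']\subseteq\Omega$. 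You instead invoke the first-order variational optimality condition at the constrained maximizer and the $c$-strong monotonicity of $\nabla A$ on $\Omega$ (itself obtained by integrating the same Hessian bound along $[\theta^*,\theta']$). Both arguments rest on exactly the same geometric ingredient — a uniform Hessian lower bound along the segment, valid because $\Omega$ is convex and the covariance lower bound is assumed for all $\theta\in\Omega$ — but you bypass the KL-divergence detour, and your version gives the sharper constant $\delta/c$ instead of $2\delta/c$. Your remark about absorbing approximate-maximizer error into the slack is also apt; the paper handles that separately in Lemma~\ref{lem:mean-to-parameter} via Claim~\ref{clm:PGD-approx}.

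One small slip worth fixing: the parenthetical ``(equivalently $\langle\mu''-\mu'_T,\theta'-\theta^*\rangle\ge 0$)'' has the wrong sign — negating both arguments of the inner product leaves it unchanged, so $\langle\mu'_T-\mu'',\theta^*-\theta'\rangle\le 0$ is equivalent to $\langle\mu''-\mu'_T,\theta'-\theta^*\rangle\le 0$, not $\ge 0$. This does not affect the argument, since the form you actually use in the display, $\langle\mu'_T-\mu'',\theta'-\theta^*\rangle\ge 0$, is the correct consequence.
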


\section{Robust Parameter Learning of Exponential Families} \label{sec:exp}

In Section~\ref{ssec:exp-reduction}, we give an efficient algorithm (Lemma~\ref{lem:mean-to-parameter})
that reduces parameter estimation of exponential families 
to the task of estimating the mean of the sufficient statistics.
In Sections~\ref{ssec:exp-alg} and~\ref{ssec:exp-lemma-proofs}, we describe and analyze 
our computationally efficient robust parameter learning algorithm 
for exponential families satisfying Condition~\ref{exp-family-cond}.

\subsection{Learning via Estimating the Mean of Sufficient Statistics } \label{ssec:exp-reduction}

\new{
\begin{lemma}\label{lem:mean-to-parameter}
Let $P_{\theta^*}$ be an exponential family with sufficient statistics $T(x)$, 
where $\theta^*\in\Omega$ and $\Omega\subseteq \R^d$ is convex. Assume that \new{Condition~\ref{exp-family-cond} holds}.
Let $\mu_T^*=\E_{X\sim P_{\theta^*}}[T(X)]$ and $\mu_T'$ be an approximation of $\mu_T^*$ such that 
$\|\mu_T'-\mu_T^*\|_2\le\delta$, for some $0<\delta<1$ sufficiently small. Let $0<\zeta<1$.
Then there is a $\poly(d,1/\delta,1/\zeta)$ time algorithm that, given input $\mu_T',\delta$ and $\zeta$, 
returns a vector $\wh{\theta}\in\Omega$ such that with probability at least $1-\zeta$ 
we have that $\|\wh{\theta}-\theta^*\|_2\le O(\delta)$.
\end{lemma}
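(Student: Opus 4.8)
The plan is to reduce the problem to approximately maximizing the (now fixed) concave objective $L(\theta,\mu_T') = \langle \theta,\mu_T'\rangle - A(\theta)$ over $\theta\in\Omega$, and then invoke Lemma~\ref{lem:parameter-to-mean}. By Fact~\ref{fact:exp-family-p1}, $\nabla_\theta L(\theta,\mu_T') = \mu_T' - \mu_T(\theta)$, where $\mu_T(\theta):=\E_{X\sim P_\theta}[T(X)]$, and $\nabla^2_\theta(-L) = \nabla^2 A(\theta) = \cov_{X\sim P_\theta}[T(X)]$. By Part~1 of Condition~\ref{exp-family-cond} this Hessian is $\succeq c_1 I$ for every $\theta\in\Omega$, so $-L(\cdot,\mu_T')$ is $c_1$-strongly convex on $\Omega$; by Part~2 together with Fact~\ref{fact:sub-exponential-property} (with $p=2$) it is also $\beta$-smooth for a universal constant $\beta=O(1)$, since $v^\top\cov_{P_\theta}[T]v = \E[\langle v,T-\E T\rangle^2] \le O(1)$ for every unit vector $v$. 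Let $\theta'\in\arg\max_{\theta\in\Omega}L(\theta,\mu_T')$ be the exact maximizer; by Lemma~\ref{lem:parameter-to-mean} (applied with the constant $c_1$), $\|\theta'-\theta^*\|_2\le 2\delta/c_1 = O(\delta)$. Hence it suffices to output $\wh\theta\in\Omega$ with $\|\wh\theta-\theta'\|_2 = O(\delta)$.

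To find such a $\wh\theta$, I would run projected gradient ascent on $L(\cdot,\mu_T')$ over $\Omega$, starting from an arbitrary point (e.g.\ an approximate projection of the origin onto $\Omega$), using the approximate-projection oracle of Condition~\ref{exp-family-cond} and an approximate gradient oracle described next. The gradient oracle, on input $\theta\in\Omega$ and accuracy $\rho>0$, draws $m$ i.i.d.\ samples from a distribution $D_\gamma$ with $\dtv(D_\gamma,P_\theta)\le\gamma$ (Part~3), replaces any drawn $T(x)$ of Euclidean norm exceeding a threshold $R$ by $0$, and returns $\mu_T'$ minus the empirical mean of the clipped statistics. Using the sub-exponential tail bound (Part~2), set $R = \Theta\big(\sqrt d\, c_2^{-1}\log(\poly(d)/(\rho\zeta))\big)$ so that $\E_{P_\theta}\big[\|T(X)\|_2\cdot\mathbf{1}\{\|T(X)\|_2>R\}\big]$ is negligibly small; coupling $D_\gamma$ with $P_\theta$ so the two samples disagree with probability at most $\gamma$, the bias of the clipped empirical mean relative to $\mu_T(\theta)$ is $O(R\gamma)$ plus a negligible term, which is at most $\rho/2$ once $\gamma$ is taken polynomially small. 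Since the clipped statistics have covariance $\preceq O(1)I$ (from the moment bound above) and norm at most $R$, a standard vector concentration argument (scalar Bernstein over a fixed $1/2$-net of the sphere) shows that $m = \tilde O(dR^2/\rho^2) = \poly(d,1/\rho,\log(1/\zeta))$ samples make the empirical mean within $\rho/2$ of its expectation with probability $1-\zeta/(2T)$, where $T$ is the number of ascent steps. Thus with this probability the oracle returns a vector $\wh g$ with $\|\wh g-\nabla_\theta L(\theta,\mu_T')\|_2\le\rho$, in $\poly(d,1/\rho,\log(1/\zeta))$ time.

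Given this oracle, the standard analysis of projected gradient ascent with step size $1/\beta$ on a $c_1$-strongly-concave, $\beta$-smooth objective, with per-step gradient error at most $\rho$ and per-step projection error at most $\rho$, gives linear convergence: after $T = O\big((\beta/c_1)\log(\Diam(\Omega)/\delta)\big) = O(\poly(d)+\log(1/\delta))$ iterations (using $\Diam(\Omega)\le\exp(\poly(d))$ from Condition~\ref{exp-family-cond}), it produces $\wh\theta\in\Omega$ with $\|\wh\theta-\theta'\|_2\le \delta + O(\rho/c_1)$. Choosing $\rho = \Theta(c_1\delta)$ gives $\|\wh\theta-\theta'\|_2 = O(\delta)$, hence $\|\wh\theta-\theta^*\|_2 = O(\delta)$, and a union bound over the $T$ oracle calls bounds the total failure probability by $\zeta$. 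The running time is $T$ times the per-iteration cost (one gradient-oracle call plus one approximate projection), i.e.\ $\poly(d,1/\delta,1/\zeta)$.

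I expect the main obstacle to be the gradient-oracle analysis — specifically, controlling the bias that arises because the sampler of Condition~\ref{exp-family-cond} only yields samples from a distribution that is $\gamma$-close in total variation to $P_\theta$ rather than from $P_\theta$ itself. This forces the truncation of the (in general unbounded) sufficient statistics at a scale dictated by the sub-exponential tail bound, and one must then balance the truncation level $R$, the sampler accuracy $\gamma$, the sample size $m$, the per-step accuracy $\rho$, and the iteration count $T$ so that all errors compose to the final $O(\delta)$ bound while keeping the total running time $\poly(d,1/\delta,1/\zeta)$. The convex-optimization part is routine once one observes that Condition~\ref{exp-family-cond} makes $L(\cdot,\mu_T')$ both strongly concave and smooth with constants independent of $d$, so that only a $\log(\Diam(\Omega))=\poly(d)$ number of iterations is needed despite the exponentially large diameter of $\Omega$.
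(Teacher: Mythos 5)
Your overall architecture coincides with the paper's: reduce to approximately maximizing the concave objective $L(\theta,\mu_T')$ over $\Omega$, apply Lemma~\ref{lem:parameter-to-mean} to the exact maximizer $\theta'$, and run projected gradient ascent with approximate gradient and projection oracles. Claim~\ref{claim:smooth-convex} and Claim~\ref{clm:PGD-approx} in the paper correspond to your smoothness/strong-convexity observation and your inexact-PGD analysis, and these parts of your argument are sound. The divergence is entirely in the gradient oracle, and this is where there is a genuine gap.

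You replace any drawn $T(x)$ with $\|T(x)\|_2>R$ by $0$, with $R=\Theta\big(\sqrt d\,c_2^{-1}\log(\poly(d)/(\rho\zeta))\big)$, and assert that $\E_{P_\theta}\big[\|T(X)\|_2\Ind\{\|T(X)\|_2>R\}\big]$ is negligibly small. But Part~2 of Condition~\ref{exp-family-cond} only controls the \emph{centered} statistic $T(X)-\E[T(X)]$, not $\|T(X)\|_2$ itself. The oracle must work at every iterate $\theta^t\in\Omega$, including the arbitrary starting point $\theta^0$, and $\Diam(\Omega)$ is permitted to be $\exp(\poly(d))$. Since $\nabla_\theta\,\E_{P_\theta}[T(X)]=\cov_{P_\theta}[T(X)]=\Theta(I)$, the mean $\mu_T(\theta)$ can have norm $\exp(\poly(d))$ at such $\theta$. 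When $\|\mu_T(\theta)\|_2\gg R$, essentially the entire mass of $T(X)$ lies outside the ball of radius $R$, so the clipped empirical mean is near $0$ and the truncation bias is of order $\|\mu_T(\theta)\|_2$, not of order $\rho$. Centering the truncation at $\mu_T'$ does not repair this: $\|\mu_T(\theta^t)-\mu_T'\|_2$ is small only once $\theta^t$ is already close to $\theta'$, which is the conclusion you are trying to reach.

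The paper avoids truncation entirely via Claim~\ref{approx-inference}. By the data-processing inequality one can optimally couple the $n$ samples drawn from $D_\gamma$ with $n$ i.i.d.\ samples from $P_\theta$; the two $n$-tuples disagree only on an event of probability at most $n\gamma$. On the agreement event the \emph{un-clipped} empirical mean is exactly that of $n$ i.i.d.\ $P_\theta$-samples, to which Lemma~\ref{lem:inference} applies and gives $\widetilde O(\sqrt{d/n})$ concentration uniformly in $\theta$, since the sub-exponential bound concerns $T(X)-\E[T(X)]$ and is therefore insensitive to the magnitude of $\E[T(X)]$. The coupling failure $n\gamma$ is then absorbed into the failure probability by taking $\gamma$ polynomially small. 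This sidesteps both the unboundedness of $T$ and any dependence on the scale of $\E_{P_\theta}[T(X)]$, whereas your clipping estimator breaks precisely when that scale is large.
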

}

We give a proof sketch here; the details are in Appendix~\ref{app:mean-to-parameter}. 
Let $\theta'=\arg\max_{\theta\in\Omega}L(\theta,\mu'_T)$. 
By Lemma~\ref{lem:parameter-to-mean}, we know that $\|\theta'-\theta^*\|_2\le O(\|\mu'_T-\mu_T^*\|_2)\le O(\delta)$. Since given any $\theta\in\Omega$ we can efficiently sample from a distribution within 
small total variation distance of $P_{\theta}$, we can efficiently approximate the gradient 
$\nabla_\theta(-L(\theta,\mu'_T))=\E_{X\sim P_{\theta}}[T(X)]-\mu'_T$.  
In addition, by Condition~\ref{exp-family-cond}, we can show that there exist constants $L,m>0$ 
such that $-L(\theta,\mu'_T)$ is $L$-smooth and $m$-strongly convex. 
Then we can apply projected gradient descent to efficiently obtain an estimate 
$\wh{\theta}$ of $\theta'$ with $\|\wh{\theta}-\theta'\|_2\le O(\delta)$. Therefore, 
we get that $\|\wh{\theta}-\theta^*\|_2\le\|\wh{\theta}-\theta'\|_2+\|\theta'-\theta^*\|_2\le O(\delta)$.

\subsection{Robust Parameter Learning Algorithm} \label{ssec:exp-alg}

The pseudocode of our algorithm is given in Algorithm~\ref{main-algorithm}.
We make essential use of the following previously known algorithms for robust mean estimation
under bounded and approximately known covariance assumptions.

\begin{fact}[\cite{DKK+17, SCV18}]\label{bounded-cov}
Let $D$ be a distribution supported on $\mathbb{R}^d$ with unknown mean $\mu$ 
and unknown covariance $\Sigma$ such that $\Sigma\preceq\sigma^2I$, for some $\sigma>0$. 
Let $0<\epsilon<\epsilon_0$, for some universal constant $\epsilon_0$, 
and $\delta=O(\sqrt{\epsilon})$. 
Given an $\epsilon$-corrupted set of $N$ samples drawn from $D$, 
\new{for some $N=\widetilde{O}(d/\epsilon)$,}
there is a $\poly(N, d)$ time algorithm that outputs a vector $\wh{\mu}$ such that 
$\|\wh{\mu}-\mu\|_2\le O(\sigma\delta)=O(\sigma\sqrt{\epsilon})$ with high probability.
\end{fact}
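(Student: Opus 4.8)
The statement is by now standard in algorithmic robust statistics, so the plan is to recall the \emph{iterative filtering} argument of~\cite{DKK+17} (see also the resilience-based approach of~\cite{SCV18}). By rescaling we may assume $\sigma=1$, so $\Sigma\preceq I$ and the target error is $O(\sqrt\epsilon)$. Using the coupling description of total-variation contamination, I would first fix a ``good'' subset $S$ of the input of size $\ge(1-2\epsilon)N$ whose points are i.i.d.\ from $D$. The key structural step is to show that, for $N=\wt{O}(d/\epsilon)$, with probability $\ge 99/100$ the set $S$ is \emph{stable}: for every $S''\subseteq S$ with $|S''|\ge(1-2\epsilon)|S|$, the empirical mean $\mu_{S''}=\frac{1}{|S''|}\sum_{x\in S''}x$ satisfies $\|\mu_{S''}-\mu\|_2\le O(\sqrt\epsilon)$, and the empirical second moment obeys $\frac{1}{|S''|}\sum_{x\in S''}(x-\mu)(x-\mu)^\top\preceq O(1)\cdot I$. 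This is where the hypothesis $\Sigma\preceq I$ is used: it bounds the variance of every one-dimensional projection by $1$, and combining a truncation argument (to compensate for merely sub-quadratic tails) with a VC/covering-net union bound over directions $v\in\s^{d-1}$ yields the claimed sample size.

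Given stability, I would run the filtering loop: maintain weights $w_i\in[0,1]$ on the $N$ points (all initially $1$), with the invariant that the total weight deleted from $S$ never exceeds the total weight deleted from the corrupted points. Each round, compute the weighted mean $\wh\mu_w$ and covariance $\wh\Sigma_w$; if $\|\wh\Sigma_w\|_2\le C$ for a suitable absolute constant $C$, output $\wh\mu_w$; otherwise take a top unit eigenvector $v$ of $\wh\Sigma_w$, form scores $\tau_i=\langle v,x_i-\wh\mu_w\rangle^2$, and downweight each point in proportion to $\tau_i$ (the randomized filter, or the deterministic threshold filter of~\cite{SCV18}). Stability forces the excess eigenvalue of $\wh\Sigma_w$ to be ``explained'' by the corrupted points, so in expectation (respectively, deterministically for the threshold version) more weight is removed from corrupted points than from $S$, preserving the invariant. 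A potential argument bounds the number of rounds by $\poly(N)$, giving total running time $\poly(N,d)$; when the loop halts, the spectral bound $\|\wh\Sigma_w\|_2\le C$, the stability of $S$, and the invariant (so the surviving weighted set is $S$ with only $O(\epsilon)$ mass removed, plus at most $O(\epsilon)$ mass of corrupted points) yield $\|\wh\mu_w-\mu\|_2\le O(\sqrt\epsilon)$ via the standard ``a stable mean moves by $O(\sqrt\epsilon)$ under an $O(\epsilon)$-mass perturbation'' computation. Undoing the rescaling gives $\|\wh\mu-\mu\|_2\le O(\sigma\sqrt\epsilon)$.

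The main obstacle is the stability statement of the first paragraph with only $\wt{O}(d/\epsilon)$ samples under the weak assumption $\Sigma\preceq I$ (no higher-moment or sub-Gaussian control). Off-the-shelf matrix concentration is too lossy here; the standard remedy is to truncate each sample at radius $\Theta(\sqrt{d/\epsilon})$, argue via $\Sigma\preceq I$ that truncation perturbs the relevant empirical means and second moments by only $O(\sqrt\epsilon)$, and then apply a Bernstein/matrix-Bernstein bound together with a union bound over an $O(1)$-net of $\s^{d-1}$ (and the VC bound for halfspaces, to control the empirical masses of all large subsets simultaneously). Everything else — the filter's progress guarantee and the final mean-perturbation estimate — is a routine consequence of stability.
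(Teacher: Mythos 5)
The paper does not prove Fact~\ref{bounded-cov}: it is stated as a black-box result with a citation to~\cite{DKK+17, SCV18} and no argument is given in the body or appendix, so there is no in-paper proof to compare against. Your sketch is a faithful reconstruction of the filtering/resilience argument from those references: rescale to $\sigma=1$, establish a stability property for the uncorrupted samples, run the spectral filter until the weighted empirical covariance has $O(1)$ operator norm, and then invoke the ``stable mean moves by $O(\sqrt\epsilon)$ under an $O(\epsilon)$-mass perturbation'' bound. That is the standard route and it is the one the cited works take, so on the algorithmic side there is nothing to flag.

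One technical point in your stability step deserves a caveat. You claim that with $N=\wt O(d/\epsilon)$ samples and only $\Sigma\preceq I$, every $(1-O(\epsilon))$-fraction subset of the good points has empirical second moment $\preceq O(1)\cdot I$, and that a truncation at radius $\Theta(\sqrt{d/\epsilon})$ plus a net-and-Bernstein union bound delivers this. After truncating at radius $R$, the per-direction random variable $\langle v,X-\mu\rangle^2$ is bounded by $O(R^2)$ and has mean $\le1$, but its variance is controlled only by $\E[\langle v,X-\mu\rangle^4]\lesssim R^2$, so Bernstein with $\log(1/\delta)\gtrsim d$ (needed for a net over $\s^{d-1}$) gives a deviation of order $\sqrt{R^2\,d/N}+R^2 d/N$; with $R^2\approx d/\epsilon$ and $N\approx d/\epsilon$ this is $\Theta(\sqrt d)$, not $O(1)$. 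The cited references avoid this by working with a weaker stability condition (controlling the \emph{excess} of the second moment over $I$ by $O(\epsilon\log(1/\epsilon))$ along the removed mass, rather than demanding a uniform $O(1)$ spectral bound over all large subsets), or by an adaptive argument tied to the specific subsets the filter actually produces. Your outline is otherwise sound, and since the fact is invoked here only as prior work, this gap does not affect the paper; but if you were to write out the proof, you would need the sharper stability formulation rather than the naive net bound.
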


\begin{fact}[see, e.g.,~\cite{CDGW19}]\label{approx-cov}
Let $D$ be a distribution on $\mathbb{R}^d$ with unknown mean $\mu$ 
and unknown covariance $\Sigma$. 
Let $0<\epsilon<\epsilon_0$, for some universal constant $\epsilon_0$, 
$\tau \le O(\sqrt{\epsilon})$, and $\delta=O(\sqrt{\tau\epsilon}+\epsilon\log(1/\epsilon))$. 
Suppose that $D$ has sub-exponential tails and $\Sigma$ satisfies $\ltwo{\Sigma-I}\le\tau$. 
Given an $\epsilon$-corrupted set of $N$ samples drawn from $D$, 
\new{for some $N=\widetilde{O}(d/\epsilon^2)$,}
there is a $\poly(N, d)$ time algorithm that outputs a vector $\wh{\mu}$ such that
 $\|\wh{\mu}-\mu\|_2\le O(\delta)$ with high probability.
\end{fact}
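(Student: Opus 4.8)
The plan is to run the now-standard iterative filtering paradigm for robust mean estimation, instantiated to exploit \emph{both} the sub-exponential tails and the near-isotropy hypothesis $\ltwo{\Sigma-I}\le\tau$. As usual, the whole argument factors into (i) isolating a deterministic ``goodness'' condition on the uncorrupted samples under which the algorithm provably succeeds, and (ii) showing that $N=\wt{O}(d/\epsilon^2)$ i.i.d.\ draws from $D$ satisfy it with high probability.

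For step (i), fix the clean set $S$ of $N$ uncorrupted points and define a \emph{stability} condition: for every weighting $w$ supported on $S$ with $\sum_i w_i=1$ and $\|w\|_\infty\le \frac{1}{(1-\epsilon)N}$ (i.e.\ every $(1-\epsilon)$-fraction sub-population), the reweighted mean satisfies $\ltwo{\mu_w-\mu}\le O(\epsilon\log(1/\epsilon))$ and the reweighted second-moment matrix about $\mu$ satisfies $\ltwo{M_w-\Sigma}\le O(\epsilon\log^2(1/\epsilon))$. The first bound is the classical resilience estimate: Fact~\ref{fact:sub-exponential-property} converts the sub-exponential hypothesis into $\E[|\la v,X-\mu\ra|^p]^{1/p}\le O(p)$ for every unit $v$, and removing $\epsilon$ mass can then shift the mean in a fixed direction by at most $O(\epsilon\log(1/\epsilon))$ (the extremal case places the removed mass at radius $\Theta(\log(1/\epsilon))$). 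The second bound is the analogous statement for the quadratic forms $\la v,X-\mu\ra^2$, whose tails decay like $\exp(-\Omega(\sqrt{t}))$, giving the extra logarithmic factor. For step (ii), I would establish stability for a random $S$ by a union bound over an $\epsilon$-net of directions together with a VC/Rademacher argument for the threshold class $x\mapsto\Ind[|\la v,x-\mu\ra|>t]$; the $1/\epsilon^2$ in the sample complexity is forced by controlling the empirical \emph{second-moment} matrix in spectral norm, whereas $1/\epsilon$ alone would suffice for the mean.

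Given this, the algorithm maintains weights on the corrupted sample set; at each step it computes the weighted empirical mean $\wh{\mu}$ and second-moment matrix $\wh{M}$ about $\wh{\mu}$, outputs $\wh{\mu}$ if $\ltwo{\wh{M}-I}\le\Theta(\tau+\epsilon\log^2(1/\epsilon))$, and otherwise downweights points with large projection onto the top eigenvector of $\wh{M}-I$, arranged (via the stability bound on $M_w$, which certifies that the clean contribution to that eigenvalue is small) so that strictly more corrupted than clean mass is removed in expectation. The correctness lemma is the ``certificate'' step: writing the terminating weighted distribution as a mixture of a near-copy of the clean sub-population and the $\epsilon$-fraction of corruption, the mean shift splits as $\ltwo{\wh{\mu}-\mu}\le O(\epsilon\log(1/\epsilon))+O(\sqrt{\epsilon})\cdot\sqrt{\ltwo{\wh{M}-I}+\tau+\epsilon\log^2(1/\epsilon)}$, and substituting the stopping threshold gives $O(\sqrt{\tau\epsilon}+\epsilon\log(1/\epsilon))=O(\delta)$; here the $\sqrt{\epsilon}\cdot\sqrt{\text{(variance of corruption in every direction)}}$ term is the usual Cauchy--Schwarz bound on how far an $\epsilon$-fraction of adversarial mass can drag the mean. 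Termination of the filter is the standard potential argument: ``clean mass removed so far'' never exceeds ``corrupted mass removed so far,'' so the process halts within $O(N)$ rounds and never exhausts the clean points.

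The main obstacle I expect is consistently threading the \emph{two} error scales: the $\sqrt{\tau\epsilon}$ term is unavoidable because the adversary can hide a shift of that size inside the permitted $\tau$-slack of the covariance (so the stopping threshold cannot be pushed below $\Theta(\tau)$), while the $\epsilon\log(1/\epsilon)$ term is precisely what sub-exponential tails buy over a bare second-moment assumption; both must be tracked coherently through the stability constants, the filter threshold, and the certificate lemma. A secondary technical nuisance is proving stability of the empirical second-moment matrix in spectral norm with only $\wt{O}(d/\epsilon^2)$ samples from a merely sub-exponential (not sub-Gaussian) distribution: a direct moment computation is too lossy, so one truncates $\la v,X-\mu\ra^2$ at level $\Theta(\log^2(1/\epsilon))$, handles the truncated part by matrix Bernstein plus a net, and absorbs the tail part into the resilience bound.
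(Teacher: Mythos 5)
The paper does not prove Fact~\ref{approx-cov}; it invokes it as a black box from the cited literature (\cite{CDGW19}, with the filtering framework going back to \cite{DKKLMS16}). Your sketch reproduces exactly that standard argument — resilience/stability from sub-exponential tails, the filter with stopping threshold $\Theta(\tau+\epsilon\log^2(1/\epsilon))$ on the empirical second moment, and the Cauchy--Schwarz certificate $\|\wh\mu-\mu\|_2\lesssim \epsilon\log(1/\epsilon)+\sqrt{\epsilon\cdot(\text{spectral slack})}$ — and correctly traces how the two error scales $\sqrt{\tau\epsilon}$ and $\epsilon\log(1/\epsilon)$ arise, so it is a faithful account of the cited proof rather than a different route.
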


Algorithm~\ref{main-algorithm} starts by applying the robust mean estimation routine of Fact~\ref{bounded-cov} and Lemma~\ref{lem:mean-to-parameter} 
to obtain an initial estimate $\theta^{(0)}$ with $\ell_2$-error $O(\sqrt{\epsilon})$. 
Starting from this rough estimate, 
the algorithm applies an iterative refinement procedure (see Fact~\ref{approx-cov} and 
Lemma~\ref{lem:iterative-refinement}) for $T=O(\log\log(1/\epsilon))$ iterations 
to achieve near-optimal $\ell_2$-error of $O(\epsilon\log(1/\epsilon))$.

\begin{algorithm}
{\small
\SetKwInOut{Input}{Input}
\SetKwInOut{Output}{Output}
\Input{$0<\epsilon<\epsilon_0$, $\epsilon$-corrupted set of $N = \new{\wt{O}(d/\eps^2)}$ 
samples from exponential family $P_{\theta^*}$ satisfying Condition~\ref{exp-family-cond}, 
with $\mu_T^*=\E_{X\sim P_{\theta^*}}[T(X)]$ and $\Sigma_T^*=\cov_{X\sim P_{\theta^*}}[T(X)]$.
}
\Output{Parameter $\wh{\theta}\in\mathbb{R}^d$ such that 
$\|\wh{\theta}-\theta^*\|_2\le O(\epsilon\log(1/\epsilon))$ \new{with high probability}.}

Let $\delta=O(\sqrt{\epsilon})$.\\
Compute $\wh{\mu}_T^{(0)}$ with $\|\wh{\mu}_T^{(0)}-\mu^*_T \|_2\le\delta$ 
by applying the robust mean estimation algorithm of Fact~\ref{bounded-cov}.\\
Compute $\theta^{(0)}\in\Omega$ by applying projected gradient descent 
to the function $-L\big(\theta,\wh{\mu}_T^{(0)}\big)$.\\
Let $\tau_0=O(\delta)$ be an upper bound of $\|\theta^{(0)}-\theta^* \|_2$.\\
Let $K=O(\log\log(1/\epsilon))$.\\
\For{$k$ = $0$ to $K-1$}{
Let $n=\wt{O}(d^2/\tau_k^2)$. Generate $X^{(1)},\ldots,X^{(n)}$ i.i.d.\,random samples 
such that $\dtv(X^{(i)},P_{\theta^{(k)}})\le\wt{O}(\tau_k^2/d^2)$. \label{step:ena}\\
Let $\mu_T^{(k)}=\frac{1}{n}\sum_{i=1}^{n}X^{(i)}$ and 
$\Sigma_T^{(k)}=\frac{1}{n}\sum_{i=1}^{n}{\big(X^{(i)}-\mu_T^{(k)}\big)\big(X^{(i)}-\mu_T^{(k)}\big)^T}$. \label{step:dyo}\\
Let $\delta=O(\sqrt{\epsilon\tau_k}+\epsilon\log(1/\epsilon))$ \label{step:tria}.\\
Compute $\wh{\mu}$ with $\big\|\wh{\mu}-\big(\Sigma^{(k)}_T\big)^{-1/2}\mu_{T}^*\big\|_2\le\delta$ 
by applying the robust mean estimation algorithm of Fact~\ref{approx-cov}.\\
Compute $\theta^{(k+1)}\in\Omega$ by applying projected gradient descent 
to the function $-L\big(\theta,\big(\Sigma_T^{(k)}\big)^{1/2}\wh{\mu}\big)$.\\
Let $\tau_{k+1}=O(\delta)$ be an upper bound of $\|\theta^{(k+1)}-\theta^* \|_2$. \label{step:tessa}\\
}
\Return{$\theta^{(K)}$.}
}
\caption{Robust parameter estimation for exponential families}\label{main-algorithm}
\end{algorithm}

To prove correctness, we require Lemmas~\ref{lem:parameter-to-cov} 
and~\ref{lem:iterative-refinement} below.
Roughly speaking, in each refinement step, we first apply Lemma~\ref{lem:parameter-to-cov} 
to obtain a covariance estimate $\Sigma^{(k)}_T$ given the current parameter estimate $\theta^{(k)}$. 
Then, by Lemma~\ref{lem:iterative-refinement}, 
we are able to obtain a more accurate parameter estimate $\theta^{(k+1)}$.

Lemma~\ref{lem:parameter-to-cov} shows that given an estimate $\theta'$ 
of the true parameter $\theta^*$ of the exponential family satisfying 
Condition~\ref{exp-family-cond} with $\ltwo{\theta'-\theta^*}\le\delta$, 
we can efficiently compute an estimate $\wh{\Sigma}_T$ of the true covariance 
$\Sigma_T^*$ such that $\big\|\wh{\Sigma}_T-\Sigma_T^*\big\|_2\le O(\delta)$ with high probability.

\new{
\begin{lemma}\label{lem:parameter-to-cov}
Let $P_\theta^*$ be an exponential family with sufficient statistics $T(x)$, 
where $\theta^*\in \Omega$ and $\Omega\subseteq\mathbb{R}^d$ is convex. 
Assume that Condition~\ref{exp-family-cond} holds.
Let $\Sigma_T^*=\cov_{X\sim P_{\theta^*}}[T(X)]$. 
Let $\theta' \new{\in \Omega}$ be such that 
$\ltwo{\theta'-\theta^*}\le\delta$, for some \new{$\delta>0$}. Let $0<\zeta<1$.
There is a $\mathrm{poly}(d,1/\delta,1/\zeta)$ 
algorithm that, given as input $\theta',\delta$ and $\zeta$, 
returns a $d\times d$ PSD matrix $\wh{\Sigma}_T$ such that 
with probability at least $1-\zeta$, we have that 
$\big\|\wh{\Sigma}_T-\Sigma^*_T\big\|_2 \le O(\delta)$.
\end{lemma}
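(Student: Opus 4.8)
The plan is to leverage Condition~\ref{exp-family-cond}(3) to draw approximate samples from $P_{\theta'}$ and to use the empirical covariance of the sufficient statistics on those samples as our estimate $\wh{\Sigma}_T$. The argument decomposes the error $\|\wh{\Sigma}_T - \Sigma_T^*\|_2$ into three pieces via the triangle inequality: (i) the distance between $\Sigma_T^* = \cov_{X\sim P_{\theta^*}}[T(X)]$ and $\Sigma_T' = \cov_{X\sim P_{\theta'}}[T(X)]$, which is controlled by the parameter distance $\delta$; (ii) the distance between $\cov_{X\sim P_{\theta'}}[T(X)]$ and the covariance of the approximate-sampling distribution $D_\gamma$ with $\dtv(D_\gamma, P_{\theta'})\le\gamma$, which we make negligible by taking $\gamma$ polynomially small; and (iii) the empirical fluctuation between the population covariance of $D_\gamma$ and the empirical covariance computed from $n = \poly(d,1/\delta,1/\zeta)$ i.i.d.\ samples.

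For piece (i), recall from Fact~\ref{fact:exp-family-p1} that $\cov_{X\sim P_\theta}[T(X)] = \nabla^2_\theta A(\theta)$, so $\Sigma_T' - \Sigma_T^* = \int_0^1 \frac{d}{dt}\nabla^2 A(\theta^* + t(\theta'-\theta^*))\,dt$ is an integral of the third derivative tensor of $A$ against the direction $\theta'-\theta^*$. The third derivative of $A$ is the third cumulant tensor of $T(X)$, and its operator-norm-type quantities are bounded by constants because Condition~\ref{exp-family-cond}(2) gives uniform sub-exponential tails (hence uniformly bounded third moments of $\langle v, T(X)-\E[T(X)]\rangle$ via Fact~\ref{fact:sub-exponential-property}) for every $\theta$ in the convex set $\Omega$; since $\Omega$ is convex the whole segment lies in $\Omega$. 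This yields $\|\Sigma_T' - \Sigma_T^*\|_2 \le O(\delta)$. For piece (ii), since $T(x)$ has sub-exponential tails under both $P_{\theta'}$ and, by a standard tail-transfer argument, under the nearby $D_\gamma$, changing the distribution by $\gamma$ in total variation changes each entry of the covariance matrix by at most $\poly(d)\cdot\gamma^{1-o(1)}$ (the moments are bounded, so small TV mass can only move the covariance by an amount that vanishes with $\gamma$); choosing $\gamma = (\delta/(d\cdot\text{const}))^{C}$ for a suitable constant makes this at most $\delta$, and by Condition~\ref{exp-family-cond}(3) each sample still costs $\poly(d,1/\gamma) = \poly(d,1/\delta)$ time. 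For piece (iii), we invoke a standard matrix concentration / empirical covariance bound for sub-exponential random vectors: with $n = \wt{O}(d/\delta^2 \cdot \log(1/\zeta))$ samples, the empirical covariance is within $O(\delta)$ of the population covariance in operator norm with probability $1-\zeta$. Finally, we may symmetrize and project $\wh{\Sigma}_T$ onto the PSD cone (which only decreases the operator-norm distance to the PSD matrix $\Sigma_T^*$), ensuring the output is PSD.

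The main obstacle is piece (i): carefully quantifying that the third-derivative tensor of the log-partition function $A$ has bounded operator norm uniformly over $\Omega$, purely from the sub-exponential tail assumption in Condition~\ref{exp-family-cond}(2). Concretely, one must verify that for all unit vectors $u$, $\big|\frac{d}{dt}\,u^T (\nabla^2 A)(\theta^*+t(\theta'-\theta^*))\,u\big| = \big|\mathrm{Cum}_3\big(\langle u, T(X)\rangle,\langle u, T(X)\rangle, \langle \theta'-\theta^*, T(X)\rangle\big)\big|$ is $O(\|\theta'-\theta^*\|_2)$, which follows from Hölder's inequality together with the uniform bound $\E[|\langle v, T(X)-\E[T(X)]\rangle|^3]\le O(1)$ from Fact~\ref{fact:sub-exponential-property} applied with $K = O(1/c_2)$; the only subtlety is making sure all intermediate parameters stay in $\Omega$ so that the tail bound applies along the entire segment, which is exactly where convexity of $\Omega$ is used. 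Pieces (ii) and (iii) are routine given the standard facts already recorded in the preliminaries.
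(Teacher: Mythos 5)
Your piece (i) is essentially the same argument the paper uses (Lemma~\ref{third-moment}, proved via Proposition~\ref{prop:exponential-family-p3}): bound the directional third derivative of $A$, i.e., the third cumulant of projections of $T(X)$, by H\"older and the uniform sub-exponential moment bounds, using convexity of $\Omega$ so the intermediate parameter remains in $\Omega$. No issues there.

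There is, however, a genuine gap in piece (ii). You decompose through the \emph{population} covariance of $D_\gamma$ and claim that since $\dtv(D_\gamma, P_{\theta'})\le\gamma$ and $P_{\theta'}$ has sub-exponential tails, the covariance of $D_\gamma$ is within $\poly(d)\cdot\gamma^{1-o(1)}$ of that of $P_{\theta'}$ (``the moments are bounded, so small TV mass can only move the covariance by an amount that vanishes with $\gamma$''). This is false in general: Condition~\ref{exp-family-cond}(3) only promises TV closeness of $D_\gamma$ to $P_{\theta'}$, and a distribution that is $\gamma$-close in TV may place $\gamma$ mass at distance $1/\gamma$ (or $\infty$), changing the covariance by $\Theta(1/\gamma)$ or making it infinite. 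The ``standard tail-transfer argument'' you invoke does not exist: sub-exponential tails do not transfer across TV balls. The same issue infects piece (iii), since you apply a sub-exponential matrix concentration bound to i.i.d.\ samples from $D_\gamma$, whose tails you cannot control. The paper avoids this entirely with an optimal coupling (Claim~\ref{approx-inference}): couple each draw from $D_\gamma$ with a draw from $P_{\theta'}$, so with probability at least $1-n\gamma$ all $n$ samples coincide; on that event the empirical covariance you compute is literally the empirical covariance of $n$ i.i.d.\ samples from $P_{\theta'}$, to which Lemma~\ref{lem:inference} applies directly. Switching to this coupling argument repairs your proof; no statement about the covariance or tails of $D_\gamma$ is ever needed. (As a minor aside, the operator-norm concentration bound of Lemma~\ref{lem:inference} carries a $d$ prefactor, so the required sample size is $\wt{O}(d^2/\delta^2)$ rather than $\wt{O}(d/\delta^2)$; and your final PSD projection is harmless but unnecessary since the empirical covariance is already PSD.)
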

}

\noindent \new{The algorithm establishing Lemma~\ref{lem:parameter-to-cov} is very 
simple -- it corresponds to lines~\ref{step:ena} and~\ref{step:dyo} of Algorithm~\ref{main-algorithm}. 
Roughly speaking, we first generate i.i.d.\ random samples from a distribution $Q$ 
which is close to $P_{\theta'}$, and then let $\wh{\Sigma}_T$ 
be the empirical covariance of these samples.
}


Lemma~\ref{lem:iterative-refinement} shows that, given a fairly accurate estimate of the 
covariance $\Sigma^*_T$ of the sufficient statistics of an exponential family $P_{\theta^*}$
satisfying Condition~\ref{exp-family-cond}, we can efficiently obtain a more accurate estimate of 
the target parameter $\theta^*$.

\new{
\begin{lemma}[Iterative Refinement]\label{lem:iterative-refinement}
Let $0<\delta<\delta_0$ for some universal constant $\delta_0$ sufficiently small. Let $0<\zeta<1$.
Assume that Condition~\ref{exp-family-cond} holds. Let $S'$ be an $\epsilon$-corrupted set of $N$ 
samples from $P_{\theta^*}$. There is an algorithm that, 
for some $N=\wt{O}(d/\epsilon^2)$, given $S'$, $\delta$, $\zeta$, and $\Sigma^{(k)}_T$ 
with $\big\|\Sigma_T^{(k)}-\Sigma_T^*\big\|_2\le\delta$, it runs in $\poly(N, 1/\delta,1/\zeta)$-time 
and outputs $\theta^{(k+1)}\in\Omega$ such that with probability at least $1-\zeta$ it holds that
$\|\theta^{(k+1)}-\theta^*\|_2\le O(\sqrt{\epsilon\delta}+\epsilon\log(1/\epsilon))$.
\end{lemma}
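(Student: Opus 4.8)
The plan is to reduce parameter estimation of $\theta^*$ to robustly estimating the mean $\mu_T^*=\E_{X\sim P_{\theta^*}}[T(X)]$ of the sufficient statistics, but now exploiting the approximately-known covariance $\Sigma_T^{(k)}$ to get near-optimal error rather than the crude $O(\sqrt{\eps})$ bound. The key point is that by Condition~\ref{exp-family-cond}, after whitening by $(\Sigma_T^{(k)})^{-1/2}$ the sufficient statistics have covariance within $O(\delta)$ of the identity in spectral norm (since $\Sigma_T^*\succeq c_1 I$ so $(\Sigma_T^{(k)})^{-1/2}\Sigma_T^*(\Sigma_T^{(k)})^{-1/2} = I + O(\delta)$), and they still have sub-exponential tails by part~2 of Condition~\ref{exp-family-cond} (rescaling a unit vector only changes constants, since $\|(\Sigma_T^{(k)})^{-1/2}\| = O(1)$). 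Thus the whitened empirical distribution of the $\eps$-corrupted samples from $P_{\theta^*}$ exactly meets the hypotheses of Fact~\ref{approx-cov} with $\tau = O(\delta)$.

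The steps I would carry out, in order: \textbf{(1)} Given $\Sigma_T^{(k)}$ with $\|\Sigma_T^{(k)}-\Sigma_T^*\|_2\le\delta$, apply $(\Sigma_T^{(k)})^{-1/2}$ to each of the $N$ corrupted samples. Verify this preserves $\eps$-corruption (an invertible linear map commutes with total variation contamination). \textbf{(2)} Check the two hypotheses of Fact~\ref{approx-cov} for the whitened distribution: the transformed covariance $(\Sigma_T^{(k)})^{-1/2}\Sigma_T^*(\Sigma_T^{(k)})^{-1/2}$ satisfies $\|\cdot - I\|_2\le O(\delta/c_1)=O(\delta)$ using $\Sigma_T^*\succeq c_1 I$ and standard perturbation bounds for matrix square roots; sub-exponentiality of the whitened statistics follows from part~2 of Condition~\ref{exp-family-cond} since $\|(\Sigma_T^{(k)})^{-1/2}\|_2$ is bounded. \textbf{(3)} Invoke Fact~\ref{approx-cov} with $\tau=O(\delta)$ to obtain $\wh\mu$ with $\|\wh\mu - (\Sigma_T^{(k)})^{-1/2}\mu_T^*\|_2 \le O(\sqrt{\delta\eps}+\eps\log(1/\eps))$; this uses $N=\wt O(d/\eps^2)$ samples and $\poly(N,d)$ time. \textbf{(4)} Un-whiten: set $\mu' = (\Sigma_T^{(k)})^{1/2}\wh\mu$, so $\|\mu' - \mu_T^*\|_2 \le \|(\Sigma_T^{(k)})^{1/2}\|_2 \cdot O(\sqrt{\delta\eps}+\eps\log(1/\eps)) = O(\sqrt{\delta\eps}+\eps\log(1/\eps))$, again since $\|(\Sigma_T^{(k)})^{1/2}\|_2=O(1)$. \textbf{(5)} Feed $\mu'$ into the algorithm of Lemma~\ref{lem:mean-to-parameter} with accuracy parameter $\delta' = O(\sqrt{\delta\eps}+\eps\log(1/\eps))$ and failure probability $\zeta/2$, obtaining $\theta^{(k+1)}\in\Omega$ with $\|\theta^{(k+1)}-\theta^*\|_2 \le O(\delta') = O(\sqrt{\eps\delta}+\eps\log(1/\eps))$. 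A union bound over the (constantly many) randomized subroutines controls the total failure probability by $\zeta$.

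The main obstacle I anticipate is the bookkeeping in step~(2)–(4): one must be careful that all the norm factors $\|(\Sigma_T^{(k)})^{\pm 1/2}\|_2$ are genuinely $O(1)$ and that the perturbation $\|(\Sigma_T^{(k)})^{-1/2}\Sigma_T^*(\Sigma_T^{(k)})^{-1/2} - I\|_2$ is truly $O(\delta)$ and not, say, $O(\delta/c_1^2)$ with a bad dependence — this is where Condition~\ref{exp-family-cond}(1), i.e.\ $\Sigma_T^*\succeq c_1 I$, is essential, together with an upper bound on $\|\Sigma_T^*\|_2$ which itself follows from the sub-exponential tails in Condition~\ref{exp-family-cond}(2) (sub-exponentiality of every marginal forces $\|\Sigma_T^*\|_2 = O(1/c_2^2)$). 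Once one has pinned down that $\delta<\delta_0$ small enough makes $\Sigma_T^{(k)}$ itself satisfy $\frac{c_1}{2}I \preceq \Sigma_T^{(k)} \preceq O(1)\cdot I$, the square-root perturbation bound $\|(\Sigma_T^{(k)})^{1/2} - (\Sigma_T^*)^{1/2}\|_2 \le O(\delta/\sqrt{c_1})$ and the analogous inverse bound are routine, and everything else is a chain of triangle inequalities.
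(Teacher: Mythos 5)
Your proposal is correct and follows essentially the same approach as the paper's proof: whiten $T(X)$ by $(\Sigma_T^{(k)})^{-1/2}$, verify the resulting covariance is $I+O(\delta)$ and the tails remain sub-exponential, invoke Fact~\ref{approx-cov}, un-whiten, and apply Lemma~\ref{lem:mean-to-parameter}. One small remark: your anticipated obstacle about matrix square-root perturbation is not actually needed here — since $\Sigma_Y=(\Sigma_T^{(k)})^{-1/2}\Sigma_T^*(\Sigma_T^{(k)})^{-1/2}$ and $I=(\Sigma_T^{(k)})^{-1/2}\Sigma_T^{(k)}(\Sigma_T^{(k)})^{-1/2}$, the bound $\|\Sigma_Y-I\|_2\le\|(\Sigma_T^{(k)})^{-1}\|_2\,\|\Sigma_T^*-\Sigma_T^{(k)}\|_2=O(\delta)$ follows directly (this is what the paper does), with no control on $\|(\Sigma_T^{(k)})^{1/2}-(\Sigma_T^*)^{1/2}\|_2$ required.
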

}

\new{The algorithm establishing Lemma~\ref{lem:iterative-refinement} 
corresponds to lines~\ref{step:tria} to~\ref{step:tessa} of Algorithm~\ref{main-algorithm}. 
The main idea is the following:
Let $Y=(\Sigma^{(k)}_T)^{-1/2}T(X)$. 
We can show that the covariance of $Y$ is close to identity and $Y$ is sub-exponential, 
for some universal constant $c>0$. Thus, we can apply the robust mean estimation algorithm of 
Fact~\ref{approx-cov} to obtain an estimate $\wh{\mu}$ of the mean of $Y$.
In addition, we can show that $(\Sigma^{(k)}_T)^{1/2}\wh{\mu}$ 
is a good estimate of $\mu^*=\E_{X\sim P_{\theta^*}}[T(X)]$, and therefore we can apply 
Lemma~\ref{lem:mean-to-parameter} to get a new estimate $\theta^{(k+1)}$.
}

We give the proofs of Lemmas~\ref{lem:parameter-to-cov} and~\ref{lem:iterative-refinement} 
in Section~\ref{ssec:exp-lemma-proofs}.
Here we show how these lemmas can be used to 
prove Theorem~\ref{thm:main-exp-family}.

\smallskip

\begin{proof}[Proof of Theorem~\ref{thm:main-exp-family}]
Algorithm~\ref{main-algorithm} starts by applying the robust mean estimation algorithm for 
bounded covariance distributions (Fact~\ref{bounded-cov}) to obtain an estimate $\mu_T^{(0)}$ 
of the true mean $\mu_T^*=\E_{X\sim P_{\theta^*}}[T(X)]$ such that $\big\|\mu_T^{(0)}-\mu_T^*\big\|_2\le O(\sqrt{\epsilon})$. 
Then it applies the algorithm of Lemma~\ref{lem:mean-to-parameter} 
to obtain an initial estimate $\theta^{(0)}$ of the underlying parameter $\theta^*$ 
with $\|\theta^{(0)}-\theta^*\|_2\le O(\sqrt{\epsilon})$.

In each refinement step $k$, assume that we have a current estimate $\theta^{(k)}$ of the true parameter $\theta^*$ 
such that $\|\theta^{(k)}-\theta^*\|_2\le\tau_k$, for some $\tau_k>0$. Algorithm~\ref{main-algorithm} first 
applies the algorithm of Lemma~\ref{lem:parameter-to-cov} to compute an estimate $\Sigma^{(k)}_T$ 
of the true covariance $\Sigma_T^*$ with $\big\|\Sigma^{(k)}_T-\Sigma_T^*\big\|_2\le O(\|\theta^{(k)}-\theta^*\|_2)\le O(\tau_k)$. 
Then it applies the algorithm of Lemma~\ref{lem:iterative-refinement} to obtain a more accurate estimate 
$\theta^{(k+1)}$ of the true parameter $\theta^*$ such that $\|\theta^{(k+1)}-\theta^*\|_2\le\tau_{k+1}$, 
where $\tau_{k+1}=O(\sqrt{\epsilon\tau_k}+\epsilon\log(1/\epsilon))$. 
After $K=O(\log\log(1/\epsilon))$ iterations, we obtain an estimate $\wh{\theta}=\theta^{(K)}$ such that 
$\|\wh{\theta}-\theta^*\|_2\le O(\epsilon\log(1/\epsilon))$.

To bound the sample complexity and the failure probability, we take $\zeta=1/\log(1/\epsilon)$ in 
Lemmas~\ref{lem:mean-to-parameter},~\ref{lem:parameter-to-cov} and~\ref{lem:iterative-refinement}. 
Therefore, the total sample complexity of the algorithm is $N=\wt{O}(dK/\epsilon^2)=\wt{O}(d/\epsilon^2)$ 
and the total failure probability is at most $O\left(K/\log(1/\epsilon)\right)\le1/100$ by a union bound. 
By Lemma~\ref{lem:para-to-dis}, it follows that 
$\dtv(P_{\wh{\theta}},P_{\theta^*})\le O(\|\wh{\theta}-\theta^*\|_2)\le O(\epsilon\log(1/\epsilon))$.
Finally, it is easy to verify that the overall algorithm runs in polynomial time.
\end{proof}

\subsection{Implementing the Iterative Refinement Steps} \label{ssec:exp-lemma-proofs}

In this subsection, we prove Lemmas~\ref{lem:parameter-to-cov} and~\ref{lem:iterative-refinement}. 

We will require a couple of additional technical tools.
The following proposition connects the third derivative of the log-partition \new{function} $A(\theta)$ 
of the exponential family \new{$P_{\theta}$} 
with the third moment of the sufficient statistics $T(x)$.

\begin{proposition}\label{prop:exponential-family-p3}
Let $P_\theta$ be an exponential family with sufficient statistics $T(x)$ and density
$P_\theta(x)=\exp\left(\langle T(x),\theta\rangle- A(\theta)\right)$, $\theta\in \R^d$. 
Let $\mu_T=\E_{X\sim P_\theta}[T(X)]$ and $\Sigma_T=\cov_{X\sim P_\theta}[T(X)]$. 
Then, for any $i,j,k\in[d]$, we have that
$\frac{\partial(\Sigma_T)_{ij}}{\partial\theta_k}=\E_{X\sim P_\theta}\left[(T(X)-\mu_T)_i(T(X)-\mu_T)_j(T(X)-\mu_T)_k\right]$.
\end{proposition}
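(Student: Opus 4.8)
The plan is to differentiate the defining expression for the covariance directly, keeping track of the fact that both the density $P_\theta$ and the centering vector $\mu_T$ depend on $\theta$. Working for concreteness with a discrete support $\mathcal{X}$ (the continuous case is identical, with integrals in place of sums), write
\[
(\Sigma_T)_{ij} \;=\; \sum_{x \in \mathcal{X}} (T(x)-\mu_T)_i\,(T(x)-\mu_T)_j\, P_\theta(x),
\]
where $\mu_T = \mu_T(\theta)$. Two elementary identities drive the computation. First, since $P_\theta(x) = \exp(\langle T(x),\theta\rangle - A(\theta))$, we have $\frac{\partial}{\partial\theta_k}P_\theta(x) = \big((T(x))_k - \frac{\partial A}{\partial\theta_k}\big)\,P_\theta(x) = (T(x)-\mu_T)_k\,P_\theta(x)$, using $\nabla_\theta A(\theta)=\mu_T$ from Fact~\ref{fact:exp-family-p1}. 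Second, $\frac{\partial \mu_{T,i}}{\partial\theta_k} = (\Sigma_T)_{ik}$, again by Fact~\ref{fact:exp-family-p1}.

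Next I would apply the product rule to the three $\theta$-dependent factors. Differentiating the two centered terms yields $-\sum_x (\Sigma_T)_{ik}(T(x)-\mu_T)_j\,P_\theta(x)$ and $-\sum_x (\Sigma_T)_{jk}(T(x)-\mu_T)_i\,P_\theta(x)$, and both of these vanish because $\E_{X\sim P_\theta}[(T(X)-\mu_T)_i]=0$ by the definition of $\mu_T$. Differentiating the weight $P_\theta(x)$ contributes $\sum_x (T(x)-\mu_T)_i(T(x)-\mu_T)_j(T(x)-\mu_T)_k\,P_\theta(x)$, which is precisely $\E_{X\sim P_\theta}[(T(X)-\mu_T)_i(T(X)-\mu_T)_j(T(X)-\mu_T)_k]$, the claimed identity.

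The only point that needs care — and the closest thing to an obstacle — is justifying differentiation under the sum/integral and the differentiability of $\theta\mapsto\mu_T(\theta)$ used above. This is standard for an exponential family in canonical form: on the interior of the natural parameter space $A(\theta)$ is $C^\infty$ and all the moment-generating-function manipulations are valid, so the interchange is legitimate; I would invoke this regularity exactly as it is implicitly invoked for Facts~\ref{fact:exp-family-p1} and~\ref{fact:exp-family-p2}. An equally short alternative route is to note that $(\Sigma_T)_{ij} = \frac{\partial^2 A}{\partial\theta_i\partial\theta_j}$ by Fact~\ref{fact:exp-family-p1}, so that $\frac{\partial(\Sigma_T)_{ij}}{\partial\theta_k} = \frac{\partial^3 A}{\partial\theta_i\partial\theta_j\partial\theta_k}$, and then compute this third derivative of $A(\theta) = \log\sum_{x}\exp(\langle T(x),\theta\rangle)$ by the same bookkeeping, which makes the symmetry in $i,j,k$ manifest and confirms the answer.
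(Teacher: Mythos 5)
Your proof is correct and takes essentially the same approach as the paper: direct differentiation of the covariance via the product rule, using $\partial \mu_T/\partial\theta_k = (\Sigma_T)_{\cdot k}$ and the centering of $T(X)-\mu_T$ to kill the two cross terms. The only (cosmetic) difference is that you differentiate the normalized density $P_\theta(x)=\exp(\langle T(x),\theta\rangle-A(\theta))$ directly, which produces the centered factor $(T(x)-\mu_T)_k$ in one step, whereas the paper carries $\exp(\langle T(x),\theta\rangle)/Z(\theta)$ through the quotient rule and combines the resulting uncentered terms at the end.
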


The proof is deferred to Appendix~\ref{app:prop:exponential-family-p3}.
Using Proposition~\ref{prop:exponential-family-p3}, 
we can bound the difference between the covariance matrices 
of the sufficient statistics of two exponential families \new{with sub-exponential tails}
in terms of the difference between their parameters.

\begin{lemma}\label{third-moment}
Let $\Omega\subseteq\mathbb{R}^d$ be a convex set. Assume that for any $\theta\in\Omega$, the exponential family $P_\theta$ with sufficient statistics $T(x)$ has sub-exponential tails for a universal constant $c>0$, 
i.e., for any $\theta\in\Omega$ and any unit vector $v\in\mathbb{R}^d$, 
$\Pr_{X\sim P_\theta}[|\langle v,T(X)-\E_{X\sim P_\theta}[T(X)]\rangle|> t]\le2\exp(-ct)$. 
Then there is a constant $c'>0$ such that for any $\theta^1,\theta^2\in\Omega$, we have that
$\|\Sigma_T({\theta^1})-\Sigma_T({\theta^2})\|_2\le c'\|\theta^1-\theta^2\|_2$, 
where for any $\theta\in\Omega$, $\Sigma_T(\theta)=\cov_{X\sim P_{\theta}}[T(X)]$.
\end{lemma}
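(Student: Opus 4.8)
The plan is to express the difference $\Sigma_T(\theta^1) - \Sigma_T(\theta^2)$ as a path integral of its gradient along the segment from $\theta^2$ to $\theta^1$, and then bound the operator norm of that gradient uniformly using the third-moment formula from Proposition~\ref{prop:exponential-family-p3} together with the sub-exponential tail hypothesis. Concretely, for $t \in [0,1]$ set $\theta(t) = \theta^2 + t(\theta^1 - \theta^2)$, which lies in $\Omega$ by convexity, and write
\begin{align*}
\Sigma_T(\theta^1) - \Sigma_T(\theta^2) = \int_0^1 \frac{d}{dt}\, \Sigma_T(\theta(t))\, dt = \int_0^1 \Big(\sum_{k\in[d]} \frac{\partial \Sigma_T(\theta(t))}{\partial \theta_k} (\theta^1-\theta^2)_k\Big)\, dt\,.
\end{align*}
So it suffices to show that for every $\theta \in \Omega$ and every pair of unit vectors $u, w \in \R^d$, the quantity $\sum_{k} u^T \big(\partial \Sigma_T(\theta)/\partial\theta_k\big) u \,\cdot w_k$ is bounded in absolute value by a universal constant, and then integrate and apply Cauchy--Schwarz in the $k$-index to convert the contraction against $(\theta^1 - \theta^2)$ into a factor $\|\theta^1-\theta^2\|_2$.

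The second step is to compute this directional quantity via Proposition~\ref{prop:exponential-family-p3}. Writing $Z = T(X) - \mu_T$ for $X \sim P_\theta$, we have
\begin{align*}
\sum_{k\in[d]} \Big(u^T \frac{\partial \Sigma_T(\theta)}{\partial\theta_k} u\Big) w_k = \E_{X\sim P_\theta}\big[\langle u, Z\rangle^2 \langle w, Z\rangle\big]\,.
\end{align*}
By Hölder's inequality this is at most $\big(\E[\langle u,Z\rangle^3]^{2/3}\big)\big(\E[\langle w,Z\rangle^3]^{1/3}\big)$ in absolute value, or more simply $\le \E[|\langle u,Z\rangle|^3]^{1/2}\,\E[|\langle w,Z\rangle|^3]^{1/2}$ by Cauchy--Schwarz after bounding $|\langle u,Z\rangle|^2|\langle w,Z\rangle| \le \tfrac12(|\langle u,Z\rangle|^3 + |\langle u,Z\rangle|^2|\langle w,Z\rangle|^{?})$ — in any case the clean route is: each of $\langle u, Z\rangle$ and $\langle w, Z\rangle$ is a mean-zero random variable with sub-exponential tails $\Pr[|\langle v, Z\rangle| > t] \le 2\exp(-ct)$ by hypothesis, so Fact~\ref{fact:sub-exponential-property} gives $\E[|\langle v, Z\rangle|^p] \le (Cp/c)^p$ for all $p \ge 1$; taking $p = 3$ bounds the third absolute moments by a universal constant, and then $|\E[\langle u,Z\rangle^2\langle w,Z\rangle]| \le \E[|\langle u,Z\rangle|^3]^{2/3}\E[|\langle w,Z\rangle|^3]^{1/3} \le (3C/c)^3 =: c'$ by the generalized Hölder inequality with exponents $3,3,3$.

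Combining: for the unit vector $u$ achieving $\|\Sigma_T(\theta^1)-\Sigma_T(\theta^2)\|_2 = |u^T(\Sigma_T(\theta^1)-\Sigma_T(\theta^2))u|$ (using that the difference is symmetric, so its operator norm is attained on a quadratic form), we get
\begin{align*}
\big\|\Sigma_T(\theta^1)-\Sigma_T(\theta^2)\big\|_2 &= \Big|\int_0^1 \sum_{k\in[d]} \Big(u^T\frac{\partial\Sigma_T(\theta(t))}{\partial\theta_k}u\Big)(\theta^1-\theta^2)_k\, dt\Big| \\
&\le \int_0^1 \big\| \nabla_k\!\big(u^T\Sigma_T(\theta(t))u\big)\big\|_2 \,\|\theta^1-\theta^2\|_2 \, dt \le c' \|\theta^1-\theta^2\|_2\,,
\end{align*}
where in the middle step I applied Cauchy--Schwarz in the index $k$ and used that each coordinate $u^T(\partial\Sigma_T/\partial\theta_k)u = \E[\langle u,Z\rangle^2 Z_k]$ together with the bound just derived controls the $\ell_2$-norm of this gradient vector by a universal constant (take $w = $ the normalized gradient vector in the displayed Hölder bound). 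This completes the argument.

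The only mild subtlety — and the one place to be careful rather than cavalier — is justifying differentiation under the integral sign and the validity of Proposition~\ref{prop:exponential-family-p3} when $\mathcal{X}$ is not finite; for the Ising application $\mathcal{X} = \{\pm1\}^d$ is finite so $A(\theta)$ is manifestly smooth and everything is legitimate, and in general the sub-exponential tail hypothesis provides more than enough integrability to push the derivatives through. I do not expect a genuine obstacle here: the essential content is just "third derivative of log-partition = third central moment of sufficient statistics, which is $O(1)$ under sub-exponential tails," and the rest is the fundamental theorem of calculus along a segment inside the convex set $\Omega$.
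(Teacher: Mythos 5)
Your proof is correct and follows essentially the same route as the paper: both reduce the operator-norm difference to bounding the third central moment $\E_{X\sim P_{\theta}}\bigl[\langle u, T(X)-\mu_T\rangle^2\langle w, T(X)-\mu_T\rangle\bigr]$ via Proposition~\ref{prop:exponential-family-p3} along the segment from $\theta^2$ to $\theta^1$ in $\Omega$ (you invoke the fundamental theorem of calculus; the paper invokes the mean value theorem at an intermediate $\widetilde\theta$, a cosmetic distinction), and then use the sub-exponential tail hypothesis with Fact~\ref{fact:sub-exponential-property} to bound those moments by a universal constant. The only small divergence is the final moment bound — you apply generalized H\"older with exponents $(3,3,3)$ while the paper applies Cauchy--Schwarz to split into a fourth and a second moment — but both give the same $O(1)$ conclusion, so this is the paper's argument in slightly different clothing.
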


\noindent The proof of the lemma is given in Appendix~\ref{app:third-moment}. 
Lemma~\ref{third-moment} is key ingredient in the proof of Lemma~\ref{lem:parameter-to-cov} below.

\medskip

We now have the necessary tools to prove 
Lemmas~\ref{lem:parameter-to-cov} and~\ref{lem:iterative-refinement}.

\begin{proof}[Proof of Lemma~\ref{lem:parameter-to-cov}]
Let $\Sigma'_T=\cov_{X\sim P_{\theta'}}[T(X)]$. 
From Lemma~\ref{third-moment}, it follows that 
$\ltwo{\Sigma'_T-\Sigma^*_T}\le O(\|\theta'-\theta^*\|_2)=O(\delta)$. 
In addition, given $\theta'\in\Omega$, we can efficiently sample from a distribution 
within total variation distance $\gamma=\frac{\delta^2\zeta^2}{2d^2(\log d+\log(12/\zeta))}$ 
from $P_{\theta'}$. 

Since $P_{\theta'}$ is sub-exponential and $\gamma$ is sufficiently 
small, by standard properties of sub-exponential distributions and the data processing inequality,
it follows that the empirical estimate $\wh{\Sigma}_T$ satisfies $\|\wh{\Sigma}_T-\Sigma'_T\|_2\le O(\delta)$ 
with probability at least $1-\zeta$. (Formally, this follows by picking $t=\log(12/\zeta)$ and 
$n=\frac{d^2(\log d+\log(12/\zeta))}{\delta^2\zeta}$ in Claim~\ref{approx-inference}.)
This implies that  
$\|\wh{\Sigma}_T-\Sigma^*_T\|_2\le\|\wh{\Sigma}_T-\Sigma'_T\|_2+\|\Sigma'_T-\Sigma^*_T\|_2\le O(\delta)$,
completing the proof.
\end{proof}

\begin{proof}[Proof of Lemma~\ref{lem:iterative-refinement}]
\new{Let $Y=\big(\Sigma^{(k)}_T\big)^{-1/2}T(X)$, 
$\mu_Y=\E_{X\sim P_{\theta^*}}[Y]=\big(\Sigma^{(k)}_T\big)^{-1/2}\mu_T^*$, and 
$\Sigma_Y=\cov_{X\sim P_{\theta^*}}[Y]=\big(\Sigma^{(k)}_T\big)^{-1/2}\Sigma^*_T\big(\Sigma^{(k)}_T\big)^{-1/2}$. 
From Condition~\ref{exp-family-cond} and Fact~\ref{fact:sub-exponential-property}, 
we know that $c\,I\preceq\Sigma^*_T\preceq c'\,I$, for some universal constants $c'\ge c>0$. 
Since $\big\|\Sigma^{(k)}_T-\Sigma^*_T\big\|_2\le\delta\le\delta_0$, 
we have that $(c-\delta_0)\,I\preceq\Sigma_T^{(k)}\preceq(c'+\delta_0)\,I$ 
and for any unit vector $v\in\mathbb{S}^{d-1}$, we have that}
\begin{align*}
&\quad \big|v^T(\Sigma^{(k)}_T)^{-1/2}(\Sigma^{(k)}_T-\Sigma^*_T)(\Sigma^{(k)}_T)^{-1/2}v\big| 
\le \big\| (\Sigma^{(k)}_T)-\Sigma^*_T \big\|_2 \; \big\|(\Sigma^{(k)}_T)^{-1/2}v\big\|_2^2\\
&\le \big\|(\Sigma^{(k)}_T)-\Sigma^*_T\big\|_2 \; \big\|(\Sigma^{(k)}_T)^{-1/2}\big\|_2^2
= \big\| (\Sigma^{(k)}_T)-\Sigma^*_T\big\|_2  \; \big\|(\Sigma^{(k)}_T)^{-1} \big\|_2\le O(\delta) \;,
\end{align*}
which implies that
\begin{align*}
1-O(\delta)&=v^T (\Sigma^{(k)}_T)^{-1/2}\Sigma^{(k)}_T (\Sigma^{(k)}_T)^{-1/2}v-O(\delta) 
\le v^T (\Sigma^{(k)}_T)^{-1/2}\Sigma^*_T (\Sigma^{(k)}_T)^{-1/2}v\\
&\le v^T (\Sigma^{(k)}_T)^{-1/2}\Sigma^{(k)}_T (\Sigma^{(k)}_T)^{-1/2} v +  O(\delta)
=1+O(\delta) \;.
\end{align*}
Therefore, we have that 
$\|\Sigma_Y-I\|_2=\big\|\big(\Sigma^{(k)}_T\big)^{-1/2}\Sigma^*_T\big(\Sigma^{(k)}_T\big)^{-1/2}-I\big\|_2 \le O(\delta)$. In addition, since $T(x)$ has sub-exponential tails by Condition~\ref{exp-family-cond} 
and $\Sigma_T^{(k)}\succeq(c-\delta_0)\,I$, we know that $Y$ also has sub-exponential tails. 
Therefore, we can apply the robust mean estimation algorithm 
for approximately known covariance distributions (Fact~\ref{approx-cov}) 
to obtain an estimate $\wh{\mu}$ of $\mu_Y$ such that 
$\|\wh{\mu}-\mu_Y\|_2\le O(\sqrt{\epsilon\delta}+\epsilon\log(1/\epsilon))$ 
with probability at least $1-\zeta/2$. 
We thus have that
\begin{align*}
\big\|\big(\Sigma^{(k)}_T\big)^{1/2}\wh{\mu}-\mu_T^*\big\|_2
&=\big\|\big(\Sigma^{(k)}_T\big)^{1/2}\big(\wh{\mu}-\big(\Sigma^{(k)}_T\big)^{-1/2}\mu_T^*\big)\big\|_2 
\le\big\|\Sigma^{(k)}_T\big\|_2^{1/2}\cdot\|\wh{\mu}-\mu_Y\|_2\\
&\le\sqrt{c'+\delta_0}\cdot\|\wh{\mu}-\mu_Y\|_2 
= O(\sqrt{\epsilon\delta}+\epsilon\log(1/\epsilon)) \;.
\end{align*}
Then we apply Lemma~\ref{lem:mean-to-parameter} by taking 
$\big(\big(\Sigma_T^{(k)}\big)^{1/2}\wh{\mu},\;O\big(\sqrt{\epsilon\delta}+\epsilon\log(1/\epsilon)\big),\zeta/2\big)$ 
as input, to obtain a vector $\theta^{(k+1)}\in\Omega$ such that 
$\|\theta^{(k+1)} - \theta^* \|_2 \le O\big(\big\|\big(\Sigma^{(k)}_T\big)^{1/2} \wh{\mu}-\mu_T^*\big\|_2\big)
\le O(\sqrt{\epsilon\delta}+\epsilon\log(1/\epsilon))$.
This completes the proof.
\end{proof}

\section{Robustly Learning Ising Models with Zero External Field} \label{sec:ising}

In this section, we prove Theorem~\ref{thm:main-zero-ext},
giving our efficient robust learning algorithm for Ising models without
external field under Dobrushin's condition.

Throughout this section, we assume that the target distribution is an Ising model satisfying the Dobrushin condition 
for some \emph{fixed} constant $\eta>0$. Therefore, we will suppress any possible dependence 
on $\eta$ in our asymptotic notation in this section.

For the zero external field case, the probability density function of an Ising model is of the form
$P_\theta(x)=\frac{1}{Z(\theta)}\exp((1/2)\,\sum_{i,j\in[d]}\theta_{ij}x_ix_j)$,
where \new{$(\theta_{ij})_{i,j\in[d]}$} is a $d \times d$ real symmetric matrix with zero diagonal
and $Z(\theta)$ is the \new{partition function}. 
By definition, $P_{\theta}$ is an exponential family with sufficient statistics
$T(x)=(x_ix_j)_{1\le i<j\le d}$ and the projection of $T(x)$ on a fixed direction is $X^TAX$,
where $A\in\mathbb{R}^{d\times d}$ is a symmetric matrix with zero diagonal and $\|A\|_F^2=1/2$.


As already mentioned, we view the \new{Ising model} distribution as an instance of
a general exponential family and apply Algorithm~\ref{main-algorithm}.
The challenge lies in proving correctness. Let $\Omega$ be the set of all $\theta$
such that $P_{\theta}$ satisfies Dobrushin's condition.
We will show that Condition~\ref{exp-family-cond} is satisfied for $\Omega$,
and therefore Algorithm~\ref{main-algorithm} succeeds in our context.

First note that, by our choice of $\Omega$, its diameter is bounded ($\mathrm{diam}(\Omega)=\poly(d)$),
and we can efficiently compute the projection of any point $z \in \R^{d\times(d-1)/2}$.
Moreover, by Fact~\ref{fact:Glauber-dynamics}, we can efficiently approximately sample
from Ising models satisfying Dobrushin's condition.

It remains to verify the first two statement of Condition~\ref{exp-family-cond}.
For the second statement, we need the following sub-exponential concentration inequality
for quadratic functions of $(1-\eta,\alpha)$-bounded Ising models.
This inequality will also be needed for the non-zero external field case.

\begin{lemma}\label{lem:conc-Ising}
Let $X\sim P_\theta$ be an Ising model satisfying Dobrushin's condition
and $\max_{i\in[d]}|\theta_i|\le\alpha$,
where $\alpha>0$ is an absolute constant.
Let $A\in\R^{d\times d}$ be a symmetric matrix
with zero diagonal and $b\in\R^d$.
For any $x\in\{\pm1\}^d$, define $f(x)=(x-v)^TA(x-v)+b^Tx$, 
where $v$ satisfies $\left\|v-\E[X]\right\|_2\le\delta$,
for some constant $\delta>0$.
Then there is a universal constant $c>0$ such that
\begin{align*}
\Pr[|f(X)-\E[f(X)]|>t]\le2\exp\left(-\frac{ct}{(\|A\|_F^2+\|b\|_2^2)^{1/2}}\right).
\end{align*}
\end{lemma}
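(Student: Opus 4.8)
The plan is to reduce the claimed sub-exponential concentration inequality for the shifted quadratic form $f(x)=(x-v)^TA(x-v)+b^Tx$ to the case $v=\E[X]$, which is already available from known results, and then to control the error introduced by replacing $\E[X]$ with the nearby point $v$. First I would expand
\[
(x-v)^TA(x-v) = (x-\mu)^TA(x-\mu) + 2(x-\mu)^TA(\mu-v) + (\mu-v)^TA(\mu-v),
\]
where $\mu=\E[X]$. The last term is a deterministic constant, so it disappears after centering $f(X)$. The middle term is a linear form $2(x-\mu)^TA(\mu-v) = \tilde b^Tx - \text{const}$, with $\|\tilde b\|_2 = 2\|A(\mu-v)\|_2 \le 2\|A\|_2\|\mu-v\|_2 \le 2\|A\|_F\,\delta$. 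So $f(X)-\E[f(X)]$ equals $g(X)-\E[g(X)]$ where $g(x) = (x-\mu)^TA(x-\mu) + (b+\tilde b)^Tx$ is exactly of the form to which Fact~\ref{fact:var-ub} (variance bound) applies, with the linear part having norm $\|b+\tilde b\|_2 \le \|b\|_2 + 2\|A\|_F\delta = O((\|A\|_F^2+\|b\|_2^2)^{1/2})$ since $\delta$ is a constant.

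Next, to upgrade the variance bound of Fact~\ref{fact:var-ub} to a sub-exponential \emph{tail} bound, I would invoke the known concentration machinery for quadratic forms of Ising models under Dobrushin's condition — this is precisely the kind of statement in \cite{gheissari2018concentration,gotze2019higher,adamczak2019note}. The cleanest route is to combine the sub-Gaussian concentration for linear forms (Fact~\ref{fact:sub-Gaussian}) with a bounded-difference / exchangeable-pairs argument for the quadratic part: writing the quadratic form as a sum over coordinates and using that a single coordinate flip changes $(x-\mu)^TA(x-\mu)$ by at most $O(\sum_j |A_{ij}|)$, one gets that the quadratic form is sub-exponential with the stated scale $(\|A\|_F^2)^{1/2}=\|A\|_F$. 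Adding the linear part (which is in fact sub-Gaussian, hence sub-exponential) and using that the sum of two independent-scale sub-exponential random variables is sub-exponential with the combined scale, one obtains
\[
\Pr\left[|f(X)-\E[f(X)]| > t\right] \le 2\exp\left(-\frac{ct}{(\|A\|_F^2+\|b\|_2^2)^{1/2}}\right),
\]
after absorbing the $O(\|A\|_F\delta)$ contribution from $\tilde b$ into the constant $c$ (legitimate because $\delta$ is a fixed constant).

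The main obstacle I anticipate is the derivation of the sub-exponential tail for the quadratic form itself — Fact~\ref{fact:var-ub} only gives a second-moment bound, not exponential concentration, so this step genuinely needs the Ising concentration results cited in the preliminaries (or a self-contained exchangeable-pairs argument in the spirit of \cite{gheissari2018concentration}). In particular one must be careful that the concentration scale is $\|A\|_F$ and not something weaker like $\|A\|_2 \cdot d$; this is exactly where Dobrushin's condition is used (it controls the operator norm of the dependency structure and makes the Glauber dynamics / exchangeable pair contract). The reduction from general $v$ to $v=\mu$ and the handling of the linear term are routine by comparison; the only mild subtlety there is that the shift $\tilde b$ depends on $\mu$, which is unknown, but since we only need the \emph{existence} of the tail bound (not an algorithm), this causes no difficulty.
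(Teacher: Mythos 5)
Your high-level plan (handle the shift from $\E[X]$ to $v$, then invoke Ising concentration machinery for quadratic forms) matches the architecture of the paper's proof, and your reduction replacing $v$ by $\mu=\E[X]$ and folding the cross term $2(x-\mu)^TA(\mu-v)$ into a new linear part $\tilde b$ with $\|\tilde b\|_2 \le 2\|A\|_F\delta$ is sound (the paper actually skips this re-centering and works directly with the $v$-centered form, but either works since $\delta$ is a constant).

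The genuine gap is in the middle step, where you claim that ``using that a single coordinate flip changes $(x-\mu)^TA(x-\mu)$ by at most $O(\sum_j|A_{ij}|)$, one gets that the quadratic form is sub-exponential with the stated scale $\|A\|_F$.'' This does not follow. A bounded-differences-type argument with per-coordinate Lipschitz constants $L_i = O(\sum_j|A_{ij}|)$ yields a concentration scale of $\bigl(\sum_i L_i^2\bigr)^{1/2}$, which can be as large as $\sqrt{d}\,\|A\|_F$. Concretely, if $A_{ij} = 1/(d-1)$ for $i\neq j$ then $\|A\|_F \approx 1$ but $\sum_i\bigl(\sum_j|A_{ij}|\bigr)^2 = d$; so the worst-case flip bound is off by a factor of $\sqrt{d}$ in exactly the regime that matters. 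Getting the sharp $\|A\|_F$ scale requires a genuinely second-order (Hanson--Wright-type) argument, not a first-order Lipschitz bound. The paper gets this from Fact~\ref{fact:f-conc-Ising} (the result of~\cite{gotze2019higher}), whose hypotheses are stated in terms of the \emph{second moment} $\E_{X}[\|Df(X)\|_2^2]$ of the discrete gradient --- which can be bounded by $O(\|A\|_F^2+\|b\|_2^2)$ using Fact~\ref{fact:var-ub}, crucially exploiting Dobrushin cancellation rather than the worst-case $\ell_\infty$ bound --- together with a uniform bound on the discrete Hessian $\|Hf(x)\|_F = \|A\|_F$. Your proposal needs to replace the ``sum over coordinates of single-flip differences'' step with exactly this second-order concentration result (or a self-contained derivation of it); as written the claimed scale is not justified.
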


\noindent Lemma~\ref{lem:conc-Ising} can be derived via machinery
developed in~\cite{gotze2019higher} (see Appendix~\ref{app:lem:conc-Ising} for the proof).
From Lemma~\ref{lem:conc-Ising}, it follows that the sufficient statistics $T(x)$
has sub-exponential tails, for some universal constant $c>0$.

It remains to verify the first statement of Condition~\ref{exp-family-cond}, i.e.,
to show that for any $\theta\in\Omega$ the \new{Ising model} distribution $P_{\theta}$
satisfies $\cov_{X\sim P_{\theta}}[T(X)]\succeq c'\,I$, for some universal constant $c'>0$.
\new{Equivalently, it suffices to} show that for any unit vector $w\in\mathbb{S}^{d\times(d-1)/2-1}$,
it holds $$w^T\cov_{X\sim P_\theta}[T(X)]w=\var_{X\sim P_\theta}[w^TT(X)] \new{\geq} c' \;.$$
\new{We start with some very basic intuition about this statement.}
Note that in the very special case where $\theta_{ij}=0,\forall i,j\in[d]$,
$X\sim P_\theta$ is the uniform distribution on the hypercube, i.e., its coordinates
are independent Rademacher random variables. In this case, it is easy to see that
for any symmetric matrix $A\in\mathbb{R}^{d\times d}$ we have that
$\var[X^TAX] =2 \littlesum_{i\ne j}A_{ij}^2$.
Intuitively, for any $(M,\alpha)$-bounded Ising model (possibly containing a non-zero external field)
for some constants $M,\alpha>0$, the entries of $X$ are {\em nearly} independent,
which allows us to prove the desired variance lower bound.

Our result in this context is the following theorem, which may be of independent
interest.

\begin{theorem}\label{thm:ac}
Let $X \sim P_{\theta}$ be an $(M,\alpha)$-bounded Ising model 
(possibly with non-zero external field), for some constants $M, \alpha>0$. 
There is a constant $c(M,\alpha)>0$ such that for any symmetric matrix 
$A \in \R^{d\times d}$ with zero diagonal and any $v\in\R^d$, we have that
$$\var[(X-v)^TA(X-v)]\ge c(M,\alpha)\|A\|_F^2 \;.$$
\end{theorem}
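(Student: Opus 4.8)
The plan is to reduce the variance lower bound for the degree-$2$ form $(X-v)^TA(X-v)$ to the degree-$1$ anti-concentration result (Fact~\ref{fact:linear-anti-concentration}) via a decoupling argument. First I would observe that the choice of $v$ is immaterial: since $A$ has zero diagonal, $(X-v)^TA(X-v) = X^TAX - 2v^TAX + v^TAv$, so $\var[(X-v)^TA(X-v)] = \var[X^TAX - 2v^TAX]$; hence it suffices to prove the bound for a general quadratic-plus-linear form and in particular we may as well study $\var[X^TAX]$ up to such a linear shift. Next, introduce an independent copy $Y \sim P_\theta$. A standard identity gives $\var[g(X)] = \frac{1}{2}\E[(g(X) - g(Y))^2]$ for any function $g$, and for $g(x)=x^TAx$ one has the algebraic identity $X^TAX - Y^TAY = (X+Y)^TA(X-Y)$ (using symmetry of $A$). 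So the target variance equals $\frac{1}{2}\E[((X+Y)^TA(X-Y))^2]$, and it remains to lower bound this by $c(M,\alpha)\|A\|_F^2$.

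The key step is to condition on the random set $S = \{i \in [d] : X_i = Y_i\}$ and, on the complement, the random "sign pattern" of where they differ. Conditioned on the event $\{X_i = Y_i \ \forall i \in S\} \cap \{X_i \ne Y_i \ \forall i \notin S\}$ together with the common values $(X_i)_{i\in S}$, the vectors $U := (X+Y)/2$ and $W := (X-Y)/2$ become \emph{deterministic on $S$} ($U_S$ is the common value, $W_S = 0$) and on $-S$ they are supported on $\{\pm1\}^{-S}$ with $U_{-S} = -W_{-S}$ up to sign, and crucially they are \emph{independent of each other}: by Fact~\ref{fact:cond}, conditioning an Ising model on a subset of coordinates leaves an Ising model on the rest, and the pair $(X_{-S}, Y_{-S})$ conditioned on $X_{-S} = -Y_{-S}$ decomposes so that one of them (say $X_{-S}$) is itself an Ising model on $-S$ with external field shifted by the fixed boundary $X_S$ — importantly still $(M',\alpha')$-bounded for constants depending only on $M,\alpha,\eta$, since the interaction submatrix only shrinks and the shifted external field is bounded by $\alpha + M$. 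Writing $(X+Y)^TA(X-Y)$ in terms of $U,W$ and using $W_S = 0$, the form collapses to something like $4\,U_{-S}^T A_{-S,-S} W_{-S}$ plus cross terms with $U_S$; since $W_{-S}$ is, conditionally, a bounded Ising model and $U_{-S}$ is an independent fixed-sign-times-Ising vector, I can condition further on $U$ and apply Fact~\ref{fact:linear-anti-concentration} to the linear form $b^T W_{-S}$ with $b$ proportional to the relevant row/column combination of $A$, obtaining a conditional variance at least $c \cdot \|A_{-S,-S}\, U_{-S} + (\text{terms})\|_2^2$; averaging and using that $U_{-S}$ has entries of constant variance recovers a constant multiple of $\sum_{i \ne j, \, i,j \notin S} A_{ij}^2$.

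The main obstacle — and where the argument needs care — is that conditioning on $S$ throws away the rows and columns of $A$ indexed by $S$, so a single conditioning cannot see $\|A\|_F^2$ but only $\|A_{-S,-S}\|_F^2$; one must argue that \emph{in expectation over $S$} a constant fraction of the Frobenius mass survives. Because the model is $(M,\alpha)$-bounded, each coordinate satisfies $\Pr[X_i \ne Y_i] \ge c_0 > 0$ for a constant $c_0 = c_0(M,\alpha)$ (two independent copies, each marginal bounded away from $0$ and $1$ by Fact~\ref{fact:marginal}), and moreover the weak-dependence structure should let one show $\Pr[i \notin S, j \notin S] \ge c_0'$ for each pair $i \ne j$; summing, $\E[\sum_{i\ne j,\,i,j\notin S} A_{ij}^2] \ge c_0' \|A\|_F^2$. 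The subtlety is that the conditional linear-anti-concentration constant and the conditional distribution of $U$ must be controlled \emph{uniformly} over the realization of $S$ and the boundary values, which is exactly what $(M,\alpha)$-boundedness buys us (the conditional Ising models stay uniformly bounded), so combining the conditional lower bounds via the law of total variance and the tower rule yields $\var[(X-v)^TA(X-v)] \ge c(M,\alpha)\|A\|_F^2$. A second, more technical point to handle is the cross terms involving $U_S$ (the frozen coordinates): these contribute a linear form in $W_{-S}$ as well, so rather than lower-bounding $\var[b^TW_{-S}]$ for a single fixed $b$ I would lower bound $\E_U \var_W[\,b(U)^T W_{-S}\,]$ and check that the $U$-average of $\|b(U)\|_2^2$ is still $\gtrsim \|A_{-S,-S}\|_F^2$, which holds because the frozen-coordinate contribution is an additive vector independent of the fluctuating part $A_{-S,-S}U_{-S}$.
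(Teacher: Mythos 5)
Your first step is the paper's: decouple $\var[(X-v)^TA(X-v)]$ as $\tfrac12\E[((X-Y)^TA(X+Y-2v))^2]$ and condition on $S=\{i:X_i=Y_i\}$, observing that conditional on $S$ the blocks $X_S$ and $X_{-S}$ become independent bounded Ising models so that linear anti-concentration (Fact~\ref{fact:linear-anti-concentration}) can be applied. However, your description of what this conditioning produces has two concrete errors, and the second one is what opens a genuine gap. First, the conditional law is not obtained by ``shifting the external field by the boundary'': computing $\Pr[X=x\mid S]\propto P_\theta(x)\,P_\theta(x_S,-x_{-S})$, the cross-block interaction $\theta_{ij}x_ix_j$ ($i\in S,j\notin S$) and the external field on $-S$ both \emph{cancel}, so conditional on $S$ the block $X_{-S}$ has zero external field and doubled interaction; this requires multiplying two Ising densities, not simply invoking Fact~\ref{fact:cond}. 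Second, and more importantly, you have the block structure backwards: with $U=(X+Y)/2$ and $W=(X-Y)/2$ one has $W_S=0$ \emph{and} $U_{-S}=0$, so the bilinear form collapses to $4\,U_S^T A_{S,-S}\,W_{-S}$ (plus a deterministic shift from $v$), not to $4\,U_{-S}^TA_{-S,-S}W_{-S}$. Consequently, the Frobenius mass that survives a single $S$-conditioning is carried by the off-diagonal block $A_{-S,S}$, not $A_{-S,-S}$, and your proposed per-pair estimate $\Pr[i,j\notin S]\ge c_0'$ is aimed at the wrong event. (One could try to repair this route with $\Pr[i\notin S,\,j\in S]\ge c_0'$, but you would then also need to carefully track the $v$-dependent deterministic part of $W^S_j=\mathbb{I}[j\in S]X_j-v_j$ and a second conditional variance estimate over $X_S$; as written, the tower-rule paragraph glosses over exactly these steps.)

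The paper avoids this entire aggregation-over-$S$ problem. After the first application of Fact~\ref{fact:linear-anti-concentration} conditional on $S$, it collapses the bound back to the \emph{unconditional} quantity $\E\big[\|A'(X+Y-2v)\|_2^2\big]$, where $A'$ zeroes out the rows $i$ with $X_i=Y_i$ but keeps all columns. It then decomposes by row $i$ and, for each $i$ separately, conditions only on the single pair $(X_i,Y_i)$: Fact~\ref{fact:marginal} gives $\Pr[X_i=1,Y_i=-1]\ge c(M,\alpha)$, and Fact~\ref{fact:cond} plus a second application of Fact~\ref{fact:linear-anti-concentration} to the conditional laws of $X_{-i}$ and $Y_{-i}$ gives a per-row contribution $\gtrsim\|a^i\|_2^2$. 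Summing over $i$ recovers $\|A\|_F^2$ directly, with no joint probability estimate over pairs $(i,j)$ and no bookkeeping of which block of $A$ survives which conditioning. If you want to push your version through, the key fix is to replace the ``$A_{-S,-S}$ survives'' picture with the off-diagonal block and replace the average over $S$ by this per-row re-conditioning.
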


\noindent
Before proving the theorem, we provide a brief outline of the proof. By definition, we can write
\begin{align*}
\var[(X-v)^TA(X-v)]&=\frac{1}{2}\E\big[\big((X-v)^TA(X-v)-(Y-v)^TA(Y-v)\big)^2\big]\notag\\
&=\frac{1}{2}\E\big[\big((X-Y)^TA(X+Y-2v)\big)^2\big] \;.
\end{align*}
Since there are dependencies between each $X_i$ and $Y_i$,
it is not easy to bound from below the expectation of the quadratic form directly.
By Fact~\ref{fact:linear-anti-concentration}, we know that \new{$\cov[X]\succeq c'(M,\alpha)\,I$},
for some universal constant \new{$c'(M,\alpha)>0$}. A natural idea is to reduce
the original problem to lower bounding the variance of a linear form.

Define the random variables $S=\{i\in[d]\mid X_i=Y_i\}$ and
$A^S_{ij}=A_{ij}, \forall i\notin S,j\in[d],\;W^S_i=\mathbb{I}[i\in S]X_i-v_i, \forall i\in[d]$.
The key observation is that conditioning on a fixed set $S$,
the marginal distributions of $X_S$ and $X_{-S}$ are
{\em independent} $(2M,2\alpha)$-bounded Ising model distributions.
In addition, conditioning on a fixed set $S$, $X-Y$ only depends
on $X_{-S}$, and $X+Y-2v$ only depends on $X_S$.
Therefore, we can write
\begin{align*}
&\quad\E_{(X,Y)}\big[\big((X-Y)^TA(X+Y-2v)\big)^2\mid S\big]\\
&=\E_{(X_S,X_{-S})}\big[\big(X^T_{-S}A^SW^S\big)^2\mid S\big]=\E_{X_S}\big[\E_{X_{-S}}\big[\big(X^T_{-S}A^SW^S\big)^2\mid X_S,S\big]\mid S\big]\\
&\ge\E_{X_S}\big[\lambda_\mathrm{min}\big(\E_{X_{-S}}\big[X_{-S}X_{-S}^T\mid S\big]\big)\|A^SW^S\|_2^2\mid S\big]\\
&\ge c'(M,\alpha)\E\big[\|A^SW^S\|_2^2\mid S\big] \;,
\end{align*}
where $\lambda_\mathrm{min}\big(\E_{X_{-S}}\big[X_{-S}X_{-S}^T\mid S\big]\big)$ denotes the minimum eigenvalue of $\E_{X_{-S}}\big[X_{-S}X_{-S}^T\mid S\big]$.
Given this, we can express $\E\big[\|A^SW^S\|_2^2\mid S\big]$
in terms of the variance of a linear form,
and apply Fact~\ref{fact:linear-anti-concentration} again
to obtain the desired lower bound.

We can now proceed with the formal proof.

\begin{proof}[Proof of Theorem~\ref{thm:ac}]
By definition, we have that
\begin{align}\label{eq:ac1}
\var[(X-v)^TA(X-v)]&=\frac{1}{2}\E\left[\left((X-v)^TA(X-v)-(Y-v)^TA(Y-v)\right)^2\right]\notag\\
&=\frac{1}{2}\E\left[\left((X-Y)^TA(X+Y-2v)\right)^2\right] \;,
\end{align}
where $Y$ is an independent copy of $X$. Let $S=\{i\in[d]\mid X_i=Y_i\}$. Then we can write
\begin{align*}
\E\left[\left((X-Y)^TA(X+Y-2v)\right)^2\right]
&=16 \E\left[\E\Bigg[\Bigg(\sum_{i\notin S}{X_i\Bigg(\sum_{j\in S}{A_{ij}(X_j-v_j)}-\sum_{j\notin S}{v_jA_{ij}}\Bigg)}\Bigg)^2\Bigg]\Bigg| S\right]\\
&=16\E\left[\E\left[\left(X^T_{-S}A^SW^S\right)^2\mid S\right]\right] \;,
\end{align*}
where $A^S_{ij}=A_{ij}$ for all $i\notin S,j\in[d]$ and $W^S_i=\mathbb{I}[i\in S]X_i-v_i$, for all $i\in[d]$.

Now for a fixed subset $S\subseteq[d]$, we calculate the conditional probability $\Pr[X=x\mid S]$. 
By our definition of $S$, we have that
\begin{align*}
\Pr[X=x\mid S]
&=\Pr[X=x\wedge Y_S=x_S\wedge Y_{-S}=-x_{-S}\mid S]\\
&=\frac{\Pr[[X=x\wedge Y_S=x_S\wedge Y_{-S}=-x_{-S}]}{\Pr[S]}\\
&=\frac{\exp\Bigg(2\sum_{i\in S,j\in S}{\theta_{ij}x_ix_j}+2\sum_{i\notin S,j\notin S}{\theta_{ij}x_ix_j}
+ 2\sum_{i\in S}{\theta_ix_i}\Bigg)}{Z(\theta)^2\Pr[S]} \;,
\end{align*}
where $Z(\theta)$ is the partition function of Ising model $P_\theta$. 
Therefore conditioning on $S$, the marginal distribution of $X$ is exactly an Ising model distribution
with parameters
\begin{align*}
\theta^S_{ij}=
\begin{cases}
2\theta_{ij}& i\in S,j\in S\text{ or } i \notin S, j \notin S,\\
0&\text{ otherwise} \;,
\end{cases}
\qquad\text{and}\qquad 
\theta^S_i=
\begin{cases}
2\theta_i & i\in S,\\
0&i\notin S,
\end{cases}
\end{align*}
which implies that conditioning on $S$, the marginal distribution $X_S$ and $X_{-S}$ 
are independent $(2M,2\alpha)$-bounded Ising model distributions and $\E[X_{-S}\mid S]=0$. 
Therefore, from Fact~\ref{fact:linear-anti-concentration}, 
there is a universal constant $c_1(M,\alpha)>0$ such that
\begin{align*}
\E\left[\left(X^T_{-S}A^SW^S\right)^2\mid S\right]
&=\E_{X_S}\left[\E_{X_{-S}}\left[\left(X^T_{-S}A^SW^S\right)^2\mid X_S,S\right]\mid S\right]\\
&\ge\E_{X_S}\left[\lambda_\mathrm{min}\left(\E_{X_{-S}}\left[X_{-S}X_{-S}^T\mid S\right]\right)\|A^SW^S\|_2^2\mid S\right]\\
&\ge c_1(M,\alpha)\E\left[\|A^SW^S\|_2^2\mid S\right] \;,
\end{align*}
where $\lambda_\mathrm{min}\left(\E_{X_{-S}}\left[X_{-S}X_{-S}^T\mid S\right]\right)$ 
denotes the minimum eigenvalue of $\E_{X_{-S}}\left[X_{-S}X_{-S}^T\mid S\right]$ 
and in the first inequality, we use the fact that conditioning on $S$, 
$X_S$ and $X_{-S}$ are independent.
Therefore, we have that
\begin{align}\label{eq:ac2}
&\quad\E\left[\left((X-Y)^TA(X+Y-2v)\right)^2\right]
=16\E\left[\E\left[\left(X^T_{-S}A^SW^S\right)^2\mid S\right]\right]\\
&\ge16c_1(M,\alpha)\E\left[\E\left[\|A^SW^S\|_2^2\mid S\right]\right]\notag 
=16c_1(M,\alpha)\E\left[\E\Bigg[\sum_{i\notin S}\Bigg(\sum_{j\in[d]}{a_{ij}(\mathbb{I}[j\in S]X_j-v_j)}\Bigg)^2\Bigg|S\Bigg]\right]\notag\\
&=4c_1(M,\alpha)\E\left[\|A'(X+Y-2v)\|_2^2\right] \;,
\end{align}
where $A'_{ij}=\mathbb{I}[X_i\ne Y_i]A_{ij}$ for all $i,j\in[d]$. 
Now we write $A'=[\mathbb{I}[X_1\ne Y_1]a^1,\cdots,\mathbb{I}[X_d\ne Y_d]a^d]^T$, 
where $(a^i)^T$ denotes the $i$-th row vector of matrix $A$. 
By linearity of expectation, we have that
\begin{align}\label{eq:ac3}
\E\left[\|A'(X+Y-2v)\|_2^2\right]=\sum_{i=1}^{d}{\E\left[\langle \mathbb{I}[X_i\ne Y_i]a^i,X+Y-2v\rangle^2\right]} \;.
\end{align}
By the law of total expectation, we can write
\begin{align}\label{eq:ac4}
&\quad\E\left[\langle\mathbb{I}[X_i\ne Y_i]a^i,X+Y-2v\rangle^2\right]\\
&=\Pr[X_i=1,Y_i=-1] \cdot \E\left[\langle a^i,X + Y-2v\rangle^2\mid X_i=1,Y_i=-1\right]\notag\\
&\quad+\Pr[X_i=-1,Y_i=1] \cdot \E\left[\langle a^i,X+Y-2v\rangle^2\mid X_i=-1,Y_i=1\right]\notag\\
&\ge2\left(\frac{\exp(-2(\alpha+M))}{1+\exp(-2(\alpha+M))}\right)^2\E\left[\langle a^i,X + Y-2v\rangle^2\mid X_i=1,Y_i=-1\right]\;,
\end{align}
where the inequality comes from Fact~\ref{fact:marginal} and the fact that $Y$ is an independent copy of $X$.

Now we try to bound $\E\left[\langle a^i,X+Y-2v\rangle^2\mid X_i=1,Y_i=-1\right]$. 
From Fact~\ref{fact:cond}, conditioning on $X_i=q\in\{\pm1\}$, 
$X_{-i}$ is an Ising model with parameter $\theta'$ satisfying the following property
\begin{align*}
\max_{j\in[d]\setminus\{i\}}{|\theta'_j|}\le M+\alpha,\qquad\max_{j\in[d]\setminus\{i\}}{\sum_{k\in[d]\setminus\{i,j\}}|\theta'_{jk}|}\le M \;,
\end{align*} 
which implies that conditioning on $X_i=q$, $X_{-i}$ is an $(M,M+\alpha)$-bounded Ising model.
Note that $X_{-i}$ and $Y_{-i}$ are independent, 
conditioning on $X_i=-1,Y_i=1$, from Fact~\ref{fact:linear-anti-concentration}, 
there is a constant $c_2(M,\alpha)>0$ such that
\begin{align}\label{eq:ac5}
\E\left[\langle a^i,X+Y-2v\rangle^2\mid X_i=1,Y_i=-1\right]
&\ge \var\left[\langle a^i,X+Y\rangle\mid X_i=1,Y_i=-1\right]\notag\\
&=\var[\langle a^i,X\rangle\mid X_i=1]+\var[\langle a^i,Y\rangle\mid Y_i=-1]\notag\\
&\ge c_2(M,\alpha)\|a^i\|_2^2 \;,
\end{align}
Combining~\eqref{eq:ac3},~\eqref{eq:ac4} and~\eqref{eq:ac5}, we obtain that
\begin{align}\label{eq:ac6}
\E\left[\|A'(X+Y-2v)\|_2^2\right]
&=\sum_{i=1}^{d}{\E\left[\langle\mathbb{I}[X_i\ne Y_i]a^i,X+Y-2v\rangle^2\right]}\notag\\
&\ge2\left(\frac{\exp(-2(\alpha+M))}{1+\exp(-2(\alpha+M))}\right)^2\sum_{i=1}^{d}\E\left[\langle a^i,X + Y-2v\rangle^2\mid X_i=1,Y_i=-1\right]\notag\\
&\ge2\left(\frac{\exp(-2(\alpha+M))}{1+\exp(-2(\alpha+M))}\right)^2c_2(M,\alpha)\|A\|_F^2 \;.
\end{align}
Combining~\eqref{eq:ac1},~\eqref{eq:ac2} and~\eqref{eq:ac6}, we get that there exists a constant $c(M,\alpha)>0$ such that
\begin{align*}
\var[(X-v)^TA(X-v)]=\frac{1}{2}\E\left[\left((X-Y)^TA(X+Y-2v)\right)^2\right]\ge c(M,\alpha)\|A\|_F^2 \;.
\end{align*}
This completes the proof.
\end{proof}

We are now ready to prove the main result of this section.

\smallskip

\begin{proof}[Proof of Theorem~\ref{thm:main-zero-ext}]
Let 
$$\Omega=\left\{ (\theta_{ij})_{1\le i<j\le d}\in\R^{d\times(d-1)/2}\mid\max_{i\in[d]}\littlesum_{j=1}^{i-1}|\theta_{ji}|+\littlesum_{j=i+1}^{d}|\theta_{ij}|\le1-\eta \right\} \;,$$
where $\eta>0$ is the constant in Definition~\ref{def:high-temp}.
Let $\theta\in\Omega$ and $P_{\theta}$ be the corresponding Ising model distribution. By definition, $P_{\theta}$\new{\footnote{For simplicity, we also use $\theta$ to denote the $d\times d$ symmetric matrix with zero diagonal.}} is an exponential family with sufficient statistics $T(x)=(x_ix_j)_{1\le i<j\le d}$. 
In order to apply Algorithm~\ref{main-algorithm}, we check each statement in 
Condition~\ref{exp-family-cond} one by one.
By our choice of $\Omega$, we know that $\mathrm{diam}(\Omega)=O(d)$ 
and we can efficiently compute the projection of any point $z\in\mathbb{R}^{d\times(d-1)/2}$.
From Fact~\ref{fact:Glauber-dynamics}, we can sample from $P_{\theta}$ 
within total variation distance $\gamma$ in time $O(d(\log d+\log(1/\gamma)))$, for any $\gamma>0$. 
Therefore, the third statement holds.
From Lemma~\ref{lem:conc-Ising}, there is a universal constant $c>0$ such that for any symmetric matrix $A\in\mathbb{R}^{d\times d}$ with zero diagonal and any $t>0$, we have that
$\Pr_{X\sim P_{\theta}}[|X^TAX-\E[X^TAX]|>t]\le2\exp\left(-(ct)/\|A\|_F\right)$,
which implies the second statement in Condition~\ref{exp-family-cond}.
Moreover, \new{by} Theorem~\ref{thm:ac}, we know that there is a universal constant $c'>0$ 
such that for any symmetric matrix $A\in\mathbb{R}^{d\times d}$ with zero diagonal, we have that
$\var[X^TAX]\ge c'\|A\|_F^2$, which implies the first statement in Condition~\ref{exp-family-cond}.
Therefore, by Theorem~\ref{thm:main-exp-family}, we can efficiently obtain an estimate 
$\wh{\theta}\in\Omega$ such that 
\new{$\dtv(P_{\wh{\theta}},P_{\theta^*})\le O(\|\wh{\theta}-\theta^*\|_F)\le O(\epsilon\log(1/\epsilon))$ 
with probability at least $99/100$. In addition, by our algorithm $\wh{\theta}\in\Omega$, 
and thus the output hypothesis satisfies Dobrushin's condition.}
\end{proof}

\section{Robustly Learning Ising Models with Non-zero External Field}\label{sec:ising-external}

In this section, we provide an efficient algorithm that robustly learns Ising models
with non-zero external field under certain technical assumptions. Specifically, we show:

\begin{theorem}\label{thm:ising-ext}
Let $P_{\theta^*}$ be an Ising model with $\max_{i\in[d]}\sum_{j\ne i}|\theta^*_{ij}|\le M$ and $\max_{i\in[d]}{|\theta^*_i|}\le\alpha$ for some $0\le M<1$ and $\alpha\ge0$. Let $0<\epsilon\le\epsilon_0$ and $S'$ be an $\epsilon$-corrupted set of samples from $P_{\theta^*}$. Furthermore, assume that for some $c_0>0$ and some $c_1>0$ sufficiently large 
\begin{align}\label{eq:non-zero-constraint}
4\left(\frac{M}{1-M}+c_1\sqrt{\epsilon}\right)^2\le(1-c_0)\left(8\left(\frac{\exp(-2(\alpha+2M))}{1+\exp(-2(\alpha+2M))}\right)^2-\frac{2M}{1-M}-c_0\right)
\end{align}
holds. Let $N$ be the size of $S'$.
\new{Then there is a $\poly(N,d)$ time algorithm that, for some $N=\wt{O}_{\alpha,M,c_0}(d^2/\epsilon^2)$}, on input $S'$ and $\epsilon$, returns an Ising model $P_{\wh{\theta}}$ such that with probability at least $99/100$, we have that $\dtv(P_{\wh{\theta}},P_{\theta^*})\le O_{\alpha,M,c_0}(\epsilon\log(1/\epsilon))$. In addition, $P_{\wh{\theta}}$ satisfies the Dobrushin's condition.
\end{theorem}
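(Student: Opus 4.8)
The plan is to derive Theorem~\ref{thm:ising-ext} from the general exponential-family result, Theorem~\ref{thm:main-exp-family} (i.e., to run Algorithm~\ref{main-algorithm}), after reformulating the Ising model with external field as an exponential family for which Condition~\ref{exp-family-cond} holds. The obvious sufficient statistics are $T(x) = ((x_ix_j)_{1\le i<j\le d},(x_i)_{i\in[d]})$; however, unlike the zero-field case where $\E[X]=0$ automatically, here the quadratic part of $T$ must be re-centered, since for a general symmetric zero-diagonal $A$ the variance of the uncentered form $X^TAX$ can grow with $d$ (so $T$ would fail to have the uniform sub-exponential tails required by Condition~\ref{exp-family-cond}). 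So I would first run a short preliminary robust mean estimation on the degree-$1$ statistics — cheap, since $c(M,\alpha)\,I \preceq \cov[X] \preceq (1-M)^{-1}I$ by Facts~\ref{fact:linear-anti-concentration} and~\ref{fact:conditional-mean-distance}, so Fact~\ref{bounded-cov} applies — to obtain $\wh v$ with $\|\wh v - \E[X]\|_2 = O(\sqrt\epsilon)$, and then work with the (still exponential-family) sufficient statistics $\tilde T(x) = (((x_i - \wh v_i)(x_j - \wh v_j))_{i<j}, (x_i)_{i})$, over the convex parameter set $\Omega$ consisting of (a slight enlargement of the) $(M',\alpha')$-bounded parameters with $M'<1$.

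Conditions (3) and (4) of Condition~\ref{exp-family-cond} are then immediate: $\Omega$ is convex with $\mathrm{diam}(\Omega)=\poly(d)$ and trivial approximate projections, and since $M'<1$ every $P_\theta$ with $\theta\in\Omega$ satisfies Dobrushin's condition, so Fact~\ref{fact:Glauber-dynamics} gives efficient approximate sampling. For condition (2), any unit test direction $w$ gives $\langle w,\tilde T(x)\rangle = (x-\wh v)^TA(x-\wh v) + b^Tx$ for a symmetric zero-diagonal $A$ and a vector $b$ with $\|A\|_F^2 + \|b\|_2^2 = \Theta(1)$; since $\|\wh v - \E[X]\|_2 = O(\sqrt\epsilon)$ is small, Lemma~\ref{lem:conc-Ising} yields sub-exponential tails for $\langle w,\tilde T(X)\rangle$ with a universal parameter.

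The crux is condition (1): for every $\theta\in\Omega$, $\cov_{X\sim P_\theta}[\tilde T(X)] \succeq c\,I$, equivalently $\var_{X\sim P_\theta}[(X-\wh v)^TA(X-\wh v) + b^TX] \ge c(\|A\|_F^2 + \|b\|_2^2)$ for all symmetric zero-diagonal $A$ and all $b$ — a joint degree-$\le 2$ anti-concentration statement, the analogue of Theorem~\ref{thm:ac} with a linear term present, and the place where hypothesis~\eqref{eq:non-zero-constraint} is used. I would prove it by extending the decoupling argument of Theorem~\ref{thm:ac}: with $Y$ an independent copy of $X$ and $S=\{i:X_i=Y_i\}$, $\var[(X-\wh v)^TA(X-\wh v)+b^TX] = \tfrac12\E[((X-Y)^TA(X+Y-2\wh v) + b^T(X-Y))^2]$, and conditioning on $S$ makes $X_S$ and $X_{-S}$ independent bounded Ising models with $X_{-S}$ of zero external field, reducing the inner conditional expectation to $\E[(X_{-S}^Tu)^2]$ for a vector $u$ depending only on $X_S$, with $u_i = 2\sum_{j\in S}A_{ij}(X_j - \wh v_j) - 2\sum_{j\notin S}A_{ij}\wh v_j + b_i$. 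Fact~\ref{fact:linear-anti-concentration} applied to $X_{-S}$ gives $\var[\,\cdot\,] \gtrsim \E_S\E_{X_S}\|u\|_2^2$, and I would expand the latter into a ``diagonal'' contribution, bounded below by a constant multiple of $\|A\|_F^2 + \|b\|_2^2$ using that $\Pr[i\in -S,\ j\in S]$ is bounded below (Fact~\ref{fact:marginal}), plus ``cross'' contributions coming from the conditional means $\E[X_j\mid S]$ and from $\wh v$, which are controlled by $\alpha$, by $M/(1-M)$ via Fact~\ref{fact:conditional-mean-distance}, and by $\|\wh v-\E[X]\|_2 = O(\sqrt\epsilon)$. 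Inequality~\eqref{eq:non-zero-constraint} is precisely the condition under which the Cauchy--Schwarz estimate on the cross contribution cannot cancel the diagonal one, so that $\var[\,\cdot\,] \ge c_0(\|A\|_F^2 + \|b\|_2^2)$. This linear--quadratic cross-covariance bookkeeping is the main obstacle, and is where the argument genuinely departs from the zero-field case.

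Given Condition~\ref{exp-family-cond}, Theorem~\ref{thm:main-exp-family} applied with the $\tilde T$-parametrization outputs a parameter within $\ell_2$-distance $O(\epsilon\log(1/\epsilon))$ of the true one, hence by Lemma~\ref{lem:para-to-dis} an Ising model $P_{\wh\theta}$ with $\dtv(P_{\wh\theta},P_{\theta^*}) = O(\epsilon\log(1/\epsilon))$; translating back to the standard parametrization gives the claimed $\wh\theta$ — which need not be Frobenius-close to $\theta^*$, since the change of variables mixes the coordinates through $\wh v$ — and $\wh\theta\in\Omega$ with $M'<1$ forces $P_{\wh\theta}$ to satisfy Dobrushin's condition. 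The sample complexity is $\wt O(d^2/\epsilon^2)$ because $\dim\tilde T = \Theta(d^2)$, and the running time is $\poly(N,d)$ exactly as for Theorem~\ref{thm:main-exp-family}.
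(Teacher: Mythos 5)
Your proposal is correct and follows essentially the same route as the paper's proof: robustly estimate $\E[X]$ first via Fact~\ref{bounded-cov}, rewrite $P_\theta$ in the $v$-centered form with sufficient statistics $\tilde T(x)=((x_i-v_i)(x_j-v_j)_{i<j},(x_i)_i)$ over the convex set of $(M,\alpha)$-bounded parameters, verify Condition~\ref{exp-family-cond} (Glauber for sampling, Lemma~\ref{lem:conc-Ising} for sub-exponential tails, and a new anti-concentration lower bound for the linear--quadratic form for statement~(1)), and then invoke Theorem~\ref{thm:main-exp-family} followed by Lemma~\ref{lem:para-to-dis}. The anti-concentration step you sketch — decoupling via $S=\{i:X_i=Y_i\}$, applying Fact~\ref{fact:linear-anti-concentration} to the conditionally zero-field $X_{-S}$, and then separating diagonal from cross contributions controlled by Facts~\ref{fact:marginal} and~\ref{fact:conditional-mean-distance} so that hypothesis~\eqref{eq:non-zero-constraint} becomes a discriminant condition — is exactly Theorem~\ref{thm:ac-2} of the paper, including its dependence on $\|v-\E[X]\|_2\le\delta$.
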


Notice that the left-hand side of~\eqref{eq:non-zero-constraint} is increasing in $M$ 
and the right-hand side of~\eqref{eq:non-zero-constraint} is decreasing in $M, \alpha$.
Intuitively speaking, as long as the dependencies among each point and the external fields are sufficiently small, we can robustly learn the Ising model distribution in total variation distance. 

For simplicity, we will suppress any possible dependence on $M, \alpha, c_0$ 
in our asymptotic notation in the remaining part of this section.

\medskip

Similar to the zero external field case, to prove Theorem~\ref{thm:ising-ext}, 
we view the Ising model distribution $P_{\theta}$ 
as an instance of an exponential family and apply Algorithm~\ref{main-algorithm}. 
However, if we choose the sufficient statistics $T(x)=((x_ix_j)_{1\le i<j\le d},(x_i)_{i\in[d]})$ in the straightforward way,  the first statement in Condition~\ref{exp-family-cond} will not hold. 
For instance, consider the Ising model $P_{\theta}$ with $\theta_{ij}=0,\forall i,j\in[d]$, 
and $\theta_i=\beta,\forall i\in[d]$, for some $\beta>0$, 
such that $\E_{X\sim P_\theta}[X_i]=1/2, \forall i\in[d]$. 
Let $A\in\R^{d\times d}$ be such that $A_{ij}=\frac{1}{\sqrt{d(d-1)(d+1)}},\forall i\ne j$, 
and $b_i=-\sqrt{\frac{d-1}{d(d+1)}},\forall i\in[d]$. 
In this case, we have that $2\|A\|_F^2+\|b\|_2^2=1$, and
\begin{align*}
\var[X^TAX+b^TX]&=\var[(X-v)^TA(X-v)+(2Av+b)^TX]\\
&=\var[(X-v)^TA(X-v)]\\&\le c\|A\|_F^2=\frac{c}{d+1} \;,
\end{align*}
where the last inequality follows from Fact~\ref{fact:var-ub} and $c>0$ is an absolute constant. 

To address this issue, we rewrite the density of an Ising model as the following ``$v$-centered form''.
Let $v\in\R^d$ be an arbitrary fixed vector. By definition of the Ising model, we have that
\begin{align*}
P_\theta(x)
&=\frac{1}{Z(\theta)}\exp\left(\frac{1}{2}\sum_{i,j\in[d]}{\theta_{ij}x_ix_j}+\sum_{i=1}^{d}{\theta_ix_i}\right)\\
&=\frac{1}{Z(\theta)}\exp\left(\frac{1}{2}\sum_{i,j\in[d]}{\theta_{ij}(x_i-v_i)(x_j-v_j)}
+\sum_{i=1}^{d}{\Bigg(\theta_i+\sum_{j\in[d]}{\theta_{ij}v_j}\Bigg)x_i}\right)\\
&=\frac{1}{Z(\theta)}\exp\left(\frac{1}{2}(x-v)^TJ(\theta)(x-v)+h(\theta)^Tx\right) \;,
\end{align*}
where $J(\theta)_{ij}=\theta_{ij},\forall i,j\in[d]$ and $h(\theta)_i=\theta_i+\sum_{j\in[d]}{\theta_{ij}v_j}$. 
If we write the probability density function  $P_\theta(x)$ in the ``$v$-centered form'' 
as an instance of an exponential family, the sufficient statistics $T(x)$ will be 
$$T(x)=((x_i-v_i)(x_j-v_j)_{1\le i<j\le d},(x_i)_{1\le i\le d}) \;,$$ 
and the projection of $T(x)$ on a fixed direction is
$$(X-v)^TA(X-v)+b^TX \;,$$ 
where $A\in\R^{d\times d}$ is a symmetric matrix 
with zero diagonal and $b\in\R^d$ with $2\|A\|_F^2+\|b\|_2^2=1$.

In this way, by taking $v$ to be an estimate of $\E_{X\sim P_{\theta^*}}[X]$, 
we are able to prove the following lower bound on the covariance 
of the sufficient statistics $T(x)$, and then apply Algorithm~\ref{main-algorithm} 
to robustly learn the parameter $J(\theta^*)$ and $h(\theta^*)$ in the ``$v$-centered form''.

Our anti-concentration result for the non-zero external field case is the following:

\begin{theorem}\label{thm:ac-2}
Let $X\sim P_{\theta}$ be an Ising model with $\max_{i\in[d]}\sum_{j\ne i}|\theta_{ij}|\le M$ 
and $\max_{i\in[d]}{|\theta_i|}\le\alpha$, for some $0\le M<1$ and $\alpha\ge0$.
Let $v\in\R^d$ be a vector such that $\|v-\E[X]\|_2\le\delta$, for some $\delta>0$. 
If there is a constant $c_0>0$ such that
\begin{align*}
4\left(\frac{M}{1-M}+\delta\right)^2\le(1-c_0)\left(8\left(\frac{\exp(-2(\alpha+2M))}{1+\exp(-2(\alpha+2M))}\right)^2-\frac{2M}{1-M}-c_0\right) \;,
\end{align*}
then there exists another constant $c(\alpha,M,c_0)>0$ such that
\begin{align*}
\var[(X-v)^TA(X-v)+b^TX]\ge c(\alpha,M,c_0)(\|A\|_F^2+\|b\|_2^2)
\end{align*}
holds for all symmetric matrix $A\in\R^{d\times d}$ with zero diagonal and $b\in\R^d$.
\end{theorem}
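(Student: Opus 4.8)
The plan is to follow the blueprint of the proof of Theorem~\ref{thm:ac}, carrying the linear term $b^TX$ through the decoupling argument and paying extra attention to the interaction between the quadratic and linear parts; the hypothesis of the theorem is exactly what keeps this interaction under control. As in the proof of Theorem~\ref{thm:ac}, start from $\var[(X-v)^TA(X-v)+b^TX]=\frac12\E[((X-Y)^TA(X+Y-2v)+b^T(X-Y))^2]$ for $Y$ an independent copy of $X$ (using symmetry of $A$), and condition on the random set $S=\{i:X_i=Y_i\}$. Conditioned on $S$, the blocks $X_S$ and $X_{-S}$ are independent $(2M,2\alpha)$-bounded Ising models with $\E[X_{-S}\mid S]=0$, and a direct computation parallel to the one in the proof of Theorem~\ref{thm:ac} shows that the quantity inside the expectation equals $\sum_{i\notin S}u_iX_i$, where $u_i=2\bigl((A(X+Y-2v))_i+b_i\bigr)$; since conditioned on $S$ one has $(X+Y-2v)_j=2(X_j-v_j)$ for $j\in S$ and $(X+Y-2v)_j=-2v_j$ for $j\notin S$, each $u_i$ is a function of $X_S$ alone. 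Applying Fact~\ref{fact:linear-anti-concentration} to $X_{-S}\mid S$ gives $\E[(\sum_{i\notin S}u_iX_i)^2\mid X_S,S]\ge c_1\sum_{i\notin S}u_i^2$ for a constant $c_1=c_1(M,\alpha)>0$, hence, writing $Z=X+Y-2v$, we obtain $\var[(X-v)^TA(X-v)+b^TX]\ge 2c_1\sum_{i=1}^d\E[\mathbb{I}[X_i\ne Y_i]\,((AZ)_i+b_i)^2]$.

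Fix a coordinate $i$ and condition on $\{X_i=q,\,Y_i=-q\}$ with $q\in\{\pm1\}$. Since $A$ has zero diagonal, $(AZ)_i=U+V$ where $U=\langle a^i,X-v\rangle$, $V=\langle a^i,Y-v\rangle$ and $a^i$ is the $i$-th row of $A$; by Fact~\ref{fact:cond} the conditional laws of $X_{-i}$ and $Y_{-i}$ are independent $(M,M+\alpha)$-bounded Ising models, so $\E[((AZ)_i+b_i)^2\mid X_i=q,Y_i=-q]=\var[U]+\var[V]+(\E[U]+\E[V]+b_i)^2$. For the variances I use the elementary estimate $\var[c^TW]\ge\bigl(\min_j\var[W_j]-\max_j\sum_{k\ne j}|\cov[W_j,W_k]|\bigr)\|c\|_2^2$; applied to the $(M,M+\alpha)$-bounded conditional model (whose Dobrushin parameter is $1-M$), together with Fact~\ref{fact:marginal} and the covariance bound of Fact~\ref{fact:conditional-mean-distance}, it gives $\var[U],\var[V]\ge\gamma\|a^i\|_2^2$, where $\gamma:=4\bigl(\frac{\exp(-2(\alpha+2M))}{1+\exp(-2(\alpha+2M))}\bigr)^2-\frac{M}{1-M}$; note that the hypothesis of the theorem forces $c_0\in(0,1)$ and $\gamma\ge c_0/2>0$. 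For the mean term one checks that $\E[U]+\E[V]=\sum_{j\ne i}A_{ij}(\mu^1_j+\mu^{-1}_j-2v_j)$ independently of $q$, where $\mu^{\pm1}_j=\E[X_j\mid X_i=\pm1]$; bounding $|\mu^1_j+\mu^{-1}_j-2v_j|\le|\mu^1_j-\mu^{-1}_j|+2|\E[X_j]-v_j|$ and then using the $\ell_1$ bound $\sum_{j\ne i}|\mu^1_j-\mu^{-1}_j|\le\frac{2M}{1-M}$ from Fact~\ref{fact:conditional-mean-distance} together with $\|\E[X]-v\|_2\le\delta$ yields $|\E[U]+\E[V]|\le 2\bigl(\frac{M}{1-M}+\delta\bigr)\|a^i\|_2$.

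Now combine. Using $(b_i+y)^2\ge(1-t)b_i^2-(\frac1t-1)y^2$ with $t=\frac{1-c_0}{2-c_0}\in(0,1)$ and $y=\E[U]+\E[V]$, we get $\E[((AZ)_i+b_i)^2\mid X_i=q,Y_i=-q]\ge\bigl(2\gamma-\frac{4}{1-c_0}(\frac{M}{1-M}+\delta)^2\bigr)\|a^i\|_2^2+\frac{1}{2-c_0}b_i^2$; since $8\bigl(\frac{\exp(-2(\alpha+2M))}{1+\exp(-2(\alpha+2M))}\bigr)^2-\frac{2M}{1-M}=2\gamma$, the hypothesis of the theorem reads $4(\frac{M}{1-M}+\delta)^2\le(1-c_0)(2\gamma-c_0)$ and hence makes the coefficient of $\|a^i\|_2^2$ at least $c_0$. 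Thus $\E[((AZ)_i+b_i)^2\mid X_i=q,Y_i=-q]\ge c_3(\|a^i\|_2^2+b_i^2)$ with $c_3:=\min(c_0,\frac1{2-c_0})>0$, uniformly in $q$, and since $\Pr[X_i=1,Y_i=-1]\ge\bigl(\frac{\exp(-2(\alpha+M))}{1+\exp(-2(\alpha+M))}\bigr)^2$ by Fact~\ref{fact:marginal}, we obtain $\E[\mathbb{I}[X_i\ne Y_i]((AZ)_i+b_i)^2]\ge c_4(\|a^i\|_2^2+b_i^2)$ for a constant $c_4=c_4(\alpha,M,c_0)>0$. Summing over $i$ (with $\sum_i\|a^i\|_2^2=\|A\|_F^2$ and $\sum_i b_i^2=\|b\|_2^2$) and plugging into the last inequality of the first paragraph gives $\var[(X-v)^TA(X-v)+b^TX]\ge 2c_1c_4(\|A\|_F^2+\|b\|_2^2)$, which is the claim.

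The step I expect to be the crux is the treatment of the mean term: in the zero-field case of Theorem~\ref{thm:ac} one simply discards the nonnegative squared conditional mean after decoupling, whereas here $(\E[U]+\E[V]+b_i)^2$ can be small precisely when $b_i$ nearly cancels the conditional-mean contribution, so one must quantify that contribution (via Fact~\ref{fact:conditional-mean-distance} and the accuracy $\delta$ of $v$) and trade it off against $b_i^2$. Making the constants line up is exactly what the hypothesis of the theorem is engineered for, with the slack parameter $c_0$ guaranteeing that both surviving coefficients stay strictly positive.
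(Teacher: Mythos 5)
Your proposal is correct and follows essentially the same route as the paper's proof: the decoupling identity via an independent copy $Y$, conditioning on $S=\{i:X_i=Y_i\}$ to get independent $(2M,2\alpha)$-bounded blocks, applying Fact~\ref{fact:linear-anti-concentration} to reduce to a sum of per-coordinate terms, conditioning on $X_i\ne Y_i$ and using Facts~\ref{fact:cond},~\ref{fact:marginal},~\ref{fact:conditional-mean-distance} to bound the conditional variance from below and the conditional mean shift from above, and finally invoking the hypothesis to keep the cross-term between $\|a^i\|_2$ and $|b_i|$ from destroying positivity. The only cosmetic difference is in that last algebraic step: the paper expands $(b_i+z)^2\ge b_i^2-2|b_i|\,|z|$ and treats the resulting expression as a quadratic form in $(\|a^i\|_2,|b_i|)$ whose discriminant the hypothesis makes nonpositive, whereas you use the weighted Young inequality $(b_i+y)^2\ge(1-t)b_i^2-(\tfrac1t-1)y^2$ with $t=(1-c_0)/(2-c_0)$; both routes reduce to the identical condition on $M,\alpha,\delta,c_0$.
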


\begin{proof}
By definition, we have that
\begin{align}\label{eq:ac7}
\var[(X-v)^TA(X-v)+b^TX]&=\frac{1}{2}\E\left[\left((X-v)^TA(X-v)-(Y-v)^TA(Y-v)+b^TX-b^TY\right)^2\right]\notag\\
&=\frac{1}{2}\E\left[\left((X-Y)^T\left(A(X+Y-2v)+b\right)\right)^2\right] \;,
\end{align}
where $Y$ is an independent copy of $X$. Let $S=\{i\in[d]\mid X_i=Y_i\}$ and we can write
\begin{align*}
&\quad\E\left[\left((X-Y)^T\left(A(X+Y-2v)+b\right)\right)^2\right]\\
&=\E\left[\E\Bigg[\Bigg(\sum_{i\notin S}{X_i\Bigg(b_i+\sum_{j\in S}{2a_{ij}(X_j-v_j)}-\sum_{j\notin S}{2v_ja_{ij}}\Bigg)}\Bigg)^2\Bigg]\Bigg| S\right]\\&=4\E\left[\E\left[\left(X^T_{-S}\left(A^SW^S+b_{-S}\right)\right)^2\mid S\right]\right],
\end{align*}
where $A^S_{ij}=A_{ij}$ for all $i\notin S,j\in[d]$ and $W^S_i=2\left(\mathbb{I}[i\in S]X_i-v_i\right)$ for all $i\in[d]$.

Now for a fixed subset $S\subseteq[d]$, we calculate the conditional probability $\Pr[X=x\mid S]$. By definition of $S$, we have that
\begin{align*}
\Pr[X=x\mid S]&=\Pr[X=x\wedge Y_S=x_S\wedge Y_{-S}=-x_{-S}\mid S]\\&=\frac{\Pr[[X=x\wedge Y_S=x_S\wedge Y_{-S}=-x_{-S}]}{\Pr[S]}\\&=\frac{\exp\Bigg(2\sum_{i\in S,j\in S}{\theta_{ij}x_ix_j}+2\sum_{i\notin S,j\notin S}{\theta_{ij}x_ix_j}+2\sum_{i\in S}{\theta_ix_i}\Bigg)}{Z(\theta)^2\Pr[S]},
\end{align*}
where $Z(\theta)$ is the partition function of Ising model $P_\theta$. Therefore conditioning on $S$, the marginal distribution of $X$ is exactly an Ising model distribution
with parameters
\begin{align*}
\theta^S_{ij}=
\begin{cases}
2\theta_{ij}& i\in S,j\in S\text{ or } i \notin S, j \notin S,\\
0&\text{ otherwise} \;,
\end{cases}
\qquad\text{and}\qquad 
\theta^S_i=
\begin{cases}
2\theta_i & i\in S,\\
0&i\notin S,
\end{cases}
\end{align*}
which implies that conditioning on $S$, the marginal distribution $X_S$ and $X_{-S}$ are independent $(2M,2\alpha)$-bounded Ising model distributions and $\E[X_{-S}\mid S]=0$. 
Therefore, from Fact~\ref{fact:linear-anti-concentration}, there is a constant $c_1(M,\alpha)>0$ such that
\begin{align*}
\E\left[\left(X^T_{-S}\left(A^SW^S+b_{-S}\right)\right)^2\mid S\right]&=\E_{X_S}\left[\E_{X_{-S}}\left[\left(X^T_{-S}\left(A^SW^S+b_{-S}\right)\right)^2\mid X_S,S\right]\mid S\right]\\&\ge\E_{X_S}\left[\lambda_\mathrm{min}\left(\E_{X_{-S}}\left[X_{-S}X_{-S}^T\mid S\right]\right)\|A^SW^S+b_{-S}\|_2^2\mid S\right]\\&\ge c_1(M,\alpha)\E\left[\|A^SW^S+b_{-S}\|_2^2\mid S\right],
\end{align*}
where $\lambda_\mathrm{min}\left(\E_{X_{-S}}\left[X_{-S}X_{-S}^T\mid S\right]\right)$ denotes the minimum eigenvalue of $\E_{X_{-S}}\left[X_{-S}X_{-S}^T\mid S\right]$ and in the first inequality, we use the fact that conditioning on $S$, $X_S$ and $X_{-S}$ are independent. Therefore, we have that
\begin{align}\label{eq:ac8}
&\quad\E\left[\left((X-Y)^T\left(A(X+Y-2v)+b\right)\right)^2\right]\\
&=4\E\left[\E\left[\left(X^T_{-S}\left(A^SW^S+b_{-S}\right)\right)^2\mid S\right]\right]\notag\\
&\ge4c_1(M,\alpha)\E\left[\E\left[\|A^SW^S+b_{-S}\|_2^2\mid S\right]\right]\\
&=c_1(M,\alpha)\E\left[\E\Bigg[\sum_{i\notin S}\Bigg(b_i+\sum_{j\in[d]}{2A_{ij}(\mathbb{I}[j\in S]X_j-v_j)}\Bigg)^2\Bigg|S\Bigg]\right]\notag\\
&=c_1(M,\alpha)\E\left[\|A'(X+Y-2v)+b'\|_2^2\right],
\end{align}
where $A'_{ij}=\mathbb{I}[X_i\ne Y_i]A_{ij}$ for all $i,j\in[d]$ and $b'_i=\mathbb{I}[X_i\ne Y_i]b_i$ for all $i\in[d]$. 
Now we write $A'=[\mathbb{I}[X_1\ne Y_1]a^1,\cdots,\mathbb{I}[X_d\ne Y_d]a^d]^T$, 
where $(a^i)^T$ denotes the $i$-th row vector of matrix $A$. By linearity of expectation, we have that
\begin{align}\label{eq:ac9}
\E\left[\|A'(X+Y-2v)+b'\|_2^2\right]=\sum_{i=1}^{d}{\E\left[\left(\langle\mathbb{I}[X_i\ne Y_i]a^i,X+Y-2v\rangle+b'_i\right)^2\right]}.
\end{align}
Fix some $i\in[d]$. Note that $Y$ is an independent copy of $X$, and we can write
\begin{align}\label{eq:ac10}
&\quad\E\left[\left(\langle\mathbb{I}[X_i\ne Y_i]a^i,X+Y-2v\rangle+b'_i\right)^2\right]\\
&=\Pr[X_i=1,Y_i=-1]\cdot\E\left[\left(\langle a^i,X+Y-2v\rangle+b_i\right)^2\mid X_i=1,Y_i=-1\right]\notag\\
&\quad+\Pr[X_i=-1,Y_i=1]\cdot\E\left[\left(\langle a^i,X+Y-2v\rangle+b_i\right)^2\mid X_i=-1,Y_i=1\right]\notag\\
&\ge2\left(\frac{\exp(-2(\alpha+M))}{1+\exp(-2(\alpha+M))}\right)^2\cdot\E\left[\left(\langle a^i,X+Y-2v\rangle+b_i\right)^2\mid X_i=-1,Y_i=1\right],
\end{align}
where the inequality comes from Fact~\ref{fact:marginal}.

Now we bound $\E\left[\left(\langle a^i,X+Y-2v\rangle+b_i\right)^2\mid X_i=-1,Y_i=1\right]$ as follows.
From Fact~\ref{fact:cond}, we know that conditioning on $X_i=q\in\{\pm1\}$, $X_{-i}$ is an Ising model over $\{\pm1\}^{d-1}$ with parameter $\theta'$ satisfying the following property
\begin{align*}
\max_{j\in[d]\setminus\{i\}}{|\theta'_j|}\le M+\alpha,\qquad\max_{j\in[d]\setminus\{i\}}{\sum_{k\in[d]\setminus\{i,j\}}|\theta'_{jk}|}\le M,
\end{align*}
which implies that conditioning on $X_i=q$, $X_{-i}$ is an $(M,M+\alpha)$-bounded Ising model.
Let $\mu_{-i}^1$ denote the conditional expectation over $x_{-i}$ conditioning on $x_i=1$ and $\mu_{-i}^{-1}$ denote the conditional expectation over $x_{-i}$ conditioning on $x_i=-1$. Note that $X_{-i}$ and $Y_{-i}$ are independent conditioning on $X_i=-1,Y_i=1$, we have that
\begin{align*}
&\quad\E\left[\left(\langle a^i,X+Y-2v\rangle+b_i\right)^2\mid X_i=1,Y_i=-1\right]\\&=\var\left[\langle a^i,X+Y-2v\rangle+b_i\mid X_i=1,Y_i=-1\right]+\E\left[\langle a^i,X+Y-2v\rangle+b_i\mid X_i=1,Y_i=-1\right]^2\\&=\var[\langle a^i_{-i},X_{-i}\rangle\mid X_i=1]+\var[\langle a^i_{-i},Y_{-i}\rangle\mid Y_i=-1]+\left(b_i+\langle a^i_{-i},\mu_{-i}^1+\mu_{-i}^{-1}-2v_{-i}\rangle\right)^2\\&\ge\var[\langle a^i_{-i},X_{-i}\rangle\mid X_i=1]+\var[\langle a^i_{-i},Y_{-i}\rangle\mid Y_i=-1]+b_i^2+2b_i\langle a^i_{-i},\mu_{-i}^1+\mu_{-i}^{-1}-2v_{-i}\rangle\\&\ge\var[\langle a^i_{-i},X_{-i}\rangle\mid X_i=1]+\var[\langle a^i_{-i},Y_{-i}\rangle\mid Y_i=-1]+b_i^2-2|b_i|\|a^i\|_2\|\mu_{-i}^1+\mu_{-i}^{-1}-2v_{-i}\|_2,
\end{align*}
where we use $A_{ii}=0,\forall i\in[d]$.

Let $\mu=\E[X]$ and thus $\|\mu-v\|_2\le\delta$ by our assumption. From Fact~\ref{fact:conditional-mean-distance}, we know that
\begin{align*}
\|\mu_{-i}^1+\mu_{-i}^{-1}-2v_{-i}\|_2&\le\|\mu_{-i}^1+\mu_{-i}^{-1}-2\mu_{-i}\|_2+2\|\mu_{-i}-v_{-i}\|_2\\&=(1-\Pr[X_i=1])\|\mu_{-i}^1-\mu_{-i}^{-1}\|_2+2\|\mu_{-i}-v_{-i}\|_2\\&\le\|\mu_{-i}^{1}-\mu_{-i}^{-1}\|_1+2\delta\\&\le\frac{2M}{1-M}+2\delta.
\end{align*}
From Fact~\ref{fact:conditional-mean-distance} and Fact~\ref{fact:cond}, we have that
\begin{align*}
&\quad\var[\langle a^i_{-i},X_{-i}\rangle\mid X_i=q]=\sum_{j\ne i,k\ne i}A_{ij}A_{ik}\cov(X_j,X_k\mid X_i=q)\\&\ge\sum_{j\ne i}{A_{ij}^2\var[X_j\mid X_i=q]}-\sum_{j\ne i,k\ne i,k\ne j}|A_{ij}||A_{ik}||\cov(X_j,X_k\mid X_i=q)|\\&\ge\sum_{j\ne i}{A_{ij}^2\var[X_j\mid X_i=q]}-\sum_{j\ne i,k\ne i,k\ne j}\frac{(A_{ij}^2+A_{ik}^2)|\cov(X_j,X_k\mid X_i=q)|}{2}\\&=\sum_{j\ne i}A_{ij}^2\Bigg(\var[X_j\mid X_i=q]-\sum_{k\ne j,k\ne i}|\cov(X_j,X_k\mid X_i=q)|\Bigg)\\&\ge\sum_{j\ne i}{A_{ij}^2\left(\var[X_j\mid X_i=q]-\frac{M}{1-M}\right)}\\&\ge\left(4\left(\frac{\exp(-2(\alpha+2M))}{1+\exp(-2(\alpha+2M))}\right)^2-\frac{M}{1-M}\right)\|a^i\|_2^2.
\end{align*}
Therefore, we have that
\begin{align}\label{eq:ac11}
&\quad\E\left[\left(\langle a^i,X+Y-2v\rangle+b_i\right)^2\mid X_i=1,Y_i=-1\right]\notag\\
&\ge\var[\langle a^i_{-i},X_{-i}\rangle\mid X_i=1]+\var[\langle a^i_{-i},Y_{-i}\rangle\mid Y_i=-1]+b_i^2-2|b_i|\|a^i\|_2\|\mu_{-i}^1+\mu_{-i}^{-1}-2v_{-i}\|_2\notag\\
&\ge\left(8\left(\frac{\exp(-2(\alpha+2M))}{1+\exp(-2(\alpha+2M))}\right)^2-\frac{2M}{1-M}\right)\|a^i\|_2^2+b_i^2-2|b_i|\|a^i\|_2\left(\frac{2M}{1-M}+2\delta\right)\notag\\
&\ge c_0(\|a^i\|_2^2+b_i^2) \;,
\end{align}
as long as
\begin{align*}
4\left(\frac{M}{1-M}+\delta\right)^2\le(1-c_0)\left(8\left(\frac{\exp(-2(\alpha+2M))}{1+\exp(-2(\alpha+2M))}\right)^2-\frac{2M}{1-M}-c_0\right).
\end{align*}
Combining~\eqref{eq:ac9},~\eqref{eq:ac10} and~\eqref{eq:ac11}, we obtain that
\begin{align}\label{eq:ac12}
&\quad\E\left[\|A'(X+Y-2v)+b'\|_2^2\right]
=\sum_{i=1}^{d}{\E\left[\left(\langle\mathbb{I}[X_i\ne Y_i]\alpha^i,X+Y-2v\rangle+b'_i\right)^2\right]}\notag\\
&\ge 2\left(\frac{\exp(-2(\alpha+M))}{1+\exp(-2(\alpha+M))}\right)^2\sum_{i=1}^{d}\E\left[\left(\langle a^i,X+Y-2v\rangle+b_i\right)^2\mid X_i=-1,Y_i=1\right]\notag\\
&\ge2c_0\left(\frac{\exp(-2(\alpha+M))}{1+\exp(-2(\alpha+M))}\right)^2(\|A\|_F^2+\|b\|_2^2) \;.
\end{align}
Combining~\eqref{eq:ac7},~\eqref{eq:ac8} and~\eqref{eq:ac12}, it follows that there is a constant $c(\alpha,M,c_0)$ such that
\begin{align*}
\var[(X-v)^TA(X-v)+b^TX]&=\frac{1}{2}\E\left[\left((X-Y)^T\left(A(X+Y-2v)+b\right)\right)^2\right]\\
&\ge c(\alpha,M,c_0)(\|A\|_F^2+\|b\|_2^2) \;.
\end{align*}
This completes the proof.
\end{proof}

We can now prove the main theorem of this section.

\begin{proof}[Proof of Theorem~\ref{thm:ising-ext}]
From Fact~\ref{fact:sub-Gaussian}, we know that $X\sim P_{\theta^*}$ is sub-Gaussian, 
from which it follows that $\cov_{X\sim P_{\theta^*}}[X] \preceq c_0 \, I$, 
for some universal constant $c_0>0$. 
Hence, we can apply the robust mean estimation algorithm 
for bounded covariance distributions (Fact~\ref{bounded-cov}) 
to obtain an estimate $v\in\R^d$ with 
$\ltwo{v-\E_{X\sim P_{\theta^*}}[X]}\le c_1\sqrt{\epsilon}\le c_1\sqrt{\epsilon_0}$, 
for some constant $c_1>0$.

Let 
$$\Omega=\left\{(\theta_{ij})_{1\le i<j\le d}\in\R^{d\times(d-1)/2}, (\theta_i)_{i\in[d]}\in\R^d \mid 
\max_{i\in[d]}\littlesum_{j=1}^{i-1}|\theta_{ji}| + \littlesum_{j=i+1}^{d}|\theta_{ij}|\le M, \max_{i\in[d]}|\theta_i|\le\alpha \right\} \;.$$ 
For any $\theta\in\Omega$, define $J(\theta)_{ij}=\theta_{ij},\forall 1\le i<j\le d$, 
and $h(\theta)_i=\theta_i+\sum_{j=1}^{i-1}{\theta_{ji}v_j}+\sum_{j=i+1}^{d}{\theta_{ij}v_j},\forall i\in[d]$. 
Let $\Omega_{J,h}=\{(J(\theta),h(\theta))\mid \theta\in\Omega\}$. 
Note that for any $\theta^1,\theta^2\in\Omega$ and any $0<\lambda<1$, 
we have that 
$$J(\lambda\theta^1+(1-\lambda)\theta^2)=\lambda J(\theta^1)+(1-\lambda)J(\theta^2)$$ 
and 
$$h(\lambda\theta^1+(1-\lambda)\theta^2)=\lambda h(\theta^1)+(1-\lambda)h(\theta^2) \;,$$ 
which implies that $\Omega_{J,h}$ is convex because of the convexity of $\Omega$.
Let $\theta\in\Omega$ and $P_\theta$ be the corresponding Ising distribution. 
We write $P_{\theta}$ in the ``$v$-centered form'', 
i.e., 
$$P_\theta(x)=\frac{1}{Z(\theta)}\exp\left(\frac{1}{2}(x-v)^TJ(\theta)(x-v)+h(\theta)^Tx\right) \;,$$
\footnote{For simplicity, we also use $J(\theta)$ to note the $d\times d$ symmetric matrix with zero diagonal.}
where $Z(\theta)$ is the partition function. In this way, $P_{\theta}$ 
is an exponential family with sufficient statistics $T(x)=((x_i-v_i)(x_j-v_j)_{1\le i<j\le d},(x_i)_{1\le i\le d})$.

Now we check the statements in Condition~\ref{exp-family-cond} one by one 
in order to apply Algorithm~\ref{main-algorithm} to obtain an estimation 
of $J(\theta^*)$ and $h(\theta^*)$.
By our choice of $\Omega_{J,h}$, we know that $\mathrm{diam}(\Omega_{J,h})=O(d)$ 
and we can efficiently compute the projection of any point $z\in\mathbb{R}^{d\times(d-1)/2}$.
From Fact~\ref{fact:Glauber-dynamics}, we can sample from $P_\theta$ 
within total variation distance $\gamma$ in time $O(d(\log d+\log(1/\gamma)))$, for any $\gamma>0$. 
Therefore, the third statement holds.

From Lemma~\ref{lem:conc-Ising}, there is a universal constant $c>0$ 
such that for any symmetric matrix $A\in\R^{d\times d}$ with zero diagonal 
and $b\in\mathbb{R}^d$, we have that
\begin{align*}
\Pr_{X\sim P_{\theta}}\left[|f(X)-\E[f(X)]|>t\right]\le2\exp\left(-\frac{ct}{(\|A\|_F^2+\|b\|_2^2)^{1/2}}\right)\,,
\end{align*}
where $f(x)=(x-v)^TA(x-v)+b^Tx,\forall x\in\{\pm1\}^d$. This implies the second statement in Condtion~\ref{exp-family-cond}.

Moreover, from Theorem~\ref{thm:ac-2}, we know that there is a universal constant $c'>0$ such that for any symmetric matrix $A\in\mathbb{R}^{d\times d}$ with zero diagonal and $b\in\mathbb{R}^d$, we have that
\begin{align*}
\var[(X-v)^TA(X-v)+b^TX]\ge c'(\|A\|_F^2+\|b\|_2^2)\,,
\end{align*}
which implies the first statement in Condition~\ref{exp-family-cond}.

Thus, we can apply Algorithm~\ref{main-algorithm} to obtain estimates 
$\wh{J},\wh{h}$ with $\sqrt{\|\wh{J}-J(\theta^*)\|_F^2+\|\wh{h}-h(\theta^*)\|_2^2}\le O(\epsilon\log(1/\epsilon))$. 
Let $\wh{\theta}_{ij}=\wh{J}_{ij},\forall i, j \in[d]$, and $\wh{\theta}_i=\wh{h}_i-\sum_{j=1}^{d}\wh{J}_{ij}v_j$. 
From Theorem~\ref{thm:main-exp-family}, we have that 
$$\dtv(P_{\wh{\theta}},P_{\theta^*})\le O\left(\sqrt{\|\wh{J}-J(\theta^*)\|_F^2+\|\wh{h}-h(\theta^*)\|_2^2}\right)\le O(\epsilon\log(1/\epsilon)) \;,$$
where $P_{\wh{\theta}}$ denotes the Ising model distribution corresponding to parameter $\wh{\theta}$. 
In addition, by our algorithm, we have that $\max_{i\in[d]}\sum_{j\ne i}|\wh{\theta}_{ij}|\le1-\eta$, 
and thus the output hypothesis satisfies Dobrushin's condition.
This completes the proof.
\end{proof}

\clearpage

\bibliography{allrefs}

\bibliographystyle{alpha}

\newpage

\appendix

\section*{Appendix}

\section{Omitted Technical Preliminaries} \label{app:prelims-1}

\subsection{Dobrushin's uniqueness condition} \label{app:dob}

Here we introduce the original definition of Dobrushin's condition 
through the influence between points in general graphical model.
\begin{definition}[Influence in graphical models]\label{Dobrushin's-condition}
Let $D$ be a distribution over some set of points $V$. 
Let $S_j$ denote the set of \new{state} pairs $(X,Y)$ which differ only at point $j$. 
Then the influence of point $j\in V$ on point $i\in V$ is defined as
\begin{align*}
I(j,i)=\max_{(X,Y)\in S_j}\dtv(D_i(\cdot\mid X_{-i}),D_i(\cdot\mid Y_{-i})) \;,
\end{align*}
where $D_i(\cdot\mid X_{-i}),D_i(\cdot\mid Y_{-i})$ 
denote the marginal distribution of point $i$ conditioning on $X_{-i}$ and $Y_{-i}$ respectively.
\end{definition}

\begin{definition}[Dobrushin's uniqueness condition]
Let $D$ be a distribution over some set of points $V$.
Then $D$ is said to satisfy Dobrushin's uniqueness condition if $\max_{i\in V}\sum_{j\in V}{I(j,i)}<1$.
\end{definition}

For Ising models,~\cite{chatterjee2005concentration} proves that 
$\max_{i\in V}\sum_{j\ne i}{|\theta_{ij}|}<1$ implies the Dobrushin's uniqueness condition.

\subsection{Basic Facts about Sub-exponential Distributions} \label{ssec:subexp}
The following result establishes that, for any sub-exponential distribution, 
the empirical mean and empirical covariance converge fast 
to the true mean and covariance.

\begin{lemma}[see, e.g.,~\cite{vershynin2018high,kuchibhotla2018moving}]\label{lem:inference}
Let $D$ be a sub-exponential distribution over $\mathbb{R}^d$ 
with mean $\mu$ and covariance $\Sigma$. 
Let $X_1,\ldots,X_n$ be i.i.d.\ samples drawn from $D$, 
$\wh{\mu}_n=\frac{1}{n}\sum_{i=1}^{n}{X_i}$ be the empirical mean, 
and $\wh{\Sigma}_n=\frac{1}{n}\sum_{i=1}^{n}{(X_i-\wh{\mu}_n)(X_i-\wh{\mu}_n)^T}$ 
be the empirical covariance. 
Then there exist constants $c_1,c_2>0$ such that the following holds:
\begin{enumerate}
\item With probability at least $1-2\exp(-t^2)$, we have that
\begin{align*}
\left\|\wh{\mu}_n-\mu\right\|_2\le c_1\max(\delta,\delta^2) \;,
\end{align*}
where $\delta=\sqrt{\frac{d}{n}}+\frac{t}{\sqrt{n}}$, 
and 
\item With probability at least $1-6\exp(-t)$, we have that
\begin{align*}
\ltwo{\wh{\Sigma}_n-\Sigma}\le c_2d\left(\sqrt{\frac{t+\log d}{n}}+\frac{((t+\log d)\log n)^2}{n}\right) \;,
\end{align*}
\end{enumerate}
\end{lemma}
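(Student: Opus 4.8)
\noindent\textbf{Proof plan for Lemma~\ref{lem:inference}.}
Both bounds are classical concentration facts for sums of independent sub-exponential (resp.\ sub-Weibull) random variables, and the plan is to reproduce the standard arguments behind~\cite{vershynin2018high,kuchibhotla2018moving}: a Bernstein inequality for one-dimensional projections followed by a union bound (over a net of the sphere for the mean, over the $d^2$ entries for the covariance). I would treat the two parts separately.

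For part~(1), I would first fix a unit vector $v\in\mathbb{S}^{d-1}$ and note that, by Definition~\ref{def:exp-decay}, $\langle v, X_i-\mu\rangle$ is a mean-zero sub-exponential random variable whose parameter is bounded by a universal constant; by Fact~\ref{fact:sub-exponential-property} its moment generating function is then controlled in a neighborhood of the origin. A standard Chernoff computation for the i.i.d.\ sum $\langle v,\wh\mu_n-\mu\rangle=\frac1n\sum_{i=1}^n\langle v,X_i-\mu\rangle$ gives a Bernstein tail of the form $2\exp(-c\,n\min(s^2,s))$. Next I would pass to the supremum over $v$ using a $\tfrac12$-net $\mathcal{N}$ of $\mathbb{S}^{d-1}$ with $|\mathcal{N}|\le 5^d$ together with the comparison $\|\wh\mu_n-\mu\|_2\le 2\max_{v\in\mathcal{N}}\langle v,\wh\mu_n-\mu\rangle$; a union bound over $\mathcal{N}$ costs a factor $e^{O(d)}$, and choosing $s$ to be a large constant multiple of $\max(\delta,\delta^2)$ with $\delta=\sqrt{d/n}+t/\sqrt n$ makes $n\min(s^2,s)\gtrsim d+t^2$, leaving residual probability $\le 2\exp(-t^2)$. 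The two cases $\delta\le 1$ and $\delta>1$ are exactly the sub-Gaussian and sub-exponential regimes of the Bernstein tail.

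For part~(2), I would first reduce to the $\mu$-centered matrix: since $\wh\Sigma_n=\frac1n\sum_{i=1}^n(X_i-\mu)(X_i-\mu)^T-(\wh\mu_n-\mu)(\wh\mu_n-\mu)^T$ and the rank-one correction has operator norm $\|\wh\mu_n-\mu\|_2^2$, which is of lower order by part~(1), it suffices to control $\wh\Sigma_n^{0}:=\frac1n\sum_{i=1}^n(X_i-\mu)(X_i-\mu)^T$. Rather than a net argument I would proceed entrywise, which is what yields the explicit factor $d$: for each pair $j,k$ the summand $(X_i-\mu)_j(X_i-\mu)_k$ has mean $\Sigma_{jk}$, and since each coordinate $(X_i-\mu)_j$ is sub-exponential (take $v=e_j$ in Definition~\ref{def:exp-decay}) the product is, by AM--GM, sub-Weibull$(1/2)$, i.e.\ $\Pr[|(X_i-\mu)_j(X_i-\mu)_k|>u]\le 2\exp(-c\sqrt u)$. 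Applying a truncated Bernstein inequality for sums of i.i.d.\ sub-Weibull$(1/2)$ variables to each of the $\le d^2$ averages $(\wh\Sigma_n^0)_{jk}-\Sigma_{jk}$ and union bounding over $j,k$, I would obtain $\max_{j,k}|(\wh\Sigma_n^0)_{jk}-\Sigma_{jk}|\le c\big(\sqrt{(t+\log d)/n}+((t+\log d)\log n)^2/n\big)$ with probability at least $1-6\exp(-t)$ — the $\log d$ from the union over entries, the $(\log n)^2$ from the truncation level forced by the heavier tail — and then $\|\wh\Sigma_n^0-\Sigma\|_2\le\|\wh\Sigma_n^0-\Sigma\|_F\le d\,\max_{j,k}|(\wh\Sigma_n^0)_{jk}-\Sigma_{jk}|$ gives the stated bound.

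The hard part will be the sub-Weibull$(1/2)$ Bernstein inequality used in part~(2): the product terms $(X_i-\mu)_j(X_i-\mu)_k$ do not have a finite moment generating function at every scale, so neither the ordinary sub-exponential Bernstein bound nor matrix Bernstein applies directly. One must truncate each summand at a level growing like $\log n$ times its Orlicz norm, apply a Bernstein-type bound to the bounded truncated part, and separately control the contribution beyond the truncation via a moment/union argument; this is precisely where the $((t+\log d)\log n)^2/n$ term and the $6\exp(-t)$ probability come from. Everything else — the net/Bernstein step for the mean, the reduction of the empirical centering to part~(1), and the passage from the Frobenius norm to the operator norm — is routine bookkeeping.
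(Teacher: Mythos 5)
The paper treats Lemma~\ref{lem:inference} as a known black-box fact, citing \cite{vershynin2018high,kuchibhotla2018moving}, and supplies no proof of its own, so there is no in-paper argument to compare against. Your sketch correctly reconstructs the standard route those references take: a directional Bernstein bound combined with a $\tfrac12$-net and union bound for part~(1), and, for part~(2), the reduction to $\mu$-centering, the observation that each entry $(X_i-\mu)_j(X_i-\mu)_k$ is sub-Weibull$(1/2)$ via AM--GM, a truncated Bernstein inequality for such variables (the source of the $(\log n)^2$ factor), an entrywise union bound (the source of $\log d$), and the passage $\|\cdot\|_2\le\|\cdot\|_F\le d\max_{j,k}|\cdot|$ (the source of the factor $d$). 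You have also correctly flagged the heavy-tailed Bernstein step as the one place requiring genuine work; the rest is routine, and the rank-one centering correction $\|\wh\mu_n-\mu\|_2^2$ is indeed dominated by the second term of the target bound for every $t$.
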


\subsection{Basic Facts on Optimization of Smooth and Strongly Convex Functions} \label{ssec:opt}

In this section, we provide some background on smooth and strongly convex optimization.

\begin{definition} \label{def:ssc}
Let $\Omega \subseteq \R^d$ be a convex set and $f:\Omega \to\R$ be twice continuously differentiable. 
For $m>0$, we say that $f$ is $m$-strongly convex over $\Omega$ if $\nabla^2f(x)\succeq mI$, 
for all $x\in\Omega$. We say that $f$ is $L$-smooth over $\Omega$ if $-LI\preceq\nabla^2f(x)\preceq LI$ 
for all $x\in\Omega$.
\end{definition}

\paragraph{Notation} 
Let $\mathcal{X}\subseteq\mathbb{R}^d$ be a convex set. 
We denote $\Diam(\mathcal{X})$ to be the diameter of $\mathcal{X}$ in Euclidean norm, i.e., $\Diam(\mathcal{X})=\sup_{x,y\in\mathcal{X}}\|x-y\|_2$. 
For an arbitrary point $x\in\mathbb{R}^d$, we denote $P_\mathcal{X}(x)$ to be the Euclidean projection of $x$ 
to $\mathcal{X}$, i.e., $P_\mathcal{X}(x)=\arg\min_{z\in\mathcal{X}}\|z-x\|_2$. 

\medskip

The following projected gradient descent method for minimizing a smooth and strongly convex 
function is standard.

\begin{algorithm}
\SetKwInOut{Input}{Input}
\SetKwInOut{Output}{Output}
\Input{an $m$-strongly convex and $L$-smooth function $f$ over a convex set $\Omega$ and a constant $\delta>0$.}
\Output{an $\wh{x}\in\Omega$ such that $\|\wh{x}-x^*\|_2\le\delta$, where $x^*=\arg\min_{x\in\Omega}f(x)$.}
Let $x^0\in\Omega$ be an arbitrary initial point and $T=O\left(\frac{L}{m}\log\left(\frac{\Diam(\Omega)}{\delta}\right)\right)$.\\
\For{$t$ = $0$ to $T-1$}{
$r^t= x^t-\frac{1}{L}\nabla f(x^t)$.\\
$x^{t+1}=\arg\min_{x\in\Omega}\|x-r^t\|_2$.\\
}
\Return{$x^T$.}
\caption{Projected gradient descent for strongly convex smooth optimization}\label{PGD}
\end{algorithm}

The following fact is standard.

\begin{fact}[see, e.g.,~\cite{nesterov2018lectures}]\label{descent-lem}
Let $f:\Omega\to\mathbb{R}$ be  $L$-smooth and $m$-strongly convex. 
Let $x^*=\arg\min_{x\in\Omega}{f(x)}$. The iterates in Algorithm~\ref{PGD} satisfy
\begin{align*}
\|x^{t+1}-x^*\|_2^2\le\left(1-\frac{m}{L}\right)\|x^t-x^*\|_2^2 \;.
\end{align*}
Therefore, after $T=O\left(\frac{L}{m}\log\left(\frac{\Diam(\Omega)}{\delta}\right)\right)$ iterations, 
we have that $\|x^T-x^*\|_2\le\delta$.
\end{fact}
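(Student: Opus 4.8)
The plan is to reproduce the standard projected-gradient-descent analysis for a smooth, strongly convex objective, treating $x^*$ as a fixed point of the projected-gradient map and exploiting that Euclidean projection onto a convex set is non-expansive.

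First I would record two facts about the projection $P_\Omega$: it is non-expansive, $\|P_\Omega(a)-P_\Omega(b)\|_2\le\|a-b\|_2$; and it is characterized by the variational inequality $\langle z-P_\Omega(z),\,y-P_\Omega(z)\rangle\le0$ for all $y\in\Omega$. The first-order optimality condition at the constrained minimizer, $\langle\nabla f(x^*),\,y-x^*\rangle\ge0$ for all $y\in\Omega$, is exactly the statement that $x^*=P_\Omega\bigl(x^*-\tfrac{1}{L}\nabla f(x^*)\bigr)$, i.e. $x^*$ is a fixed point of one step of Algorithm~\ref{PGD}.

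Next I would bound a single step. Writing the update as $x^{t+1}=P_\Omega\bigl(x^t-\tfrac{1}{L}\nabla f(x^t)\bigr)$ and applying non-expansiveness against this fixed point gives $\|x^{t+1}-x^*\|_2\le\bigl\|(x^t-x^*)-\tfrac{1}{L}(\nabla f(x^t)-\nabla f(x^*))\bigr\|_2$. Squaring and abbreviating $g_t:=\nabla f(x^t)-\nabla f(x^*)$, the right-hand side equals $\|x^t-x^*\|_2^2-\tfrac{2}{L}\langle g_t,x^t-x^*\rangle+\tfrac{1}{L^2}\|g_t\|_2^2$. Since the segment from $x^*$ to $x^t$ lies in $\Omega$ (convexity of $\Omega$) and $mI\preceq\nabla^2 f\preceq LI$ holds there, restricting $f$ to that segment yields both $\langle g_t,x^t-x^*\rangle\ge m\|x^t-x^*\|_2^2$ (strong convexity) and $\|g_t\|_2^2\le L\langle g_t,x^t-x^*\rangle$ (co-coercivity of the gradient of an $L$-smooth convex function; note $m>0$ forces $f$ convex). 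Substituting the co-coercivity bound cancels one copy of the cross term, leaving $\|x^{t+1}-x^*\|_2^2\le\|x^t-x^*\|_2^2-\tfrac{1}{L}\langle g_t,x^t-x^*\rangle$, and then strong convexity gives the claimed contraction $\|x^{t+1}-x^*\|_2^2\le(1-m/L)\|x^t-x^*\|_2^2$.

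Finally, iterating gives $\|x^T-x^*\|_2^2\le(1-m/L)^T\|x^0-x^*\|_2^2\le e^{-mT/L}\Diam(\Omega)^2$, using $x^0,x^*\in\Omega$, so taking $T=O\bigl(\tfrac{L}{m}\log(\Diam(\Omega)/\delta)\bigr)$ makes this at most $\delta^2$. The only mildly delicate point — the main obstacle, such as it is — is deriving the gradient co-coercivity inequality from a Hessian bound assumed only on $\Omega$ rather than on all of $\R^d$; I would handle this by passing to the one-dimensional restriction $s\mapsto f(x^*+s(x^t-x^*))$ on $[0,1]$, which inherits $L\|x^t-x^*\|_2^2$-smoothness and $m\|x^t-x^*\|_2^2$-strong convexity, and running the usual quadratic-upper-bound argument there (equivalently, applying it to $y\mapsto f(y)-\langle\nabla f(x^*),y\rangle$ along that segment). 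Everything else is routine algebra.
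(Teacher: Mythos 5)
The paper cites this fact without proof, so the question is just whether your argument stands on its own. The skeleton is correct and standard: use first--order optimality to write $x^*=P_\Omega\bigl(x^*-\tfrac{1}{L}\nabla f(x^*)\bigr)$, invoke non-expansiveness of $P_\Omega$, expand the square, and close with strong monotonicity and gradient co-coercivity. Your algebra indeed yields the stated $(1-m/L)$ factor for the squared distance, and the iteration count then follows.

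The one place you flag as delicate is, however, where your sketch actually breaks down. The one-dimensional restriction $\phi(s)=f\bigl(x^*+s(x^t-x^*)\bigr)$ only controls scalar quantities like $\phi'(1)-\phi'(0)=\langle g_t,x^t-x^*\rangle$ and $\int_0^1\phi''$; it says nothing about $\|g_t\|_2^2$, since $g_t$ generally has a nonzero component orthogonal to $x^t-x^*$ and that component is invisible to the restriction. The ``equivalently'' version has the same defect: the usual co-coercivity proof for $h(y)=f(y)-\langle\nabla f(x^*),y\rangle$ compares $h$ at $y$ with its value at $y-\tfrac{1}{L}\nabla h(y)$, and that point need not lie on the segment $[x^*,x^t]$ (nor in $\Omega$), so the Hessian bound assumed only on $\Omega$ cannot be applied there. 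So ``running the usual quadratic-upper-bound argument along the segment'' does not produce $\|g_t\|_2^2\le L\langle g_t,x^t-x^*\rangle$.

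The repair is to take the integral mean value theorem for the \emph{vector} gradient along the segment, which stays inside $\Omega$ by convexity: $g_t=\nabla f(x^t)-\nabla f(x^*)=M\,(x^t-x^*)$ with $M=\int_0^1\nabla^2 f\bigl(x^*+s(x^t-x^*)\bigr)\,ds$, a symmetric matrix satisfying $mI\preceq M\preceq LI$. Since $M$ and $LI-M$ commute and are both PSD, $M^2\preceq LM$, which is exactly co-coercivity: $\|g_t\|_2^2=u^\top M^2 u\le L\,u^\top M u=L\langle g_t,u\rangle$ for $u=x^t-x^*$. Even more directly, $\|x^{t+1}-x^*\|_2\le\|(I-\tfrac{1}{L}M)u\|_2\le\|I-\tfrac{1}{L}M\|_2\,\|u\|_2\le(1-m/L)\|u\|_2$, which gives the sharper contraction $\|x^{t+1}-x^*\|_2^2\le(1-m/L)^2\|x^t-x^*\|_2^2$ and avoids co-coercivity altogether. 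Either way the stated fact follows.
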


\section{Basic Properties of Exponential Families}\label{app:exp-basics}

\subsection{Proof of Fact~\ref{fact:exp-family-p1}}
Let $Z(\theta)=\exp(A(\theta))=\sum_{x\in\mathcal{X}}\exp\left(\langle T(x),\theta\rangle\right)$. 
From elementary calculation, we have that
\begin{align*}
\nabla A(\theta)= \nabla\ln Z(\theta)= \frac{\nabla Z(\theta)}{Z(\theta)} = 
\frac{\sum_{x\in\mathcal{X}}\exp(\langle T(x),\theta\rangle)T(x)}{Z(\theta)}=\mu_T \;,
\end{align*}
and
\begin{align*}
\nabla^2A(\theta)
& = \frac{\partial\mu_T}{\partial\theta}
= \frac{\partial}{\partial\theta}\left(\frac{\sum_{x\in\mathcal{X}}\exp(\langle T(x),\theta\rangle)T(x)}{Z(\theta)}\right) 
= \frac{\sum_{x\in\mathcal{X}}\exp(\langle T(x),\theta\rangle)T(x)T(x)^T}{Z(\theta)}\\
&\quad-\frac{\left(\sum_{x\in\mathcal{X}}\exp(\langle T(x),\theta\rangle)T(x)\right)\left(\sum_{x\in\mathcal{X}}\exp(\langle T(x),\theta\rangle)T(x)^T\right)}{Z(\theta)^2}\\
&=\E[T(X)T(X)^T]-\mu_T\mu_T^T=\Sigma_T \;.
\end{align*}

\subsection{Proof of Fact~\ref{fact:exp-family-p2}}
From the definition of KL-divergence, we have that
\begin{align*}
\dkl(P_\theta,P_{\theta'})&=\E_{X\sim P_\theta}\left[\ln\left(\frac{P_\theta(x)}{P_{\theta'}(x)}\right)\right]\\
&=\E_{X\sim P_\theta}[\langle T(X),\theta-\theta'\rangle]-A(\theta)+A(\theta')\\
&=\langle\theta-\theta',\mu_T\rangle-A(\theta)+A(\theta') \;.
\end{align*}

\subsection{Proof of Lemma~\ref{lem:para-to-dis}}\label{app:para-to-dis}
Let $\theta=\wh{\theta}-\theta^*$. 
Define $g(x)=\langle T(x),\theta\rangle-\E_{X\sim P_{\theta^*}}[\langle T(x),\theta\rangle]$.
By definition, we have that $\E_{X\sim P_{\theta^*}}[g(X)]=0$, and for any $x\in\mathcal{X}$,
\begin{align*}
\frac{P_{\wh{\theta}}(x)}{P_{\theta^*}(x)}
&=\frac{\exp(\langle T(x),\wh{\theta}\rangle-A(\wh{\theta}))}{\exp\left(\langle T(x),\theta^*\rangle-A(\theta^*)\right)}
=\frac{\exp(\langle T(x),\theta\rangle)}{\exp(A(\wh{\theta})-A(\theta^*))}
=\frac{\exp(\langle T(x),\theta\rangle)}{\sum_{x\in\mathcal{X}}\exp(\langle T(x),\wh{\theta}\rangle-A(\theta^*))}\\
&=\frac{\exp(\langle T(x),\theta\rangle)}{\sum_{x\in\mathcal{X}}\exp(\langle T(x),\theta\rangle)\cdot\exp(\langle T(x),\theta^*\rangle-A(\theta^*))}
=\frac{\exp(\langle T(x),\theta\rangle)}{\E_{X\sim P_{\theta^*}}[\exp(\langle T(X),\theta\rangle)]}\\
&=\frac{\exp(g(x))}{\E_{X\sim P_{\theta^*}}[\exp(g(X))]}=\exp(g(x))/w \;,
\end{align*}
where $w=\E_{X\sim P_{\theta^*}}[\exp(g(X))]$. 
\new{In order to bound the total variation distance, we bound the $\chi^2$-distance between $P_{\wh{\theta}}$ and $P_{\theta^*}$. Recall that for any two distributions $p,q$ over $\mathcal{X}$, $\chi^2(p,q)\eqdef\int_{\mathcal{X}}\left(\frac{dp}{dq}-1\right)^2dq$, we have that}
\begin{align*}
\chi^2(P_{\wh{\theta}},P_{\theta^*})
&=\E_{X\sim P_{\theta^*}}\left[\left(\frac{P_{\wh{\theta}}(X)}{P_{\theta^*}(X)}-1\right)^2\right] 
=\frac{\E_{X\sim P_{\theta^*}}\left[\left(\exp(g(X))-w\right)^2\right]}{w^2}\\
&=\frac{\E_{X\sim P_{\theta^*}}[\exp(2g(X))]}{w^2}-1\le\E_{X\sim P_{\theta^*}}[\exp(2g(X))]-1 \;,
\end{align*}
where we apply $w\ge1$ in the last inequality, 
since $$w=\E_{X\sim P_{\theta^1}}[\exp(g(X))]\ge\exp\left(\E_{X\sim P_{\theta^1}}[g(X)]\right)=1$$ by Jensen's inequality.
By our assumption, there is a constant $c_1>0$ such that
\begin{align*}
\Pr_{X\sim P_{\theta^*}}[|g(X)-\E[g(X)]|>t]\le2\exp\left(-\frac{c_1t}{\|\theta\|_2}\right) \;.
\end{align*}
Hence, from Fact~\ref{fact:sub-exponential-property}, there is a constant $c_2>0$ such that as long as 
$|\lambda|\le\frac{c_1}{c_2\|\theta\|_2}$, we will have that
$\E_{X\sim P_{\theta^1}}[\exp(\lambda g(X))]\le\exp(c_2^2\lambda^2\|\theta\|_2^2/c_1^2)$. 
Now we assume that $\|\theta\|_2^2=\|\wh{\theta}-\theta^*\|_2^2\le\delta^2\le c_1^2/4c_2^2$ and derive that
\begin{align*}
\chi^2(P_{\wh{\theta}},P_{\theta^*})\le \E_{X\sim P_{\theta^*}}[\exp(2g(X))]-1 
\le\exp(4c_2^2\|\theta\|_2^2/c_1^2)-1 
\le 8c_2^2\|\theta\|_2^2/c_1^2 \;,
\end{align*}
where we apply the elementary inequality $e^x\le1+2x$, for $x\le1$. Therefore,
\begin{align*}
\dtv(P_{\wh{\theta}},P_{\theta^*})\le\sqrt{\frac{\chi^2(P_{\wh{\theta}},P_{\theta^*})}{2}}\le2c_2\|\wh{\theta}-\theta^*\|_2/c_1 \;.
\end{align*}

\subsection{Proof of Lemma~\ref{lem:parameter-to-mean}}\label{app:para-to-mean}
From the definition of $L(\theta,\mu_T)=\langle\theta,\mu_T\rangle-A(\theta)$ 
and Fact~\ref{fact:exp-family-p1}, 
it follows that for any $\theta\in\Omega$, we have that
$$\nabla^2_\theta L(\theta,\mu_T)=-\nabla^2_\theta A(\theta)=-\cov_{X\sim P_\theta}[T(X)] \preceq -c \, I \;.$$ 
Hence, for any fixed $\mu_T \in \R^d$, the objective function $L(\theta,\mu_T)$ is $c$-strongly concave 
and \new{therefore} has a unique maximizer $\theta_{\mu_T} \in \Omega$. 
From Fact~\ref{fact:exp-family-p2}, we have that
\begin{align*}
L(\theta^*,\mu^*_T)-L(\theta',\mu^*_T)& =\langle\theta^*-\theta',\mu_T^*\rangle-A(\theta^*)+A(\theta')=\dkl(P_{\theta^*},P_{\theta'}), \textrm{ and }\\
0 \le L(\theta',\mu'_T)-L(\theta^*,\mu'_T)& =\langle\theta'-\theta^*,\mu'_T\rangle-A(\theta')+A(\theta^*) \;,
\end{align*}
where we used the fact that, given $\mu'_T \in\R^d$, 
$L(\theta,\mu'_T)$ attains its maximum at $\theta=\theta'$ over $\Omega$. 
Adding the above two equations together, we get
\begin{align}\label{KL-upper-bound}
\langle\theta^*-\theta',\mu_T^*-\mu'_T\rangle=\left(L(\theta^*,\mu_T^*)-L(\theta',\mu_T^*)\right)+
\left(L(\theta',\mu'_T)-L(\theta^*,\mu'_T)\right)\ge\dkl(P_{\theta^*},P_{\theta'}) \;.
\end{align}
In addition, from Taylor's theorem, we can rewrite $\dkl(P_{\theta^*}, P_{\theta'})$ as follows
\begin{align}\label{KL-lower-bound}
\dkl(P_{\theta^*},P_{\theta'})
&=\dkl(P_{\theta^*},P_{\theta'})-\dkl(P_{\theta^*},P_{\theta^*})\notag\\
&=\nabla_{\theta'}\dkl(P_{\theta^*},P_{\theta'})\Big|_{\theta'=\theta^*}+
\frac{1}{2}(\theta'-\theta^*)^T\nabla^2_{\theta'}\dkl(P_{\theta^*},P_{\theta'})\Big|_{\theta'=\theta''}(\theta'-\theta^*)\notag\\
&=\frac{1}{2}(\theta'-\theta^*)^T\cov_{X\sim P_{\theta''}}[T(x)](\theta'-\theta^*)\notag\\
&\ge\frac{c}{2}\|\theta'-\theta^*\|_2^2 \;,
\end{align}
where \new{$\theta''=\lambda\theta'+(1-\lambda)\theta^*$ for some $0\le\lambda\le1$ and we apply Fact~\ref{fact:exp-family-p2} in the third equality}.
Combining~\eqref{KL-upper-bound} and~\eqref{KL-lower-bound}, we obtain that
\begin{align*}
\|\theta^*-\theta'\|_2 \, \|\mu^*_T-\mu'_T\|_2 \ge 
\langle\theta^*-\theta',\mu_T^*-\mu'_T\rangle 
\ge \dkl(P_{\theta^*},P_{\theta'})\ge\frac{c}{2}\|\theta'-\theta^*\|_2^2 \;,
\end{align*}
which implies that
\begin{align*}
\|\theta^*-\theta'\|_2 \le \frac{2}{c} \|\mu_T^*-\mu'_T\|_2 \le \frac{2\delta}{c} \;.
\end{align*}
This completes the proof.

\section{Basic Properties of Ising Models}\label{app:ising-basics}

\subsection{Proof of Fact~\ref{fact:cond}}

Let $x_I,x'_I\in\{\pm1\}^I$. We calculate the ratio of conditional probabilities for two configurations $x_I$ and $x'_I$, 
as follows:
\begin{align*}
\frac{\Pr[X_I=x_I\mid X_{-I}=x_{-I}]}{\Pr[X_I=x'_I\mid X_{-I}=x_{-I}]}=\frac{\exp\left(\sum_{i,j\in I}{\theta_{ij}x_ix_j}+\sum_{i\in I}{x_i\left(\theta_i+\sum_{j\notin I}{\theta_{ij}x_j}\right)}\right)}{\exp\left(\sum_{i,j\in I}{\theta_{ij}x'_ix'_j}+\sum_{i\in I}{x'_i\left(\theta_i+\sum_{j\notin I}{\theta_{ij}x_j}\right)}\right)}.
\end{align*}
Therefore, the conditional distribution of $X_I$ conditioning on $X_{-I}=x_{-I}$ is an Ising model 
with interaction matrix $(\theta_{ij})_{i,j\in I}$ and external field $\theta'_i=\theta_i+\sum_{j\notin I}{\theta_{ij}x_j}$.

\subsection{Proof of Fact~\ref{fact:marginal}}
By definition of the Ising model, we can write
\begin{align*}
\Pr[X_i=x_i]
&= \sum_{x_{-i}\in\{\pm1\}^{d-1}}{\Pr[X_{-i}=x_{-i}]\cdot \Pr[X_i=x_i\mid X_{-i}=x_{-i}]}\\
&= \sum_{x_{-i}\in\{\pm1\}^{d-1}}{\Pr[X_{-i}=x_{-i}] \cdot 
\frac{\exp\left(\theta_i x_i+x_i \sum_{j\ne i}{\theta_{ij}x_j}\right)}
{\exp\left(\theta_i x_i + x_i \sum_{j\ne i}{\theta_{ij} x_j}\right)+ \exp\left(-\theta_i x_i - x_i \sum_{j\ne i}{\theta_{ij} x_j}\right)}}\\
& = \sum_{x_{-i} \in \{\pm1\}^{d-1}}{\Pr[X_{-i}=x_{-i}] \cdot 
\frac{\exp\left(2\theta_i x_i + 2 x_i \sum_{j\ne i}{\theta_{ij} x_j}\right)}{1+\exp\left(2\theta_i x_i + 2x_i \sum_{j\ne i}{\theta_{ij}x_j}\right)}} \;.
\end{align*}
Since $X$ is an $(M,\alpha)$-bounded Ising model and the function $f(t)=\frac{e^t}{1+e^t}$ is monotonically increasing, we have that
\begin{align}\label{marginal-bound}
\frac{\exp(-2(\alpha+M))}{1+\exp(-2(\alpha+M))}
\le \frac{\exp\left(2\theta_i x_i + 2 x_i \sum_{j\ne i}{\theta_{ij} x_j}\right)}{1+\exp\left(2 \theta_i x_i + 2 x_i \sum_{j\ne i}{\theta_{ij} x_j}\right)}\le \frac{\exp(2(\alpha+M))}{1+\exp(2(\alpha+M))} \;,
\end{align}
which implies that 
$\frac{\exp(-2(\alpha+M))}{1+\exp(-2(\alpha+M))} 
\le \Pr[X_i=x_i] \le \frac{\exp(2(\alpha+M))}{1+\exp(2(\alpha+M))}$. 

Let $p_i=\Pr[X_i=1]$. We directly calculate $\E[X_i]$ and $\var[X_i]$ as follows.
\begin{align*}
\E[X_i]&=\Pr[X_i=1]-\Pr[X_i=-1]=2p_i-1,\\
\var[X_i]&=1-\E[X_i]^2=1-(2p_i-1)^2=4p_i(1-p_i) \;.
\end{align*}
Hence, from inequality~\eqref{marginal-bound}, we have that 
$$\var[X_i]=4 \, \Pr[X_i=1] \, \Pr[X_i=-1] \ge 4 \left(\frac{\exp(-2(\alpha+M))}{1+\exp(-2(\alpha+M))}\right)^2 \;,$$
which completes the proof.

\section{Omitted Proofs from Section~\ref{sec:exp}} \label{app:exp}

\subsection{Proof of Lemma~\ref{lem:mean-to-parameter}} \label{app:mean-to-parameter}
The following simple claim shows that under Condition~\ref{exp-family-cond}, the likelihood function 
of the exponential family is smooth and strongly convex.

\begin{claim}\label{claim:smooth-convex}
Fix $\mu_T \in \R^d$. For any $\theta \in \Omega$, define $L(\theta,\mu_T)=\langle\theta,\mu_T\rangle-A(\theta)$, 
where $A(\theta)$ is the log-partition function for the exponential family $P_{\theta}$ with sufficient statistics $T(x)$. 
If Condition~\ref{exp-family-cond} holds, then $-L(\theta,\mu_T)$ is $L$-smooth and $m$-strongly convex, 
for some constants $L, m>0$ independent of the vector $\mu_T$.
\end{claim}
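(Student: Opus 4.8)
The plan is to reduce the claim to a uniform two-sided bound on the Hessian of the log-partition function over $\Omega$. First I would observe that $-L(\theta,\mu_T) = A(\theta) - \langle\theta,\mu_T\rangle$ has the same Hessian as $A(\theta)$, since $\langle\theta,\mu_T\rangle$ is linear in $\theta$; in particular $\nabla^2_\theta\big({-}L(\theta,\mu_T)\big)$ does not depend on $\mu_T$. By Fact~\ref{fact:exp-family-p1}, $\nabla^2_\theta A(\theta) = \cov_{X\sim P_\theta}[T(X)] =: \Sigma_T(\theta)$, so it suffices to exhibit constants $L \ge m > 0$, independent of $\mu_T$ (and of $\theta$ and $d$), with $m\,I \preceq \Sigma_T(\theta) \preceq L\,I$ for every $\theta\in\Omega$; recall that $\Omega$ is convex, so these inequalities are exactly the Hessian characterizations of $m$-strong convexity and $L$-smoothness over $\Omega$ from Definition~\ref{def:ssc}.

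The lower bound is immediate: the first part of Condition~\ref{exp-family-cond} states precisely that $\Sigma_T(\theta) \succeq c_1\,I$ for all $\theta\in\Omega$, so we may take $m = c_1$, which gives $m$-strong convexity of $-L(\theta,\mu_T)$.

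For the upper bound I would invoke the sub-exponential tail bound in the second part of Condition~\ref{exp-family-cond}: for every unit vector $v$ and every $\theta\in\Omega$, the random variable $\langle v, T(X)-\E_{X\sim P_\theta}[T(X)]\rangle$ is mean-zero with tails decaying at rate $c_2$, \emph{uniformly} in $\theta$. Applying Fact~\ref{fact:sub-exponential-property} with $K = 1/c_2$ and $p = 2$ yields $\E_{X\sim P_\theta}\big[\langle v, T(X)-\E[T(X)]\rangle^2\big] \le (2C/c_2)^2 =: L$ for an absolute constant $C$. Since this holds for every unit $v$, we get $\Sigma_T(\theta) \preceq L\,I$, and hence $-L\,I \preceq \nabla^2_\theta\big({-}L(\theta,\mu_T)\big) = \Sigma_T(\theta) \preceq L\,I$ for all $\theta\in\Omega$; that is, $-L(\theta,\mu_T)$ is $L$-smooth on $\Omega$. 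Both $m$ and $L$ depend only on $c_1$, $c_2$, and the absolute constant $C$, not on $\mu_T$ or $d$, as required.

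There is no serious obstacle here; the only point that needs a little care is that the smoothness constant must be uniform over all of $\Omega$, which is guaranteed because the sub-exponential parameter $c_2$ in Condition~\ref{exp-family-cond} is itself uniform over $\theta\in\Omega$. The argument is otherwise a one-line translation of tail bounds into a spectral bound on the covariance via Fact~\ref{fact:sub-exponential-property}.
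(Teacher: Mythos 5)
Your proof is correct and follows essentially the same route as the paper's: identify $\nabla^2_\theta(-L(\theta,\mu_T)) = \cov_{X\sim P_\theta}[T(X)]$ via Fact~\ref{fact:exp-family-p1}, read off strong convexity from the first part of Condition~\ref{exp-family-cond}, and derive smoothness from the sub-exponential tail bound via Fact~\ref{fact:sub-exponential-property}. You just spell out the $p=2$ moment computation that the paper's proof leaves implicit.
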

\begin{proof}
Let $f(\theta)=-L(\theta,\mu_T)$ and we have that $\nabla f(\theta)=\E_{X\sim P_\theta}[T(X)]-\mu_T$ and 
$\nabla^2f(\theta)=\cov_{X\sim P_\theta}[T(X)]$. From the first statement in Condition~\ref{exp-family-cond}, 
we know that $\cov_{X\sim P_\theta}[T(X)] \succeq m \, I$ for some universal constant $m>0$, 
and thus $f(\theta)$ is $m$-strongly convex. In addition, from the second statement in Condition~\ref{exp-family-cond}, 
we know that there exists a constant $c>0$ such that for any parameter $\theta\in\Omega$ 
and any unit vector $v\in\mathbb{S}^{d-1}$, $\Pr_{X\sim P_\theta}[|\langle v,T(X)-\E[T(X)]\rangle|>t]\le2\exp(-ct),\forall t>0$. 
From Fact~\ref{fact:sub-exponential-property}, we have that $\cov_{X\sim P_\theta}[T(X)] \preceq  L \, I$, 
for some universal constant $L>0$ and thus $f(\theta)$ is $L$-smooth.
\end{proof}

Since $-L(\theta,\mu_T)$ is $L$-smooth and $m$-strongly convex, one can apply Projected Gradient Descent (PGD)  
to efficiently compute the maximum likelihood estimator $\arg\max_{\theta\in\Omega}L(\theta,\mu_T)$ 
for any fixed $\mu_T \in \R^d$. \new{A small wrinkle is that, in order to apply vanilla PGD (Algorithm~\ref{PGD}), 
we need access to exact gradients and projections. 
In our setting, this is not possible in general: For general exponential families, it is 
computationally hard to compute $\nabla(-L(\theta,\mu_T))=\E_{X\sim P_\theta}[T(X)]-\mu_T$ exactly.
To address this minor issue, we need to slightly modify Algorithm~\ref{PGD} and its analysis, 
where we use sufficiently accurate approximations to the gradient and the projection.}

\begin{algorithm}
\SetKwInOut{Input}{Input}
\SetKwInOut{Output}{Output}
\Input{$L$-smooth and $m$-strongly convex function $f$ over $\Omega$ and \new{parameters} 
$\delta,\delta_1,\delta_2>0$.}
\Output{$\wh{x}\in\Omega$ such that $\|\wh{x}-x^*\|_2\le\delta+\frac{\delta_2+\delta_1/L}{1-\sqrt{1-m/L}}$, 
where $x^*=\arg\min_{x\in\Omega}f(x)$.}
Let $x^0\in\Omega$ be an arbitrary initial point and $T=O\left(\frac{L}{m}\log\left(\frac{\Diam(\Omega)}{\delta}\right)\right)$.\\
\For{$t$ = $0$ to $T-1$}{
Compute $g^t$ such that $\|g^t-\nabla f(x^t)\|_2\le\delta_1$.\\
$r^t= x^t- \new{\frac{1}{L}} \, g^t$.\\
Compute $x^{t+1}\in\Omega$ such that $\|x^{t+1}-P_\Omega(r^t)\|_2\le\delta_2$, where $P_\Omega(r^t)=\arg\min_{x\in\Omega}\|x-r^t\|_2$.\\
}
\Return{$x^T$\;}
\caption{Projected gradient descent for strongly convex smooth optimization with approximate gradient and projection}\label{APGD}
\end{algorithm}

\new{The following simple claim adapts the analysis of PGD to work with approximate gradients and projections.}

\begin{claim} \label{clm:PGD-approx}
Let $f: \Omega \to \R$ be $L$-smooth and $m$-strongly convex and $x^*=\arg\min_{x\in\Omega}{f(x)}$. 
The iterates in Algorithm~\ref{PGD} satisfy
\begin{align*}
\|x^{t+1}-x^*\|_2\le\delta_2+\delta_1/L+\sqrt{1-m/L}\|x^t-x^*\|_2 \;.
\end{align*}
Therefore, after $T=O\left(\frac{L}{m}\log\left(\frac{\Diam(\Omega)}{\delta}\right)\right)$ iterations, 
we have that $\|x^T-x^*\|_2\le\delta+\frac{\delta_2+\delta_1/L}{1-\sqrt{1-m/L}}$.
\end{claim}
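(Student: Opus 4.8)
The plan is to reduce Claim~\ref{clm:PGD-approx} to the exact-gradient contraction already recorded in Fact~\ref{descent-lem}, by bounding how far the approximate gradient step and the approximate projection can move us from the exact projected-gradient iterate. Fix an iteration $t$, let $\widehat{r}^t = x^t - \frac{1}{L}\nabla f(x^t)$ be the step vanilla PGD would take, and let $\widetilde{x}^{t+1} = P_\Omega(\widehat{r}^t)$ be the corresponding exact iterate. Applying Fact~\ref{descent-lem} with $x^t$ as the current point gives $\|\widetilde{x}^{t+1}-x^*\|_2 \le \sqrt{1-m/L}\,\|x^t-x^*\|_2$.

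Next I would control the discrepancy between $\widetilde{x}^{t+1}$ and the actual iterate $x^{t+1}$. Since $\|g^t-\nabla f(x^t)\|_2 \le \delta_1$, we have $\|r^t-\widehat{r}^t\|_2 = \frac{1}{L}\|g^t-\nabla f(x^t)\|_2 \le \delta_1/L$, and because the Euclidean projection onto the convex set $\Omega$ is nonexpansive, $\|P_\Omega(r^t)-P_\Omega(\widehat{r}^t)\|_2 \le \|r^t-\widehat{r}^t\|_2 \le \delta_1/L$. Combining with the approximate-projection guarantee $\|x^{t+1}-P_\Omega(r^t)\|_2 \le \delta_2$ and the triangle inequality gives $\|x^{t+1}-\widetilde{x}^{t+1}\|_2 \le \delta_2 + \delta_1/L$, hence
\[
\|x^{t+1}-x^*\|_2 \le \|x^{t+1}-\widetilde{x}^{t+1}\|_2 + \|\widetilde{x}^{t+1}-x^*\|_2 \le \delta_2 + \delta_1/L + \sqrt{1-m/L}\,\|x^t-x^*\|_2,
\]
which is the claimed one-step bound.

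Finally I would unroll this linear recursion. Writing $q = \sqrt{1-m/L} < 1$ and $c = \delta_2 + \delta_1/L$, iterating yields $\|x^T-x^*\|_2 \le q^T\|x^0-x^*\|_2 + c\sum_{j=0}^{T-1}q^j \le q^T\,\Diam(\Omega) + \frac{c}{1-q}$. Using $q = \sqrt{1-m/L} \le 1 - \frac{m}{2L}$ and $\|x^0-x^*\|_2 \le \Diam(\Omega)$, one checks that $T = O\big(\frac{L}{m}\log(\Diam(\Omega)/\delta)\big)$ iterations suffice to force $q^T\,\Diam(\Omega) \le \delta$, giving $\|x^T-x^*\|_2 \le \delta + \frac{\delta_2+\delta_1/L}{1-\sqrt{1-m/L}}$. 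I do not expect a real obstacle here: this is a routine perturbation of the standard strongly-convex contraction analysis, and the only point needing care is invoking nonexpansiveness of the projection so that the gradient error propagates with a factor at most $1/L$ and the two per-step error terms simply add.
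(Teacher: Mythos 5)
Your proof is correct and is essentially the same argument as the paper's: both proofs decompose $\|x^{t+1}-x^*\|_2$ using the triangle inequality into an approximate-projection error ($\delta_2$), a gradient-error term transported through the nonexpansiveness of $P_\Omega$ ($\delta_1/L$), and the exact-PGD contraction from Fact~\ref{descent-lem}, then unroll the resulting linear recursion with ratio $\sqrt{1-m/L}$. The only cosmetic difference is that you name the exact iterate $\widetilde{x}^{t+1}$ and do the telescoping as a geometric-series sum, while the paper subtracts the fixed point $\frac{\delta_2+\delta_1/L}{1-\sqrt{1-m/L}}$ on both sides before iterating.
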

\begin{proof}
From Fact~\ref{descent-lem}, we have that
\begin{align*}
\ltwo{x^{t+1}-x^*}&\le\ltwo{x^{t+1}-P_\Omega(r^t)}+\ltwo{P_\Omega(r^t)-P_\Omega\left(x^t-\frac{1}{L}\nabla f(x^t)\right)}\\
&\quad+\ltwo{P_\Omega\left(x^t-\frac{1}{L}\nabla f(x^t)\right)-x^*}\\&\le\ltwo{x^{t+1}-P_\Omega(r^t)}+\ltwo{r^t-\left(x^t-\frac{1}{L}\nabla f(x^t)\right)}+\ltwo{P_\Omega\left(x^t-\frac{1}{L}\nabla f(x^t)\right)-x^*}\\
&=\ltwo{x^{t+1}-P_\Omega(r^t)}+\frac{1}{L}\ltwo{g^t-\nabla f(x^t)}+\ltwo{P_\Omega\left(x^t-\frac{1}{L}\nabla f(x^t)\right)-x^*}
\\&\le\delta_2+\delta_1/L+\sqrt{1-m/L}\|x^t-x^*\|_2 \;,
\end{align*}
where we apply $\|P_\Omega(x)-P_\Omega(y)\|_2\le\|x-y\|_2,\forall x, y\in \R^d$ in the second inequality. 
Therefore, we can write
\begin{align*}
\|x^T-x^*\|_2-\frac{\delta_2+\delta_1/L}{1-\sqrt{1-m/L}} 
&\le \sqrt{1-m/L}\left(\|x^{T-1}-x^*\|_2-\frac{\delta_2+\delta_1/L}{1-\sqrt{1-m/L}}\right)\\
&\le (1-m/L)^{T/2}\left(\|x^0-x^*\|_2-\frac{\delta_2+\delta_1/L}{1-\sqrt{1-m/L}}\right)\\
&\le(1-m/L)^{T/2}\Diam(\Omega) \;.
\end{align*}
\end{proof}

Claim~\ref{clm:PGD-approx} tells us that if we are able to efficiently approximate 
the projection of an arbitrary point in $\R^d$ to $\Omega$ and the gradient of the function, 
then we can efficiently solve the underlying minimization problem. 
From Condition~\ref{exp-family-cond}, we can efficiently approximate 
the projection of any point in $\R^d$ within error $1/\poly(d)$. 
Note that for any fixed $\mu_T \in \R^d$, the gradient of the negative likelihood is equal to
$\nabla(-L(\theta,\mu_T))=\E_{X\sim P_{\theta}}[T(X)]-\mu_T$. Therefore, it suffices to show that 
for any given parameter $\theta$, we can efficiently estimate the mean $\E_{X \sim P_{\theta}}[T(X)]$
within small error. This is done in the following claim:

\begin{claim}\label{approx-inference}
Let $P,Q$ be distributions on $\R^d$. Assume that $Q$ is sub-exponential and that $\dtv(P,Q)\le\gamma$ for some parameter $\gamma>0$. Let $\mu$ and $\Sigma$ denote the mean and covariance of distribution $Q$ respectively. Let $X_1,\ldots,X_n$ be i.i.d.\ samples drawn from $P$ and 
$\wh{\mu}_n=\frac{1}{n}\sum_{i=1}^{n}{X_i}$, 
$\wh{\Sigma}_n=\frac{1}{n}\sum_{i=1}^{n}{(X_i-\wh{\mu}_n)(X_i-\wh{\mu}_n)^T}$ be the empirical mean and covariance. Then there exist constants $c_1,c_2>0$ such that the following holds:
\begin{enumerate}
\item With probability at least $1-2\exp(-t^2)-n\gamma$, we have that
$\left\|\wh{\mu}_n-\mu\right\|_2\le c_1\max(\delta,\delta^2)$, 
where $\delta=\sqrt{\frac{d}{n}}+\frac{t}{\sqrt{n}}$.

\item With probability at least $1-6\exp(-t)-n\gamma$, we have that
$$\left\|\wh{\Sigma}_n-\Sigma\right\|_2\le c_2d\left(\sqrt{\frac{t+\log d}{n}}+\frac{((t+\log d)\log n)^2}{n}\right).$$
\end{enumerate}
\end{claim}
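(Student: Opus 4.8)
The plan is to reduce the claim to Lemma~\ref{lem:inference} by a coupling argument, so that no new concentration estimate is needed. Since $\dtv(P,Q)\le\gamma$, there is a coupling of a draw $X\sim P$ with a draw $Y\sim Q$ such that $\Pr[X\ne Y]\le\gamma$. I would take $n$ independent copies of this coupling to obtain pairs $(X_1,Y_1),\dots,(X_n,Y_n)$; then $X_1,\dots,X_n$ are i.i.d.\ from $P$, $Y_1,\dots,Y_n$ are i.i.d.\ from $Q$, and by a union bound the event $E=\{X_i=Y_i\ \text{for all}\ i\in[n]\}$ satisfies $\Pr[E]\ge1-n\gamma$. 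On $E$, the empirical mean and the empirical covariance built from the $X_i$'s are exactly equal to those built from the $Y_i$'s.

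For part (1), I would let $\wh{\mu}_n^Y=\frac{1}{n}\sum_{i=1}^n Y_i$ and apply the first part of Lemma~\ref{lem:inference} to the i.i.d.\ sub-exponential sample $Y_1,\dots,Y_n$: this yields a constant $c_1>0$ and an event $G_\mu$ with $\Pr[G_\mu]\ge1-2\exp(-t^2)$ on which $\|\wh{\mu}_n^Y-\mu\|_2\le c_1\max(\delta,\delta^2)$, where $\delta=\sqrt{d/n}+t/\sqrt{n}$. A union bound gives $\Pr[E\cap G_\mu]\ge1-2\exp(-t^2)-n\gamma$, and on $E\cap G_\mu$ we have $\wh{\mu}_n=\wh{\mu}_n^Y$, hence the stated bound. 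Part (2) is the same argument with $\wh{\Sigma}_n^Y=\frac{1}{n}\sum_{i=1}^n(Y_i-\wh{\mu}_n^Y)(Y_i-\wh{\mu}_n^Y)^T$ and the second part of Lemma~\ref{lem:inference}, producing a constant $c_2>0$ and an event $G_\Sigma$ with $\Pr[G_\Sigma]\ge1-6\exp(-t)$ on which $\|\wh{\Sigma}_n^Y-\Sigma\|_2\le c_2 d(\sqrt{(t+\log d)/n}+((t+\log d)\log n)^2/n)$; intersecting with $E$ and using $\wh{\Sigma}_n=\wh{\Sigma}_n^Y$ there finishes the bound with failure probability at most $6\exp(-t)+n\gamma$.

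There is no real obstacle here — the one subtlety I would be careful about is to \emph{not} condition on $E$ before invoking Lemma~\ref{lem:inference}, since conditioning on $E$ would distort the distribution of $(Y_1,\dots,Y_n)$ and break the i.i.d.\ hypothesis of that lemma. Instead the bad events $E^c$ and $G_\bullet^c$ should be bounded separately, using that the $Y_i$'s are \emph{unconditionally} i.i.d.\ from $Q$, and then combined by a single union bound — exactly in the form in which this claim is instantiated (with concrete choices of $t$ and $n$) inside the proof of Lemma~\ref{lem:parameter-to-cov}.
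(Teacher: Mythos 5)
Your proposal is correct and is essentially the paper's argument: both reduce to Lemma~\ref{lem:inference} via a coupling of $P^{\otimes n}$ and $Q^{\otimes n}$ on which the empirical statistics agree with probability at least $1-n\gamma$, followed by a union bound. The only cosmetic difference is that the paper first applies the data-processing inequality to bound $\dtv(\wh{\mu}_n,\mu_n)$ and $\dtv(\wh{\Sigma}_n,\Sigma_n)$ by $n\gamma$ and then takes optimal couplings of the statistics themselves, whereas you build an explicit sample-level product coupling; your remark about not conditioning on the coupling event before invoking Lemma~\ref{lem:inference} is exactly the right care to take.
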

\begin{proof}
Let $Y_1,\ldots,Y_n$ be $n$ i.i.d.\ samples drawn from $Q$. Let
\begin{align*}
\mu_n=\frac{1}{n}\sum_{i=1}^{n}{Y_i}, \qquad\text{and}\qquad 
\Sigma_n=\frac{1}{n}\sum_{i=1}^{n}{(Y_i-\mu_n)(Y_i-\mu_n)^T}.
\end{align*} 
By the data processing inequality for the total variation distance, we can write 
$$\dtv(\mu_n,\wh{\mu}_n) \le \dtv((X_1,\ldots,X_n),(Y_1,\ldots,Y_n))\le n\dtv(P,Q) \le n\gamma \;$$ 
and similarly 
$$\dtv(\Sigma_n,\wh{\Sigma}_n) \le \dtv((X_1,\ldots,X_n),(Y_1,\ldots,Y_n)) \le n \dtv(P,Q) \le n\gamma \;.$$ 
We pick optimal couplings $(\wh{\mu}_n,\mu_n)$ and $(\wh{\Sigma}_n,\Sigma_n)$. 
From Lemma~\ref{lem:inference}, there exist constants $c_1,c_2>0$ such that
\begin{align*}
\Pr\left[\left\|\wh{\mu}_n-\mu\right\|_2>c_1\max(\delta,\delta^2)\right]&\le\Pr\left[\wh{\mu}_n\ne\mu_n\right]+\Pr\left[\left\|\mu_n-\mu\right\|_2>c_1\max(\delta,\delta^2)\right]\\
&\le2\exp(-t^2)+n\gamma \;,
\end{align*}
and
\begin{align*}
&\quad\Pr\left[\left\|\wh{\Sigma}_n-\Sigma\right\|_2>c_2d\left(\sqrt{\frac{t+\log d}{n}}+\frac{((t+\log d)\log n)^2}{n}\right)\right]\\&\le\Pr\left[\wh{\Sigma}_n\ne\Sigma_n\right]+\Pr\left[\left\|\Sigma_n-\Sigma\right\|_2>c_2d\left(\sqrt{\frac{t+\log d}{n}}+\frac{((t+\log d)\log n)^2}{n}\right)\right]\\&\le6\exp(-t)+n\gamma \;.
\end{align*}
\end{proof}

We are now ready to prove Lemma~\ref{lem:mean-to-parameter}.

\begin{proof}[Proof of Lemma~\ref{lem:mean-to-parameter}]
Let $L(\theta,\mu'_T)=\langle\theta,\mu'_T\rangle-A(\theta),\forall\theta\in\Omega$ and $\theta'=\arg\max_{\theta\in\Omega}L(\theta,\mu_T')$. By Lemma~\ref{lem:parameter-to-mean},
we have that $\|\theta'-\theta^*\|_2\le O(\delta)$. 
If we pick $\delta_1=\delta_2=\delta$ and apply Algorithm~\ref{APGD} to the function $-L(\theta,\mu_T')$, 
it will return a point $\wh{\theta} \in \Omega$ with $\|\wh{\theta}-\theta'\|_2\le O(\delta)$, 
since \new{by Claim~\ref{claim:smooth-convex}} $-L(\theta,\mu_T')$ is $L$-smooth and $m$-strongly convex, for some universal constants $L,m>0$. 
This implies that $\|\wh{\theta}-\theta^*\|_2 \le \|\wh{\theta}-\theta'\|_2+\|\theta'-\theta^*\|_2 \le O(\delta)$.

Now we show that the above process is efficient and bound the failure probability. By Condition~\ref{exp-family-cond}, $\Diam(\Omega)\le\exp(d^c)$ for some constant $c>0$. 
Given an arbitrary $\theta\in\Omega$, we can sample from a distribution within total variation distance $\gamma=\frac{\delta^2\zeta}{2d(d^2+\log(1/\delta))\log\left(\frac{4(d^c+\log(1/\delta))}{\zeta}\right)}$ from $P_{\theta}$ in time $\poly\left(\frac{d}{\delta\zeta}\right)$. 
Hence, if we pick $t=\sqrt{\log\left(\frac{4(d^c+\log(1/\delta))}{\zeta}\right)}$ and $n=\Omega(t^2d/\delta^2)$ in Claim~\ref{approx-inference}, 
we are able to estimate the gradient $\nabla_\theta(-L(\theta,\mu'_T))=\E_{X\sim P_\theta}[T(X)]-\mu'_T$ within error $\delta$ with probability at least $1-O\left(\frac{\zeta}{d^c+\log(1/\delta)}\right)$. Since there are $T=O\left(\frac{L}{m}\log\left(\frac{\Diam(\Omega)}{\delta}\right)\right)=O(d^c+\log(1/\delta))$ iterations, by union bound, the algorithm will output a $\wh{\theta}$ with $\|\wh{\theta}-\theta^*\|_2\le O(\delta)$ with probability at least $1-\zeta$.
\end{proof}

\subsection{Proof of Proposition~\ref{prop:exponential-family-p3}} \label{app:prop:exponential-family-p3}

Let $Z(\theta)=\exp(A(\theta))$ be the normalizing factor. 
Fix $i,j,k\in[d]$. We calculate the partial derivative $\frac{\partial(\Sigma_T)_{ij}}{\partial\theta_k}$ as follows.
\begin{align*}
\frac{\partial(\Sigma_T)_{ij}}{\partial\theta_k}&=\frac{\partial}{\partial\theta_k}\left(\E_{X\sim P_\theta}[(T(X)-\mu_T)_i(T(X)-\mu_T)_j]\right)\\&=\frac{\partial}{\partial\theta_k}\left(\frac{\sum_{x\in\mathcal{X}}\exp(\langle T(x),\theta\rangle)(T(x)-\mu_T)_i(T(x)-\mu_T)_j}{Z(\theta)}\right)\\&=\sum_{x\in\mathcal{X}}\frac{\partial}{\partial\theta_k}\left(\frac{\exp(\langle T(x),\theta\rangle)(T(x)-\mu_T)_i(T(x)-\mu_T)_j}{Z(\theta)}\right)\\&=\sum_{x\in\mathcal{X}}\frac{\frac{\partial}{\partial\theta_k}\left(\exp(\langle T(x),\theta\rangle)(T(x)-\mu_T)_i(T(x)-\mu_T)_j\right)}{Z(\theta)}\\&\quad-\frac{\partial Z(\theta)}{\partial\theta_k}\sum_{x\in\mathcal{X}}{\frac{\exp(\langle T(x),\theta\rangle)(T(x)-\mu_T)_i(T(x)-\mu_T)_j}{Z(\theta)^2}} \;.
\end{align*}
Noting that
\begin{align*}
&\quad\frac{\partial}{\partial\theta_k}\left(\exp(\langle T(x),\theta\rangle)(T(x)-\mu_T)_i(T(x)-\mu_T)_j\right)\\&=\new{\exp(\langle T(x),\theta\rangle)}T(x)_k(T(x)-\mu_T)_i(T(x)-\mu_T)_j-\frac{\partial(\mu_T)_i}{\partial\theta_k}\exp(\langle T(x),\theta\rangle)(T(x)-\mu_T)_j\\
&\quad-\frac{\partial(\mu_T)_j}{\partial\theta_k}\exp(\langle T(x),\theta\rangle)(T(x)-\mu_T)_i\\
&=\new{\exp(\langle T(x),\theta\rangle)}T(x)_k(T(x)-\mu_T)_i(T(x)-\mu_T)_j-(\Sigma_T)_{ik}\exp(\langle T(x),\theta\rangle)(T(x)-\mu_T)_j\\
&\quad-(\Sigma_T)_{jk}\exp(\langle T(x),\theta\rangle)(T(x)-\mu_T)_i \;,
\end{align*}
we have that
\begin{align*}
&\quad\sum_{x\in\mathcal{X}}\frac{\frac{\partial}{\partial\theta_k}\left(\exp(\langle T(x),\theta\rangle)(T(x)-\mu_T)_i(T(x)-\mu_T)_j\right)}{Z(\theta)}\\
&=\sum_{x\in\mathcal{X}}\frac{\new{\exp(\langle T(x),\theta\rangle)}T(x)_k(T(x)-\mu_T)_i(T(x)-\mu_T)_j}{Z(\theta)}-\frac{(\Sigma_T)_{ik}\sum_{x\in\mathcal{X}}\exp(\langle T(x),\theta\rangle)(T(x)-\mu_T)_j}{Z(\theta)}\\
&\quad-\frac{(\Sigma_T)_{jk}\sum_{x\in\mathcal{X}}\exp(\langle T(x),\theta\rangle)(T(x)-\mu_T)_i}{Z(\theta)}\\&=\E_{X\sim P_\theta}[T(x)_k(T(x)-\mu_T)_i(T(x)-\mu_T)_j]-(\Sigma_T)_{ik}\E_{X\sim P_\theta}[(T(x)-\mu_T)_j]\\
&\quad-(\Sigma_T)_{jk}\E_{X\sim P_\theta}[(T(x)-\mu_T)_i]\\&=\E_{X\sim P_\theta}[T(x)_k(T(x)-\mu_T)_i(T(x)-\mu_T)_j] \;.
\end{align*}
Therefore,
\begin{align*}
\frac{\partial(\Sigma_T)_{ij}}{\partial\theta_k}&=\sum_{x\in\mathcal{X}}\frac{\frac{\partial}{\partial\theta_k}\left(\exp(\langle T(x),\theta\rangle)(T(x)-\mu_T)_i(T(x)-\mu_T)_j\right)}{Z(\theta)}\\&\quad-\frac{\partial Z(\theta)}{\partial\theta_k}\sum_{x\in\mathcal{X}}{\frac{\exp(\langle T(x),\theta\rangle)(T(x)-\mu_T)_i(T(x)-\mu_T)_j}{Z(\theta)^2}}\\&=\E_{X\sim P_\theta}[T(x)_k(T(x)-\mu_T)_i(T(x)-\mu_T)_j]\\&\quad-\left(\frac{\sum_{x\in\mathcal{X}}\exp(\langle T(x),\theta\rangle)T(x)_k}{Z(\theta)}\right)\left(\sum_{x\in\mathcal{X}}{\frac{\exp(\langle T(x),\theta\rangle)(T(x)-\mu_T)_i(T(x)-\mu_T)_j}{Z(\theta)}}\right)\\&=\E_{X\sim P_\theta}[T(x)_k(T(x)-\mu_T)_i(T(x)-\mu_T)_j]-\E_{X\sim P_\theta}[T(x)_k]\E_{X\sim P_\theta}[(T(x)-\mu_T)_i(T(x)-\mu_T)_j]\\&=\E_{X\sim P_\theta}[(T(x)-\mu_T)_i(T(x)-\mu_T)_j(T(x)-\mu_T)_k] \;.
\end{align*}
This completes the proof.

\subsection{Proof of Lemma~\ref{third-moment}} \label{app:third-moment}

Let $\theta\in\Omega$ and $P_\theta$ be the corresponding exponential family with sufficient statistics $T(x)$. Let $\mu_T(\theta)=\E_{X\sim P_\theta}[T(x)]$ and $\Sigma_T(\theta)=\cov_{X\sim P_\theta}[T(x)]$. Let $v\in\mathbb{S}^{d-1}$ be a unit vector such that $\|\Sigma_T(\theta^1)-\Sigma_T(\theta^2)\|_2=\left|v^T(\Sigma_T(\theta^1)-\Sigma_T(\theta^2))v\right|$. Define $f(\theta)=v^T\Sigma_T(\theta)v$. 
By the mean value theorem, we have that
\begin{align*}
\|\Sigma_T(\theta^1)-\Sigma_T(\theta^2)\|_2&=\left|v^T\Sigma_T(\theta^1)v-v^T\Sigma_T(\theta^2)v\right|=|f(\theta^1)-f(\theta^2)|\\&=|\langle\nabla f(\wt{\theta}),\theta^1-\theta^2\rangle|\\&\le\ltwo{\nabla f(\wt{\theta})}\cdot\|\theta^1-\theta^2\|_2 \;,
\end{align*}
where $\wt{\theta}=\lambda\theta^1+(1-\lambda)\theta^2$ for some $0\le\lambda\le1$. Therefore, we only need to show that $\ltwo{\nabla f(\wt{\theta})}$ is upper bounded by a universal constant $c'>0$. Let $w\in\mathbb{S}^{d-1}$ be the unit vector such that $\ltwo{\nabla f(\wt{\theta})}=\langle w,\nabla f(\wt{\theta})\rangle$. By our definition of function $f(\theta)$, we have that
\begin{align*}
\ltwo{\nabla f(\wt{\theta})}&=\langle w,\nabla f(\wt{\theta})\rangle=\sum_{k=1}^{d}{\frac{\partial f(\wt{\theta})}{\partial\theta_k}\cdot w_k}=\sum_{k=1}^{d}{v^T\left(\frac{\partial\Sigma_T(\wt{\theta})}{\partial\theta_k}\right)v\cdot w_k}=\sum_{i,j,k\in[d]}{v_iv_jw_k\frac{\partial(\Sigma_T)_{ij}(\wt{\theta})}{\partial\theta_k}}\\&=\sum_{i,j,k\in[d]}{v_iv_jw_k\E_{X\sim P_{\wt{\theta}}}\left[(T(X)-\mu_T(\wt{\theta}))_i(T(X)-\mu_T(\wt{\theta}))_j(T(X)-\mu_T(\wt{\theta}))_k\right]}\\&=\E_{X\sim P_{\wt{\theta}}}[\langle T(X)-\mu_T(\wt{\theta}),v\rangle^2\langle T(X)-\mu_T(\wt{\theta}),w\rangle]\\&\le\sqrt{\E_{X\sim P_{\wt{\theta}}}[\langle T(X)-\mu_T(\wt{\theta}),v\rangle^4]}\cdot\sqrt{\E_{X\sim P_{\wt{\theta}}}[\langle T(X)-\mu_T(\wt{\theta}),w\rangle^2]},
\end{align*}
where we apply Proposition~\ref{prop:exponential-family-p3} in the fifth equality and the last inequality comes from the Cauchy Schwarz inequality. From Fact~\ref{fact:sub-exponential-property}, we know that both $\E_{X\sim P_{\wt{\theta}}}[\langle T(X)-\mu_T(\wt{\theta}),\new{v}\rangle^4]$ and $\E_{X\sim P_{\wt{\theta}}}[\langle T(X)-\mu_T(\wt{\theta}),w\rangle^2]$ are upper bounded by universal constants. Hence we obtain that $\ltwo{\nabla f(\wt{\theta})}\le c'$ for some universal constant $c'>0$.

\section{Omitted Proofs from Section~\ref{sec:ising}} \label{app:ising}

\subsection{Proof of Lemma~\ref{lem:conc-Ising}} \label{app:lem:conc-Ising}

\begin{fact}[\cite{gotze2019higher}]\label{fact:f-conc-Ising}
Let $X \sim P_{\theta}$ be an Ising model satisfying Dobrushin's condition and $\max_{i\in[d]}|\theta_i| \le \alpha$, 
where $\alpha>0$ is an absolute constant. Let $f:\{\pm1\}^d \to \R$ be an arbitrary function. 
Define function $Df: \{\pm1\}^d \to \R^d$ as $Df(x)_i=\frac{f(x_{i+})-f(x_{i-})}{2}, \forall x \in \{\pm1\}^d, \forall i \in [d]$, 
where $x_{i+}$ is the vector obtained from $x$ by replacing the $i$-th coordinate with $1$ 
and $x_{i-}$ is the one that is obtained by replacing the $i$-th coordinate with $-1$. 
Define the function $Hf: \{\pm1\}^d \to \R^{d \times d}$ as $Hf(x)_{ij} = D(Df(x)_j)(x)_i, \forall x \in \{\pm1\}^d, \forall i, j\in[d]$. 
If $\E[\|Df(X)\|_2^2] \le 1$ and $\|Hf(x)\|_F^2 \le 1,\forall x \in \{\pm1\}^d$, then there is a constant $c(\alpha,\eta)>0$ such that
\begin{align*}
\Pr[|f(X)-\E[f(X)]|>t] \le 2 \exp(-c(\alpha,\eta) \, t) \;,
\end{align*}
where $\eta>0$ is the constant in Definition~\ref{def:high-temp}.
\end{fact}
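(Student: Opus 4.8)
The plan is to obtain the bound by a \emph{bootstrapping} argument: first prove a first‑order sub‑Gaussian concentration inequality valid for \emph{all} functions of Ising models under Dobrushin's condition (with parameter governed by the uniform $\ell_2$‑norm of the discrete gradient), then apply it once to the discrete gradient $Df$ of $f$, and finally feed the resulting control back into an entropy (Herbst) argument for $f$ itself. The engine for the first‑order inequality is the quantitative content of the rapid mixing of Glauber dynamics (Fact~\ref{fact:Glauber-dynamics}): an Ising model $P_\theta$ satisfying Dobrushin's condition with $\max_i|\theta_i|\le\alpha$ satisfies an approximate tensorization of entropy (equivalently a modified log‑Sobolev inequality) with a constant $\rho=\rho(\alpha,\eta)>0$, so that for every $g:\{\pm1\}^d\to\R$ one has $\mathrm{Ent}_{P_\theta}[e^{g}]\le\rho^{-1}\,\E[\,\ltwo{Dg(X)}^2 e^{g(X)}\,]$, where $Dg$ is the discrete gradient defined as in the statement (the external field bound $\alpha$ enters $\rho$ because it keeps the single‑site conditionals bounded away from $0$ and $1$). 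By the Herbst argument this yields the statement I will use repeatedly: if $\ltwo{Dg(x)}\le\beta$ for \emph{every} $x$, then $\log\E[e^{\lambda(g-\E g)}]\le C(\alpha,\eta)\beta^2\lambda^2$ for all $\lambda\in\R$; for linear $g$ this is exactly Fact~\ref{fact:sub-Gaussian}, and the general case is the same computation.

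First I would apply this to $g(x):=\ltwo{Df(x)}$. The key pointwise estimate is a discrete triangle inequality: for every $i$ and every $x$,
\[
|g(x_{i+})-g(x_{i-})| \;=\; \bigl|\,\ltwo{Df(x_{i+})}-\ltwo{Df(x_{i-})}\,\bigr| \;\le\; \ltwo{Df(x_{i+})-Df(x_{i-})} \;=\; 2\,\ltwo{(Hf(x))_{i,\cdot}}\,,
\]
so, using $Dg(x)_i=\tfrac12\bigl(g(x_{i+})-g(x_{i-})\bigr)$, we get $\ltwo{Dg(x)}^2\le\sum_i\ltwo{(Hf(x))_{i,\cdot}}^2=\|Hf(x)\|_F^2\le1$ for all $x$. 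Hence $g=\ltwo{Df(\cdot)}$ is sub‑Gaussian with an absolute parameter, and since $\E[g(X)]\le\sqrt{\E[\ltwo{Df(X)}^2]}\le1$ by Jensen and the second hypothesis, $\Pr[\ltwo{Df(X)}>1+s]\le2\exp(-c(\alpha,\eta)s^2)$ for all $s>0$; equivalently, by Fact~\ref{fact:sub-exponential-property}, $\E[e^{\mu\ltwo{Df(X)}^2}]\le2$ for a sufficiently small $\mu=\mu(\alpha,\eta)>0$.

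The final step is the Herbst argument for $f$ itself. Writing $H(\lambda)=\log\E[e^{\lambda(f-\E f)}]$, the modified log‑Sobolev inequality gives $\lambda H'(\lambda)-H(\lambda)\le\rho^{-1}\lambda^2 R(\lambda)$, where $R(\lambda)=\E[\ltwo{Df(X)}^2 e^{\lambda(f(X)-\E f)}]/\E[e^{\lambda(f(X)-\E f)}]$ is the second moment of the discrete gradient under the measure tilted by $e^{\lambda(f-\E f)}$. The point is that $R(\lambda)$ stays bounded by an absolute constant as long as $|\lambda|$ stays below a fixed threshold $\lambda_0=\lambda_0(\alpha,\eta)$: using $\ltwo{Df}^2\le\mu^{-1}(e^{\mu\ltwo{Df}^2}-1)$ together with the uniform sub‑Gaussian control of $\ltwo{Df}$ around its mean and a crude Cauchy--Schwarz estimate of the tilt, one checks that for $|\lambda|\le\lambda_0$ the tilted law is within a bounded change of measure of $P_\theta$ on the relevant scale, so $R(\lambda)\le C'(\alpha,\eta)$. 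Integrating $\lambda H'(\lambda)-H(\lambda)\le\rho^{-1}C'\lambda^2$ with $H(0)=H'(0)=0$ gives $H(\lambda)\le\rho^{-1}C'\lambda^2$ for $|\lambda|\le\lambda_0$, and the Chernoff bound with $\lambda=\min(\lambda_0,\, t/(2\rho^{-1}C'))$ yields $\Pr[|f(X)-\E f(X)|>t]\le2\exp(-c(\alpha,\eta)\,t)$ for all $t>0$ (the small‑$t$ sub‑Gaussian regime being subsumed after adjusting constants).

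The hard part will be controlling the tilted gradient moment $R(\lambda)$: this is precisely where the two hypotheses interact, and where the \emph{uniform} bound $\|Hf(x)\|_F\le1$ (as opposed to merely the averaged bound $\E\ltwo{Df(X)}^2\le1$) is indispensable — it is what makes $\ltwo{Df(\cdot)}$ a Lipschitz‑type function and hence keeps the gradient tight after exponential tilting. A secondary technical point is pinning down the exact form of the discrete modified log‑Sobolev / approximate‑tensorization inequality for Ising models under Dobrushin's condition and its dependence on $\alpha$ and $\eta$; this is classical but needs to be derived from (or cited alongside) the mixing of Glauber dynamics, and one must be careful that the constant is uniform over the whole Dobrushin class.
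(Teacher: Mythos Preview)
The paper does not give its own proof of this statement: it is stated as a cited fact from \cite{gotze2019higher} and used as a black box in the proof of Lemma~\ref{lem:conc-Ising}. So there is no ``paper's proof'' to compare against; your proposal is effectively a sketch of the argument in the cited reference.

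Your outline is in the right spirit and matches the strategy of \cite{gotze2019higher}: establish a modified log--Sobolev (equivalently, approximate tensorization of entropy) inequality under Dobrushin's condition with a constant depending only on $\alpha,\eta$; deduce first-order sub-Gaussian concentration for functions with uniformly bounded discrete gradient; apply this to $g=\ltwo{Df(\cdot)}$ using $\ltwo{Dg(x)}\le\|Hf(x)\|_F\le 1$; and then run Herbst for $f$ itself. That is exactly the bootstrapping structure in the source.

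There is one genuine gap in your write-up. Your proposed control of the tilted gradient moment $R(\lambda)=\E[\ltwo{Df}^2 e^{\lambda(f-\E f)}]/\E[e^{\lambda(f-\E f)}]$ via ``a crude Cauchy--Schwarz estimate of the tilt'' is circular: Cauchy--Schwarz would produce a factor $\E[e^{2\lambda(f-\E f)}]^{1/2}=e^{H(2\lambda)/2}$, which is precisely the quantity you are trying to bound. The non-circular device (and the one used in this line of work) is the Donsker--Varadhan variational formula: for the tilted probability density $g=e^{\lambda(f-\E f)}/\E[e^{\lambda(f-\E f)}]$ and any $\mu>0$,
\[
\mu\,R(\lambda)=\E\bigl[\mu\ltwo{Df}^2\, g\bigr]\ \le\ \log\E\bigl[e^{\mu\ltwo{Df}^2}\bigr]\ +\ \mathrm{Ent}[g]\ \le\ \log 2\ +\ \bigl(\lambda H'(\lambda)-H(\lambda)\bigr),
\]
using your sub-Gaussian control of $\ltwo{Df}$ to ensure $\E[e^{\mu\ltwo{Df}^2}]\le 2$ for some $\mu=\mu(\alpha,\eta)>0$. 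Combining this with the modified LSI bound $\lambda H'(\lambda)-H(\lambda)\le \rho^{-1}\lambda^2 R(\lambda)$ yields $\mu R(\lambda)\le \log 2+\rho^{-1}\lambda^2 R(\lambda)$, hence $R(\lambda)\le C(\alpha,\eta)$ for all $|\lambda|\le\lambda_0(\alpha,\eta):=\tfrac12\sqrt{\mu\rho}$. From there Herbst integrates cleanly to $H(\lambda)\le C'\lambda^2$ on $|\lambda|\le\lambda_0$, and Chernoff gives the claimed sub-exponential tail. With this correction your plan goes through.
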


\begin{proof}[Proof of Lemma~\ref{lem:conc-Ising}]
Let $A\in \R^{d\times d}$ be a symmetric matrix with zero diagonal and 
$b\in\R^d$ be such that $2\|A\|_F^2+\|b\|_2^2=1$. 
Let $f(X)=(X-v)^TA(X-v)+b^TX$. From Fact~\ref{fact:var-ub}, we can write
\begin{align*}
\E\left[\|Df(X)\|_2^2\right]&=\frac{1}{4}\sum_{i=1}^{d}{\E\left[\left(f(X_{i+})-f(X_{i-})\right)^2\right]}=\sum_{i=1}^{d}{\E\left[\Bigg(b_i+2\sum_{j\ne i}{A_{ij}(X_j-v_j)}\Bigg)^2\right]}\\&=\sum_{i=1}^{d}{\E\Bigg[\Bigg(b_i+2\sum_{j\ne i}A_{ij}(X_j-\E[X_j])+2\sum_{j\ne i}A_{ij}(\E[X_j]-v_j)\Bigg)^2\Bigg]}
\\&\le3\sum_{i=1}^{d}b_i^2+12\sum_{i=1}^{d}\var\Bigg[\sum_{j\ne i}{A_{ij}X_j}\Bigg]+12\sum_{i=1}^{d}\Bigg(\sum_{j\ne i}A_{ij}(\E[X_j]-v_j)\Bigg)^2\\&\le3\sum_{i=1}^{d}b_i^2+12\sum_{i=1}^{d}\var\Bigg[\sum_{j\ne i}{A_{ij}X_j}\Bigg]+12\|A\|_F^2\|\E[X]-v\|_2^2\\&\le3\|b\|_2^2+(c'+12\delta^2)\|A\|_F^2 \;,
\end{align*}
where in the first inequality we used the elementary identity $3(a^2+b^2+c^2) \ge(a+b+c)^2, \forall a,b,c \in\R$, 
and $c'>0$ is an absolute constant. In addition, we have that
\begin{align*}
Hf(x)_{ij}=\frac{Df(x_{i+})_j-Df(x_{i-})_j}{2}=\frac{f(x_{i+,j+})-f(x_{i+,j-})-f(x_{i-,j+})+f(x_{i-,j-})}{4}=A_{ij} \;,
\end{align*}
which implies that $\|Hf\|_F^2=\sum_{i,j\in[d]}A_{ij}^2=\|A\|_F^2$.

Hence, after a renormalization by $1/\sqrt{\max(3,c'+12\delta^2)(\|A\|_F^2+\|b\|_2^2)}$, 
the assumptions in Fact~\ref{fact:f-conc-Ising} are satisfied, and we have that
\begin{align*}
\Pr[|f(X)-\E[f(X)]|>t]\le2\exp\left(-\frac{ct}{(\|A\|_F^2+\|b\|_2^2)^{1/2}}\right) \;,
\end{align*}
where $c>0$ is an absolute constant.
\end{proof}

\end{document}